

\ifdefined\useorstyle

\documentclass[opre,nonblindrev]{informs3}

\usepackage{packages-or}
\usepackage{editing-macros}
\usepackage{statistics-macros-or}
\usepackage{formatting}
\usepackage{wrapfig}
\usepackage[textsize=tiny]{todonotes}
\newcommand{\hntodo}[1]{\todo{Hong: #1}}

\DoubleSpacedXI 


\usepackage{endnotes}
\let\footnote=\endnote

%


\usepackage{natbib}
 \bibpunct[, ]{(}{)}{,}{a}{}{,}%
 \def\bibsep{\smallskipamount}%
 %
 %
 %

\TheoremsNumberedThrough     
\ECRepeatTheorems

\EquationsNumberedThrough    


\usepackage{hyperref}
\usepackage{./statistics-macros-or}

\usepackage{pgfplotstable}
\usepackage{graphicx}
\usepackage{subcaption}
\usepackage{float}
 
\usepackage{algorithm}
\usepackage{algpseudocode}
\usepackage{tabularx}

\usepackage{overpic}
\usepackage{tikz}
\usepackage{rotating}
\usepackage{psfrag}
\usepackage{enumitem}
  
\begin{document}



\RUNAUTHOR{Namkoong, Ma, and Glynn}
\RUNTITLE{Minimax Optimal Estimation of Stability Under Distribution Shift}

\TITLE{Minimax Optimal Estimation of Stability Under Distribution Shift}

\ARTICLEAUTHORS{%

\AUTHOR{Hongseok Namkoong}
\AFF{Decision, Risk, and Operations Division, Columbia Business School, New York, NY 10027, \EMAIL{namkoong@gsb.columbia.edu}} 

\AUTHOR{Yuanzhe Ma}
\AFF{Department of Industrial Engineering and Operations Research, Columbia University, New York, NY 10027, \EMAIL{ym2865@columbia.edu}} 

\AUTHOR{Peter Glynn}
\AFF{Department of Management Science and Engineering, Stanford University, Stanford, CA 94305, \EMAIL{glynn@stanford.edu}} 

} 

\ABSTRACT{The performance of decision policies and prediction models often deteriorates
when applied to environments different from the ones seen during training. To
ensure reliable operation, we analyze the \emph{stability} of a
system under distribution shift, which is defined as the smallest change in
the underlying environment that causes the system's performance to deteriorate
beyond a permissible threshold. In contrast to standard tail risk measures and
distributionally robust losses that require the specification of a plausible
magnitude of distribution shift, the stability measure is defined in terms of
a more intuitive quantity: the level of acceptable performance degradation.
We develop a minimax optimal estimator of stability and analyze its
convergence rate, which exhibits a fundamental phase shift behavior. Our
characterization of the minimax convergence rate shows that evaluating
stability against large performance degradation incurs a statistical cost.
Empirically, we demonstrate the practical utility of our stability framework
by using it to compare system designs on problems where robustness to
distribution shift is critical.

}


\KEYWORDS{stability, distribution shift}

\maketitle

%


\else

\documentclass[11pt]{article}
\usepackage[numbers]{natbib}
\usepackage{packages}
\usepackage{editing-macros}
\usepackage{formatting}
\usepackage{./statistics-macros}
\usepackage{wrapfig}
\usepackage[textsize=tiny]{todonotes}
\newcommand{\hntodo}[1]{\todo{Hong: #1}}


\begin{document}

\abovedisplayskip=8pt plus0pt minus3pt
\belowdisplayskip=8pt plus0pt minus3pt


\begin{center}
  {\LARGE Minimax Optimal Estimation of Stability Under Distribution Shift} \\
  \vspace{.5cm}
  {\Large Hongseok Namkoong$^{a}$  ~~~ Yuanzhe Ma$^{b}$\footnote{HN and YM equally contributed to this work.} ~~~
    Peter W. Glynn$^{c}$} \\
  \vspace{.2cm}
  Decision, Risk, and Operations Division$^a$,
  Department of Industrial Engineering and Operations Research$^b$,
  and Management Science and Engineering$^c$ \\
  \vspace{.2cm}
    {\large Columbia University$^{a, b}$ \qquad \qquad Stanford University$^{c}$} \\
    \vspace{.2cm}
  \texttt{namkoong@gsb.columbia.edu, ym2865@columbia.edu, glynn@stanford.edu}
\end{center}


\begin{abstract}%
  
\end{abstract}

\fi

\section{Introduction}
\label{section:introduction}

Distribution shift is a universal challenge in data-driven decision-making.
Distribution shift can occur because data is often collected from a limited
number of sources~\cite{Manski13}, marginalized groups are
underrepresented~\cite{ChenLaDaPaKe14}, and decision policies operate in
non-stationary environments~\cite{Koopman72}. On distributions different from
that of the training data, the performance of decision policies and prediction
models have been observed to deteriorate across a wide range of applications.
Economic and healthcare policies must remain valid over space and time, but
their external validity is often called into question~\cite{Accord10,
  Sprint15, BasuSuHa17, RosenzweigUd20, DehejiaPoSa21}.  Despite tremendous
recent progress in machine learning systems, prediction models perform poorly
under distribution shifts in healthcare~\cite{LeekEtAl10, BandiEtAl18,
  ZechBaLiCoTiOe18, ChenPiRoJoFeGh20, WongEtAl21}, loan
approval~\cite{Hand06}, wildlife conservation~\cite{BeeryCoGj20}, and
education~\cite{AmorimCaVe18}. For example, EPIC's proprietary sepsis risk
assessment model---currently deployed at hundreds of hospitals across the
nation---has recently been found to perform ``substantially worse'' than the
vendor's claims~\cite{WongEtAl21}.

\begin{wrapfigure}{r}{.45\columnwidth}
  \vspace{-0.4cm} 
  \centering
\includegraphics[scale=.17]{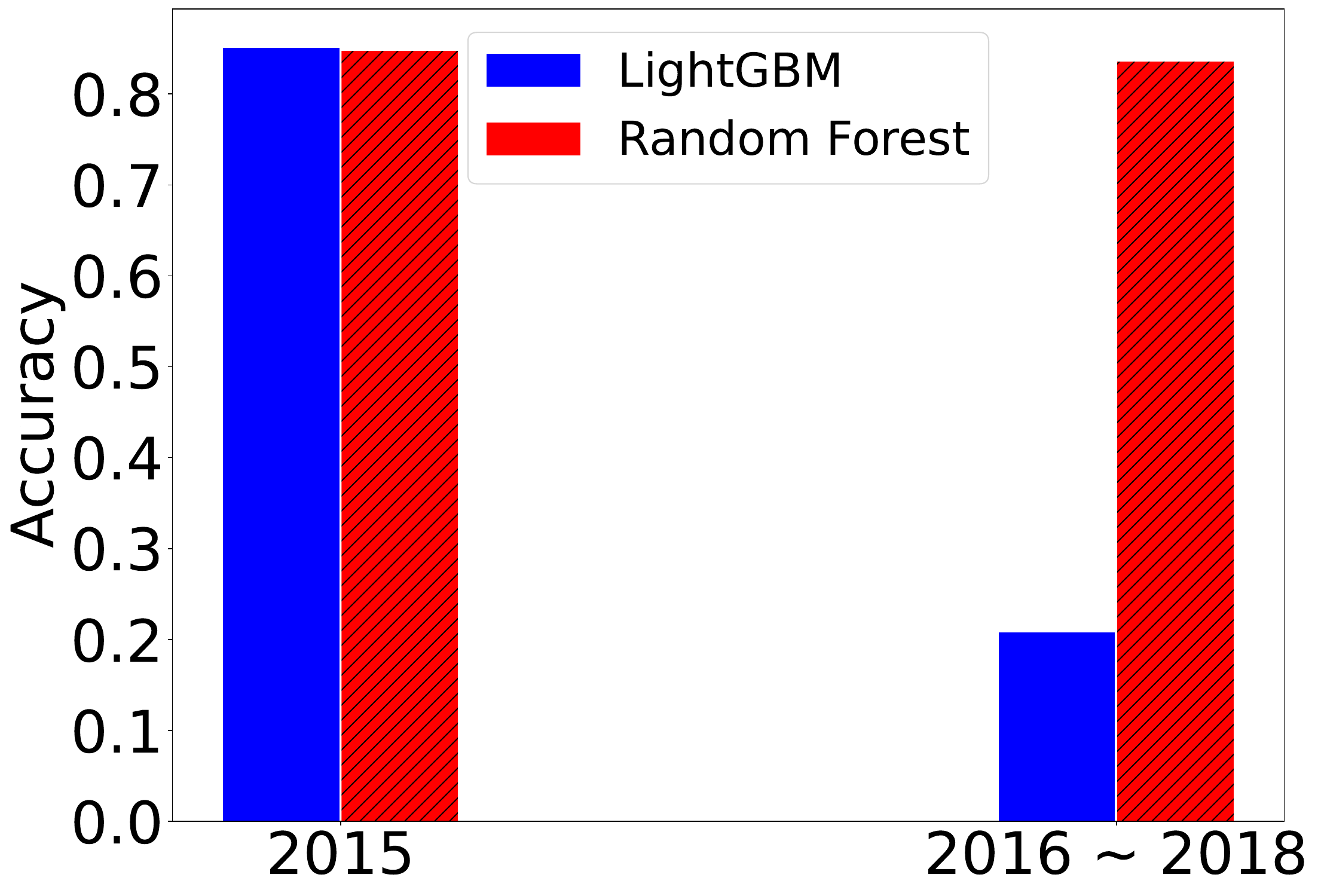}
  \vspace{-0.2cm}
  \caption{Models with initially near-identical
    average-case performance exhibit different accuracy over time.}
\label{fig:NHIS-bar-plot}
  \vspace{-2cm}
\end{wrapfigure}
Standard methods that evaluate average-case performance cannot ensure stable
performance over unanticipated distribution shifts. As an example, consider a
binary classification problem where the goal is to predict whether an
individual is utilizing healthcare resources based on a set of rich
individual-level covariates. We discuss the problem in detail in
Section~\ref{section:nhis} and discuss how such classifiers can inform
policy-making for low-income individuals.  By simulating an analyst \emph{in
  2015}, we use popular machine learning packages---gradient-boosted trees
from $\mathsf{LightGBM}$ and random forests from $\mathsf{scikitlearn}$---to
train two classification models. While the two models achieve near-identical
average-case performance on out-of-sample data from 2015
(Figure~\ref{fig:NHIS-bar-plot}), their performance diverges dramatically in
subsequent years.~\citet{DAmourEtAl20} recently observed a similar phenomenon
across a wide range of prediction scenarios employed at Google.

To ensure the effectiveness of decision policies and prediction models, they
must be rigorously tested before deployment. Toward this goal, we analyze the
\emph{stability} of a system under distribution shifts. For a given random
cost function denoted by $\rv$, we consider an analyst who has access to
i.i.d. scenarios with associated costs $\rv_1, \ldots, \rv_n \simiid P$.  For
a chosen level of permissible performance degradation $y$, we define the
\emph{stability} of the system as the smallest distribution shift in the
underlying environment that deteriorates performance above the threshold $y$.
In stochastic control or reinforcement learning, the cost function $\rv$ is
often defined as the cumulative cost over a sample path; see
Section~\ref{section:queueing} for a queueing example.  In supervised learning
contexts, the cost function $\rv$ may be defined as the expected prediction
error given a set of variables whose distributions may change over space and
time; see Section~\ref{section:nhis} for an illustration on the healthcare
example depicted in Figure~\ref{fig:NHIS-bar-plot}.

We use the Kullback-Leibler (KL) divergence to quantify the notion of
closeness between probability distributions. For a $\sigma$-finite measure
$\mu$ such that $Q, P \ll \mu$, the KL divergence is
\begin{equation*}
  \dkl{Q}{P} \defeq \E_Q\left[ \log \frac{dQ/d\mu}{dP/d\mu}\right].
\end{equation*}
If $Q$ is not absolutely continuous with respect to $P$, we have
$\dkl{Q}{P} = \infty$.  Formally, the stability $I_y(P)$ at threshold
$y \ge \E_{P}[\rv]$ is defined as
\begin{equation}
  \label{eqn:primal}
  I_y(P) \defeq  \inf_{Q} \left\{\dkl{Q}{P}: \E_Q [\rv] \geq y\right\}.
\end{equation}
For simplicity, we omit the dependence on the threshold $y$ and often write
$I(P)$ or $I$.  As we detail in Section~\ref{section:approach}, a
\highlight{well-known dual reformulation of the minimization
  problem~\eqref{eqn:primal}~\cite[Theorem 5.2]{DonskerVa76}} shows the
optimal solution $Q\opt$---the smallest adversarial distribution shift---is
given by an exponential tilt
$\frac{dQ\opt}{dP} \propto \exp(\lambda\opt \rv)$, where $\lambda\opt$ is the
optimal dual variable that grows with $y$. Thus, the stability $I_y(P)$
measures the tail performance of the data-generating system $P$ around the
threshold $y$, and \highlight{was proposed by~\citet{LemaitreSeArBoGaIo15, BachocGaHaLoRi23}.}

Comparisons of tail risk across multiple system designs are meaningful insofar
as the risk criterion is interpretable. There is a significant body of work on
tail risk measures, e.g., coherent risk measures~\cite{ArtznerDeEbHe99,
  RockafellarUr00, Delbaen02, Rockafellar07} and other axiomatic definitions
of tail risk~\cite{RuszczynskiSh06, RockafellarUrZa06}, and distributionally
robust losses~\cite{BlanchetMu19, KuhnEsNgSh19, RahimianMe19, LiNaXi21,
  VanParysEsKu21, GaoKl22} that consider worst-case
performance over a pre-defined set of distributions. The two notions are closely related through
duality~\cite{Rockafellar07, ShapiroDeRu09}.  Since the evaluation of
tail risk requires the modeler to postulate a magnitude of anticipated
distribution shift, a plausible choice remains one of the biggest challenges
in operationalizing notions of tail risk.

The stability measure~\eqref{eqn:primal} provides a complementary approach by
comparing the stability of the system for a chosen threshold $y$ in the
\emph{cost scale}. This approach is more intuitive and interpretable than
setting a specific magnitude of distribution shift, such as
$\dkl{Q}{P} \le \rho$. It is often easier for analysts and engineers to decide
on a tolerable amount of monetary loss or prediction error based on past data
and domain knowledge when using the cost scale.  \highlight{While it is evident that no
class of criteria is uniformly dominant, we highlight interpretability as a
key advantage of the stability measure.}
A similar high-level approach is common in causal inference research where
through sensitivity analysis, researchers analyze the amount of unobserved
confounding required to challenge the conclusion of an observational
study~\cite{Rosenbaum10, Rosenbaum11, YadlowskyNaBaDuTi22}. For example, the
finding that smoking causes lung cancer was considered credible because it
would require an unrealistic amount of unobserved confounding (e.g.,
unrecorded hormone) to call the findings of the study into
question~\cite{CornfieldHaHaLiShWy59}.  Conceptually, our approach draws a
parallel by analyzing how much distribution shift is required to deteriorate
system performance to an intolerable degree.

\paragraph{Main contributions} To build intuition, we begin with a dual
reformulation of the infimum problem~\eqref{eqn:primal} in
Section~\ref{section:approach}. We make concrete how the stability
measure~\eqref{eqn:primal} is related to the tail probability of the cost
$\rv$.  Because committing to a plausible magnitude of distribution shift can
be challenging, we study statistical estimation of stability $I_y(P)$ using
i.i.d. observations $\rv_i \simiid P$ from the data-generating distribution
$P$. Our main theoretical results characterize the finite sample minimax rate
of convergence, which quantifies the fundamental rate at which $I_y(P)$ can be
estimated based on $n$ observations. Our finite sample results complement
prior asymptotic convergence results for $\what{I_n}$ that assume a fixed
data-generating distribution $P$~\cite{FeuervergerMu77, CsorgoTe90, DuffyMe05,
  DuffyWi15, RohwerAnTo15}.

\highlight{
Finite sample minimax rates are often considered overly conservative as it
characterize the worst-case performance over a class of data-generating
distributions
\begin{equation*}
  \mathfrak{M}_n \defeq \inf_{\what{I}_n} \sup_{P \in \mc{P}}
  \E_P  \left| \what{I}_n - I_y(P) \right|,
\end{equation*}
where the outer infimum is over all estimators---measurable functions of the
data $\rv_1, \ldots, \rv_n$---and the inner supremum is over data-generating
distributions. (This set is not to be confused with possible distribution
shifts we consider in our estimand~\eqref{eqn:primal}.)  To overcome this
limitation and provide sharp insights, our analysis considers an highly
restricted set of data-generating distributions. This allows us to
characterize how the tail behavior of $\rv$ impacts estimability of stability.

We crystallize our mathematical insight by studying random variables $\rv$
with varying tail behavior. To maximize clarity in our theory, we characterize
the minimax rate $\mathfrak{M}_n$ for random variables with Gamma-like tails
with $\P(\rv \ge t) \approx e^{-\absc t}$ up to polynomial terms, where
$\absc = \inf \{\lambda: \E[e^{\lambda \rv}] = \infty\}$ is the abscissa of
convergence.}  Our main results establish
\begin{equation}
  \label{eqn:minimax}
 \mathfrak{M}_n = \inf_{\what{I}_n} \sup_{P \in \mc{P}}
  \E_P  \left| \what{I}_n - I_y(P) \right|
  \asymp  n^{- \paran{\half \wedge \frac{\mcpgamma}{\absc y}}},
\end{equation}
where $\asymp$ notation hides constants that depend on $\absc$ and $y$, as
well as polylogarithmic factors in $n$. In the formal results we present in
Sections~\ref{section:convergence} and~\ref{section:hardness}, we provide
stronger results on the probability of error rather than expected estimation
error.  Our focus on Gamma-like tails allows a sharp characterization of how
statistical difficulty depends on different tail behaviors, providing almost
an instance-specific understanding.

\begin{wrapfigure}{r}{.44\columnwidth}
  \vspace{-0.7cm} 
  \centering
\includegraphics[scale=.23]{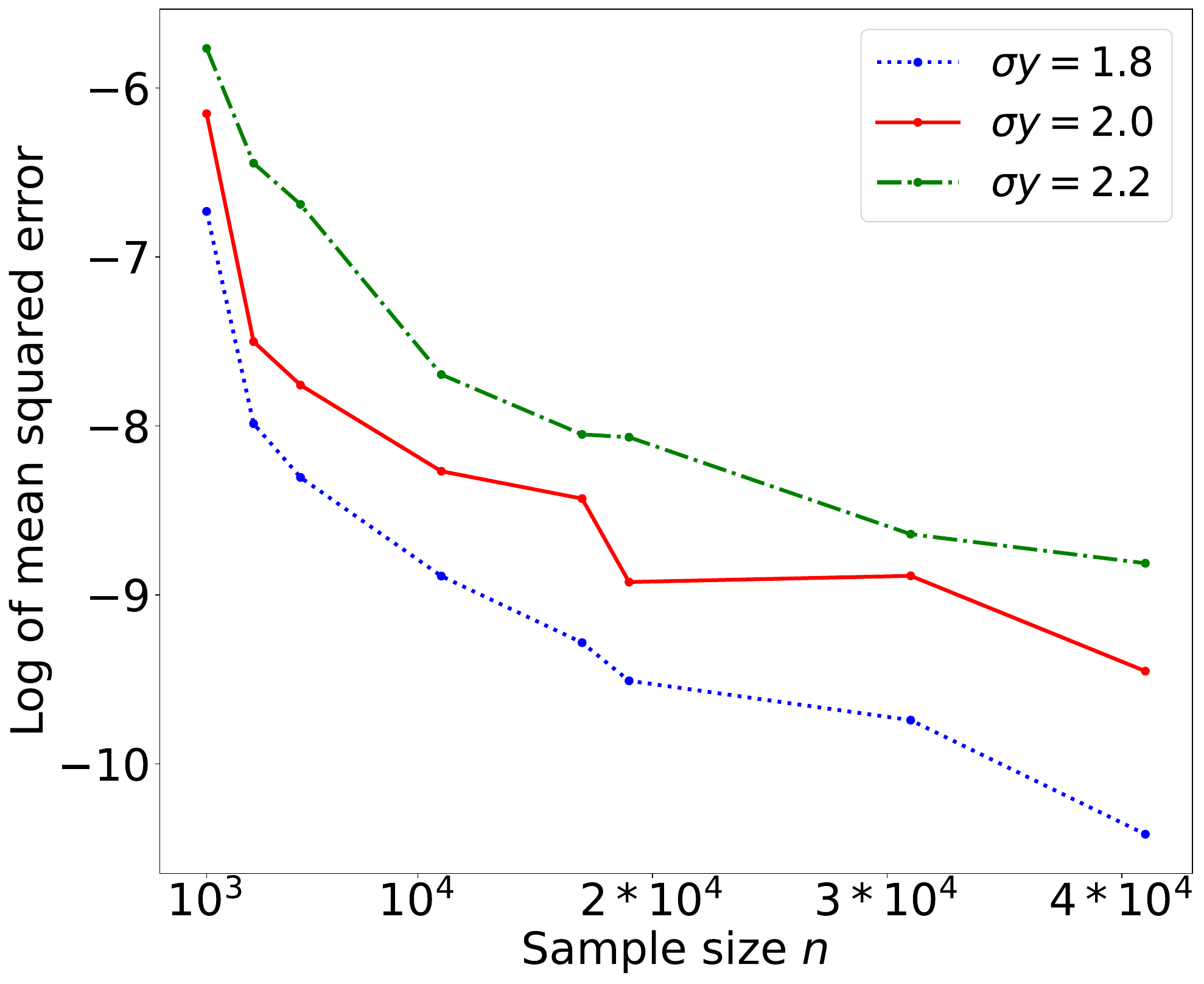}
  \vspace{-0.5cm}
  \caption{Mean squared error of $\what{I}_n$ (over 40 runs) with
  $P = \mathsf{Exp}(\absc), \absc \in \{0.9,1,1.1\}$ and a fixed threshold
  $y=2$.}
\label{fig:simulation-convergence-plot}
\end{wrapfigure}
To upper bound the minimax rate $\mathfrak{M}_n$, we focus on a simple plug-in
estimator $\what{I}_n$ of the dual representation of $I_y(P)$ (see
definition~\eqref{eqn:dual-plug-in} to come). In
Section~\ref{section:convergence}, we prove that $\what{I}_n$ converges at a
rate that depends on both the optimal dual variable and the abscissa of
convergence $\absc = \inf \{\lambda: \E[e^{\lambda \rv}] = \infty\}$.  Our
results show a phase transition in the convergence rate~\eqref{eqn:minimax},
which simultaneously depends on the threshold $y$ and the abscissa of
convergence
\begin{equation}
  \label{eqn:abscissa}
  \absc = \inf \{\lambda: \E[e^{\lambda \rv}] = \infty\}.
\end{equation}
For large values of $y$ relative to $\absc$, estimation becomes difficult as
the stability measure increasingly focuses on the tails of $\rv \sim P$. We
illustrate the degradation in convergence in
Figure~\ref{fig:simulation-convergence-plot}. 

\highlight{Our fundamental hardness results in Section~\ref{section:hardness} show that
the pessimistic convergence rate is unavoidable for large values of $y$. This
is surprising because we focus on a small set $\mc{P}$ of data-generating
distributions in the minimax risk~\eqref{eqn:minimax} that exhibit Gamma-like
tail behavior with a \emph{fixed abscissa of convergence $\absc$} (see
Definition~\ref{def:pclass}).
When we consider distributions with different
abscissas of convergence, it can be shown that the convergence rate further
degrades to $\Omega_p(1/\log n)$. To our knowledge, our bounds are the first
theoretical characterization of the fundamental difficulty in estimating
$I_y(P)$, showing that \emph{no statistical estimator} can avoid the
degradation depicted in Figure~\ref{fig:simulation-convergence-plot}.
The rate~\eqref{eqn:minimax} also makes clear how our focus on
  Gamma-like tails sharply characterizes the relationship between statistical
  efficiency and the tail behavior of $\rv$.  In particular, for light-tailed
  random variables ($\absc = \infty$), we do not observe a shape transition and
  recover simply the usual parametric rate. (See
  Corollaries~\ref{cor:gaussian-ub} and~\ref{cor:gaussian-lb} for formal
  statements.) }

In Section~\ref{section:experiment}, we empirically demonstrate how stability
against a cost threshold $y$ can provide practical assessments of system
performance under potential distribution shift. On two problems where it is
critical to maintain performance under distribution shift, we illustrate how
the stability measure~\eqref{eqn:primal} differentiates between reliable
vs. fragile designs.  We then evaluate the performance of different system
designs under both real and simulated distribution shifts, and show that the
designs identified as the most stable according to our
measure~\eqref{eqn:primal} are able to maintain good performance across a
range of scenarios.

\paragraph{Related work} 
Stability and robustness are central topics of interest for multiple
scientific communities. We give a necessarily abridged review in the context
of our work. In stochastic simulation, sensitivity analysis methods evaluate
how performance measures (``outputs'') are affected by changes in the input
variables~\cite{SongNePe14,Lam16b}.  \emph{Local} sensitivity analysis aims to
quantify changes in the output induced by small perturbations in the input
parameters~\cite{JiangNeHo19}.  See~\citet[Ch. 7]{Glasserman04}
and~\citet[Ch. 7]{AsmussenGl07} for a primer on derivative estimation in
simulation settings.  In finance, estimation of such local sensitivity is used
to hedge against changes in the environment (e.g., price
uncertainty). \emph{Global} sensitivity analysis studies how input variability
propagates to output variability. Many authors have used conditional variances
to attribute the part of output variance due to variability in different
subsets of the
input~\cite{HommaSa96,SaltelliRaAnCaCaGaSaTa08,Owen14,SongNeSt16,
  LemaitreSeArBoGaIo15}.  \highlight{Relatedly  in robust
  satisficing~\cite{LongSiZh23},
  one allows the objective
  $\E_{Z \sim P}f(x,Z)$ to deviate from a given target when the underlying
  environment $P$ changes, which is different to our setting.  The fragility
  is defined as the worst expected degradation of the objective normalized by
  the change in $P$.  In contrast, our notion of stability directly measures
  the least amount of perturbation required to make the loss intolerable.}

Another related stream of work analyzes the local sensitivity of an output
against \emph{distributional uncertainty and model
  misspecification}~\cite{GlassermanXu13,Lam16a,Lam17, LamQi17,
  GlassermanYa18,GoevaLaQiZh19}. These works typically analyze the worst-case
performance under infinitesimally small distribution shifts. By analyzing
distributional variations induced by statistical uncertainty, a number of
authors have recently proposed confidence intervals based on worst-case local
sensitivity~\cite{LamZh17, GhoshLa19, BlanchetKaMu19, DuchiGlNa21}.  In the
context of the stability measure~\eqref{eqn:primal}, analyzing the impact of
small distribution shifts is related to the limit $y \downarrow \E_P[\rv]$.
In contrast, coherent risk measures and distributionally robust objectives
measure the worst-case performance under a fixed ambiguity set of distribution
shifts~\cite{Rockafellar07, ShapiroDeRu09, JeongNa22, LiNaXi21}.  As we are
similarly concerned with stability against changes in the underlying
environment, our main theoretical analysis studies a phase shift behavior for
large values of $y$.

Reproducibility and stability is a critical concern in
statistics~\cite{LiBrHuBi11,Yu13,Stodden15,MurdochSiKuAbYu19} and more
generally, scientific research~\cite{Ioannidis05}. Classical robust statistics
approaches~\cite{Huber81,MaronnaMaYo06,HuberRo09} develop techniques (e.g.,
leverage score, breakdown point, influence function) to understand stability
against sample outliers \highlight{or sample removal~\citep{BroderickGiMe23, FreundHo23, MoitraRo23}}---as opposed to distribution shifts as in our work. In a
recent work,~\citet{GuptaRo21} introduced the $s$-value, a notion of stability
similar to ours in the context of statistical inference. They propose a new
paradigm of statistical inference and hypothesis testing centered around
stability under distribution shifts. 
On the other hand, this work focuses on operational
scenarios involving automated decision policies and prediction models and
theoretically characterizes the finite sample minimax
rate~\eqref{eqn:minimax}.

Although we take the data-generating distribution $P$ as given, our approach
is most effective when $P$ contains diverse observations.  Our framework is
thus complementary to design-based perspectives in statistics and machine
learning that aim to maximize heterogeneity in data. The burgeoning direction
includes efforts to benchmark machine learning models across distribution
shifts~\cite{SaenkoKuFrDa10, TaoriDaShCaReSc19, KohSaEtAl20} and multi-site
data collection in experimental design~\cite{CrucesGa07, BanerjeeKaZi15,
  GertlerShAlCaMaPa15, DehejiaPoSa21, TiptonPe17, TiptonRo18}.


\section{Stability}
\label{section:approach}

The stability measure~\eqref{eqn:primal} assesses the smallest distribution
shift that causes a meaningful degradation in system performance, measured by
the threshold value $y$. In this work, we use the Kullback-Leibler divergence
to measure the magnitude of distribution shift, a notion of distance that
enjoys invariance to affine transformations. The KL divergence is widely used
across multiple fields, including robust control~\cite{HansenSa01}, maximum
likelihood in exponential families~\cite{Brown86}, Bayesian statistical
analysis~\cite{Berk66}, and hypothesis testing~\cite{Csiszar84}.  However, its
main limitation is that $\dkl{Q}{P} = \infty$ when $Q$ is not absolutely
continuous with respect to $P$; the stability measure~\eqref{eqn:primal} is
thus only defined over distribution shifts $Q$ whose support is contained in
that of the data-generating distribution $P$.  In practice, this restriction
does not greatly limit the applicability of the stability measure, as we
primarily focus on analyzing the stability of real-valued performance
measures.  In many cases, it is natural to model the data-generating
distribution $P$ as having a long, continuous support in $\R$ as we illustrate
in Section~\ref{section:experiment}.

\paragraph{Dual reformulation} We begin our discussion by understanding the
stability measure~\eqref{eqn:primal} through its dual reformulation. Rewrite
the primal problem in terms of likelihood ratios $L \defeq \frac{dQ}{dP}$
\begin{align}
  \label{eqn:likelihood}
  I_y(P) \defeq \inf_{Q} \left\{ \dkl{Q}{P}:  \E_Q [\rv] \geq y\right\}
  = \inf_{L \ge 0}  \left\{ \E_P[L \log L]: \E_P [L \rv] \geq y, \E_P[L] = 1 \right\},
\end{align}
where the final infimum is taken over measurable functions.  Taking the dual
over this final problem, we arrive at the following classical reformulation.
\begin{lemma}[{\citet[Theorem 5.2]{DonskerVa76}}]
  \label{lemma:duality-result}
  Let $\rv \sim P$ be a real-valued random variable.  For every
  $y$ with $\E_P[\rv] \le y < \mathrm{ess~sup}~\rv$,
  \begin{equation}
    \label{eqn:dual}
    I_y(P) 
    =  \sup_{\lambda \in \R} \set{\lambda  y - \log \E_P[e^{\lambda \rv}]}. 
  \end{equation}
  Furthermore, if the supremum on the right-hand side is attained at
  $\lambda\opt$, then  $\lambda\opt \ge 0$ and the distribution $Q\opt$ that achieves the infimum on the left-hand side satisfies
  $dQ\opt(x) = \frac{e^{\lambda\opt x} dP(x)}{\E_P[e^{\lambda\opt \rv}]},
  \forall x \in \R$.
\end{lemma}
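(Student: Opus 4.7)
The plan is to prove the equality \eqref{eqn:dual} as a convex duality result, treating the primal \eqref{eqn:likelihood} as an infinite-dimensional convex program with two Lagrange multipliers---$\lambda \geq 0$ for the performance constraint $\E_P[L\rv] \geq y$ and $\mu \in \R$ for the normalization $\E_P[L] = 1$. Pointwise minimization of the Lagrangian in $L \geq 0$ yields $L = \exp(\lambda \rv + \mu - 1)$; eliminating $\mu$ through the normalization constraint collapses the dual to $\sup_{\lambda} \{\lambda y - \log Z_\lambda\}$, where $Z_\lambda \defeq \E_P[e^{\lambda \rv}]$.

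For the easy direction (weak duality) I would work directly through the Gibbs identity
\begin{equation*}
  \dkl{Q}{P} = \dkl{Q}{Q_\lambda} + \lambda \E_Q[\rv] - \log Z_\lambda,
  \qquad \frac{dQ_\lambda}{dP} = \frac{e^{\lambda \rv}}{Z_\lambda},
\end{equation*}
valid whenever $Z_\lambda < \infty$. Combined with nonnegativity of KL and feasibility $\E_Q[\rv] \geq y$, this delivers $\dkl{Q}{P} \geq \lambda y - \log Z_\lambda$ for every $\lambda \geq 0$, so taking $\inf_Q$ and then $\sup_{\lambda \geq 0}$ yields $I_y(P) \geq \sup_{\lambda \geq 0}\{\lambda y - \log Z_\lambda\}$. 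A short Jensen argument ($\log Z_\lambda \geq \lambda \E_P[\rv]$, together with $y \geq \E_P[\rv]$) shows that restricting to $\lambda \geq 0$ loses nothing, so one may extend the supremum to all of $\R$ and conclude $\lambda\opt \geq 0$ whenever a maximizer exists.

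For the matching upper bound, the assumption $y < \mathrm{ess~sup}~\rv$ supplies Slater's condition: there is a bounded density $L$ concentrating on $\{\rv \geq y'\}$ for some $y' > y$, so $\E_P[L\rv] > y$ strictly. This closes the duality gap in the convex program. When the dual supremum is attained at some $\lambda\opt \geq 0$, the first-order condition $\frac{d}{d\lambda}\left(\lambda y - \log Z_\lambda\right)\big|_{\lambda\opt} = y - \E_{Q_{\lambda\opt}}[\rv] = 0$ shows that $Q_{\lambda\opt}$ is primal feasible; substituting back gives $\dkl{Q_{\lambda\opt}}{P} = \lambda\opt y - \log Z_{\lambda\opt}$, identifying $Q\opt = Q_{\lambda\opt}$ in the claimed tilted form.

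The main obstacle will be rigorously handling the infinite-dimensional convex duality---specifically, justifying the interchange of differentiation and integration for $\lambda \mapsto \log Z_\lambda$ on the interior of its effective domain, and treating the edge case where a maximizing sequence has $\lambda_k \uparrow \absc$ with $Z_{\lambda_k} \to \infty$. In that boundary case a truncation/limiting argument using the tilts $Q_{\lambda_k}$ shows $I_y(P)$ still equals the dual value, but the cleanest route for the statement as written is to observe that the lemma only asserts the explicit tilt representation of $Q\opt$ under the hypothesis that the supremum is attained, which automatically places $\lambda\opt$ strictly inside $\{\lambda : Z_\lambda < \infty\}$ and justifies the differentiation step.
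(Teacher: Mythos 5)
Your proposal is correct and follows the same high-level convex-duality route as the paper: form the Lagrangian with multipliers for the tilt constraint and the normalization constraint, minimize pointwise in $L$ to get the exponential form $L = \exp(\lambda \rv + \mu - 1)$, invoke Slater to close the duality gap, and read off the tilt representation of $Q\opt$ from the attained maximizer. The paper proves both directions at once by citing abstract strong-duality theorems (\citet[Theorem 8.6.1]{Luenberger69}, \citet[Theorem 2.165]{BonnansSh00}) and chooses an exponential tilt $L \propto e^{\mu \rv}$ as its Slater point, whereas you make the weak-duality direction explicit and elementary via the variational (Gibbs) identity $\dkl{Q}{P} = \dkl{Q}{Q_\lambda} + \lambda \E_Q[\rv] - \log Z_\lambda$, which delivers $\dkl{Q}{P} \ge \lambda y - \log Z_\lambda$ by nonnegativity of KL in one line, and you use the bounded indicator density $L \propto \indic{\rv \ge y'}$ as the Slater point. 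The Gibbs-identity route is more self-contained and pedagogically transparent for the lower bound and is closer in spirit to the Donsker--Varadhan variational formula itself; your indicator Slater point is arguably more robust than the paper's exponential tilt, since it requires only $y < \mathrm{ess\,sup}\,\rv$ rather than finiteness of the moment generating function for some positive argument. Both proofs correctly recover $\lambda\opt \ge 0$ from Jensen and identify $Q\opt$ from the first-order condition $\E_{Q_{\lambda\opt}}[\rv] = y$, and your observation that attainment of the supremum automatically places $\lambda\opt$ in the interior of the effective domain of $\log Z_\lambda$, justifying the differentiation, is a correct and welcome precision.
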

\noindent Although this result is well-known, we give a proof based on modern
duality theory in Section~\ref{section:proof-of-duality} for completeness.

Both the primal and dual formulations provide insights into how $I_y(P)$ grows
with $y$. In the primal~\eqref{eqn:primal}, the constraint set grows smaller
as the threshold $y$ increases, whereas $y$ appears directly in the dual
objective~\eqref{eqn:dual} via the Lagrangian. From first-order optimality
conditions, the dual optimum $\lambda\opt$ satisfies
\begin{equation*}
  \frac{\E_P[\rv e^{\lambda\opt \rv}]}{\E_P[e^{\lambda\opt \rv}]} = y,
\end{equation*}
where the left-hand side is the derivative of the cumulant generating function
$\lambda \mapsto \log \E_P[e^{\lambda \rv}]$ evaluated at $\lambda\opt$.
Lemma~\ref{lemma:duality-result} states that the corresponding primal optimum is
given by the exponential tilt $\frac{dQ\opt}{dP} \propto e^{\lambda\opt \rv}$,
which upweights tail instances of $\rv$. From the convexity of the dual
objective, it is clear that $\lambda\opt$ grows with $y$; equivalently, the
primal optimum approaches the essential supremum of $\rv$ as $y$ grows.

Concretely, the dual reformulation~\eqref{eqn:dual} allows us to build
intuition by yielding analytic expressions for the stability measure $I$ for
specific distributions.



\highlight{
\begin{example}[Gaussian distribution]
  If $\rv \sim P = \mathsf{N}(\mu,s^2)$ and $y \ge \mu$, then
  $\lambda\opt(P) =  \frac{y-\mu}{s^2} $,
  $I_y(P)=  \frac{(y-\mu)^2}{2s^2} $, and
  $Q\opt(P) = \mathsf{N}(y, s^2)$.
  \label{example:Gaussian}
\end{example}

\begin{example}[Gamma distribution]
  If $\rv \sim P = \mathsf{Gamma}(\mcpgamma,\absc)$ and $y \ge \frac{\mcpgamma}{\absc}$, then
  $\lambda\opt(P) = \absc - \frac{\mcpgamma}{y}$,
  $I_y(P)= s y - 1 -\mcpgamma \log(\frac{\mcpgamma}{s y})$, and
  $Q\opt(P) = \mathsf{Gamma}\left(\mcpgamma,\frac{\mcpgamma}{y} \right)$.
  \label{example:Gamma}
\end{example}
}
\vspace{-10pt}

\paragraph{Statistical estimation} We use the dual
formulation~\eqref{eqn:dual} to estimate $I_y(P)$ without parametric
assumptions. Parametric assumptions on $P$ are unrealistic and restrictive in
general, but we view them to be particularly problematic since they require
considering a limited set of distribution shifts $Q$. Denoting by $\emp$ the
empirical distribution over the data $\rv_1, \ldots, \rv_n$, consider the
empirical plug-in estimator 
\begin{equation}
  \label{eqn:dual-plug-in}
  \what{I}_n \defeq \sup_{\lambda \in \R}
  \left\{ \lambda y - \log \E_{\emp}[e^{\lambda \rv}] \right\}.
\end{equation}
Compared to the primal~\eqref{eqn:primal} counterpart that involves
high-dimensional optimization over probability measures, the dual plug-in is
efficient to compute as it only requires a binary search in a single
dimension.  Our main theoretical results to come demonstrate the optimality of
the dual plug-in estimator~\eqref{eqn:dual-plug-in}, and exactly characterize
how estimation becomes more challenging as the threshold $y$ grows.

\highlight{
To illustrate that the simple plug-in estimator $\what{I}_n$ is nearly optimal, we compare it with another natural estimator 
$\Tilde{I}_n$~\eqref{eqn:dual-kde} that is based on
  a kernel density
  estimator $\Tilde{P}_n$ for the density of $\rv$ with the Gaussian
  kernel~\cite{Tsybakov09}:
\begin{equation} 
 \label{eqn:dual-kde}
  \Tilde{I}_n \defeq \sup_{\lambda \in \R}
  \left\{ \lambda y - \log \E_{\Tilde{P}_n}[e^{\lambda \rv}] \right\}.
\end{equation} 
In Figure~\ref{fig:simulation-convergence-plot-with-kde}, we note that 
$\Tilde{I}_n$ converges at a much slower rate compared to $\what{I}_n$ .
\begin{figure}
\centering 
\begin{minipage}{.55\textwidth}
  \vspace{-.5cm}
  \centering
\includegraphics[height=0.23\textheight]{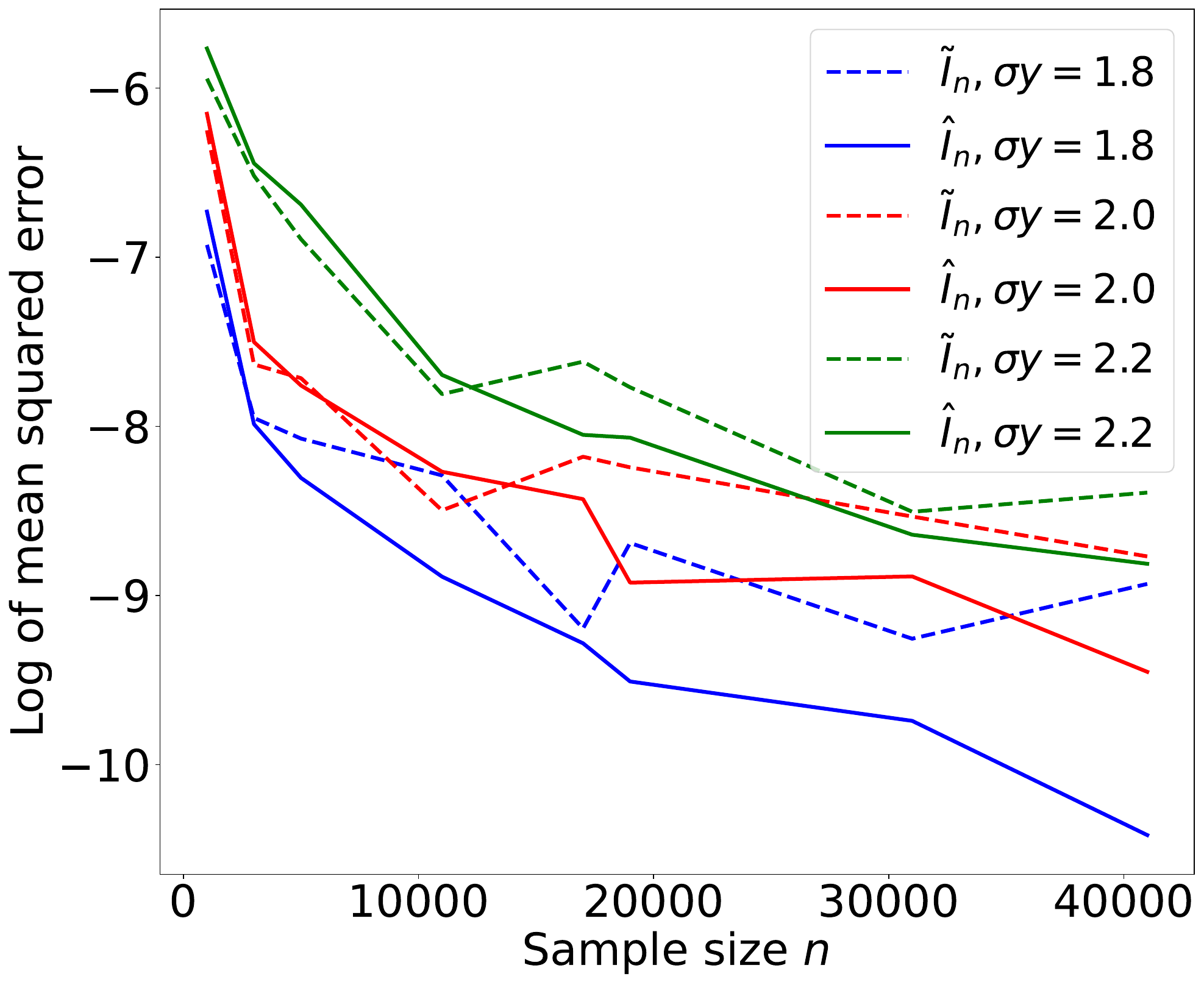}
\caption{Mean squared error of $\what{I}_n$ and $\Tilde{I}_n$ (over 40 runs) with
  $P = \mathsf{Exp}(\absc), \absc \in \{0.9,1,1.1\}$ and a fixed threshold
  $y=2$, here for $\Tilde{I}_n$ we choose the Gaussian kernel function with bandwith $h=0.1$}
\label{fig:simulation-convergence-plot-with-kde}
\end{minipage} 
\end{figure} 
}

\paragraph{Connections to large deviations theory} The dual
reformulation~\eqref{eqn:dual} suggests our stability measure has deep
connections to large deviations theory~\cite{DemboZe98, DeuschelSt89,
  Ellis07}, which has a wide range of applications, including queueing
theory, insurance mathematics, finance, and statistical
mechanics~\cite{Weiss95, Asmussen00, Ellis07, Touchette09}.  To illustrate the
connection concretely, let $\rv_1, \ldots, \rv_n \sim P$ be an i.i.d. sequence
of random variables. Consider estimating the deviations probability of a
random walk $S_m \defeq \sum_{i=1}^m \rv_i$
\begin{equation}
  \label{eqn:rare-prob}
  p_m \defeq \P\left(S_m / m \in A\right)
  ~~~\mbox{where}~~ A \subset \R ~~\mbox{s.t.}~~ \E[\rv] \notin A.
\end{equation}
Large deviations theory~\cite{DemboZe98} shows the approximation
\begin{equation*}
  p_m \approx e^{-m \inf_{y \in {\rm cl}(A)} I_y(P)},
\end{equation*}
where the stability measure $I_y(P)$ is alternatively called the large
deviations rate function.  Our choice of notation is intentional to make this
connection explicit.

Since $y \mapsto I_{y}(P)$ is nondecreasing on $[\E_P[\rv], \infty)$, there is no
real loss of generality in restricting attention to one-sided intervals
$A = [y, \infty)$ for a fixed $y \ge \E_P[\rv]$. In particular, Cramer's
theorem~\cite[Ch VI. Lemma 2.8]{AsmussenGl07} shows
\begin{equation}
  \label{eqn:cramer}
  p_m = \P (S_m \ge m y) = \exp\left( - m I_y(P) + \Theta(\sqrt{m}) \right).
\end{equation}
As a product of this connection, the main statistical results we give in the
sequel can be adapted to provide minimax convergence rates on the deviations
probability~\eqref{eqn:rare-prob}. While these results appear new to the
literature, we omit them for brevity.

 \highlight{
   \paragraph{Extensions}
   While a natural and classical starting point
   (e.g.,~\cite{LemaitreSeArBoGaIo15}), our focus on the KL divergence is
   admittedly somewhat arbitrary. We provide duality results for alternative
   metrics in Appendix~\ref{section:proof-of-duality} and leave as future work
   a proper comparison \emph{between} different stability measures.  In
   contrast to distributionally robust formulations that consider the KL
   divergence across the \emph{joint distribution on the entire data}, the
   notions of stability we consider only concern the single-dimensional random
   variable $\rv$. As a result, we conjecture that the choice of the notion of
   distance in the stability measure is less important than in DRO.  }


\section{Convergence results}
\label{section:convergence}

In the following two sections, we present our main theoretical contributions,
which characterize the minimax rate of convergence for estimating the
stability measure~\eqref{eqn:primal}. We take a two-pronged approach, where we
first establish finite sample convergence guarantees for the empirical plug-in
estimator for the dual formulation~\eqref{eqn:dual}
\begin{equation*}
\what{I}_n \defeq \sup_{\lambda \in \R}
\left\{ \lambda y - \log \E_{\emp}[e^{\lambda \rv}] \right\}.
\end{equation*}
Without imposing parametric assumptions on the distribution of $\rv$, we
characterize how the convergence rate depends on the threshold $y$ and the
abscissa of convergence
$\absc = \inf \{\lambda: \E[e^{\lambda \rv}] = \infty\}$.  In combination with
hardness bounds we give in the next section, our results show that the dual
plug-in estimator $\what{I}_n$ enjoys the best achievable finite sample
convergence rate uniformly over a class of distributions. In contrast, direct
plug-in approaches based on nonparametric estimators of $P$ converge at a
slow nonparametric rate that depends exponentially on the smoothness of the
density of $\rv$~\cite{Wasserman06}.

Our main result in this section quantifies the phase shift behavior in the
convergence rate. As our goal is to obtain convergence guarantees uniform over
the underlying distribution $P$, we consider a class of distributions such
that the optimal dual solution is bounded by some specified upper bound
$\bar{\lambda}$
\begin{equation*}
  \opttilt(P) = \argmax_{\lambda} \left\{ \lambda y - \log \E_{P}[e^{\lambda \rv}] \right\}
  \le \bar{\lambda}.
\end{equation*}
Our proof uses chaining techniques~\cite[Ch 2.2]{VanDerVaartWe96} to uniformly
bound the empirical process
$\lambda \mapsto \E_{\emp}[e^{\lambda \rv}] - \E_{P}[e^{\lambda \rv}]$ over
the interval $[0, \bar{\lambda}]$, which in turn bounds the estimation error
$\what{I}_n - I$.

\highlight{
Our main finite sample result makes explicit the distribution-dependent
constants that govern convergence behavior and thus allow proving uniform
convergence results. Our  upper bound results
(Theorems~\ref{theorem:upper-ld-rate},~\ref{theorem:upper-md-rate}) follow as
corollaries of the following theorem.
\begin{theorem}
  \label{theorem:convergence}
  Let $\bar{\lambda} > 0$ be a fixed constant with
  $\opttilt(P) \le \bar{\lambda}$ and let $\ubletter$ satisfy
  $1 < \ubletter \le \min\{2, \frac{\absc}{\bar{\lambda}}\}$
  where $\absc$ is the abscissa of convergence~\eqref{eqn:abscissa}.
  There is a universal
  constant $K > 0$ such that
  \begin{equation}
    \label{eqn:upper-general}
    \P\left(\left|\what{I}_n - I(P)\right| \ge
    K \left(\frac{1}{\delta}\right)^{\frac{1}{\ubletter}}
     \bar{\lambda}
    \left(
      \E_P[|\rv|^\ubletter e^{\bar{\lambda} \ubletter \rv}]
      \right)^{\frac{1}{\ubletter}}
      n^{\frac{1}{\ubletter} - 1}
      \right) 
      \le \delta.
  \end{equation}
\end{theorem}
\noindent See Section~\ref{section:proof-of-upper} for the proof.
}

Recalling the definition of the abscissa of convergence~\eqref{eqn:abscissa},
$e^{\bar{\lambda} \rv}$ has at most $(\absc / \bar{\lambda})$ number of
moments. This observation uncovers a natural phase shift phenomenon for the
dual plug-in estimator, as we empirically observed in
Figure~\ref{fig:simulation-convergence-plot}. When
$\absc / \bar{\lambda} > 2$, we show the existence of second moments allows
the dual plug-in to achieve the parametric rate $O_p(n^{-1/2})$. On the other
hand, consider the case when $\absc / \bar{\lambda} \le 2$. Since the dual
solution $\opttilt$ grows with $y$---this is easy to verify using first-order
conditions---$\absc / \bar{\lambda} \le 2$ implies the threshold $y$ is large
relative to the abscissa of convergence $\absc$. Since $e^{\opttilt \rv}$ may
not have a second moment, in this case, we prove the convergence rate
deteriorates to $O_p(n^{\bar{\lambda}/\absc - 1})$, which becomes worse as $y$
becomes larger.  Treating these two cases separately, we refer to
$\absc / \bar{\lambda} \le 2$ and $\absc / \bar{\lambda} > 2$ as the large and
moderate deviations regime respectively.

Concretely, we study the minimax rate of convergence for estimating $I_y(P)$
over a natural class of distributions that exhibit Gamma-like tail behavior.
Since Theorem~\ref{theorem:convergence} gives convergence results that
explicitly depend on the underlying data-generating distribution $P$, we take
worst-case rates over this class of distributions with Gamma-like tails.
\begin{definition} For $\absc,y,\mcpgamma$ with $\absc y >1$ and $\mcpgamma \in (0,1)$,
  \label{def:pclass}
  $\mc{P}_{\absc, y,\mcpgamma}$ is the class of distributions satisfying the
  following conditions.
\begin{enumerate}
\item \label{item:nonneg-heavy-tail} $\rv \ge 0$ and $\E_{P}[e^{\absc\rv}] = \infty$. 
\item \label{item:mgf-bound}
    $\E_P[e^{\lambda \rv}] \le \frac{\absc}{\absc- \lambda}$ for
  $0 \le \lambda < \absc$ and $\E_P[\rv] \le \frac{1}{\absc}$.
\item \label{item:argmax-bound}
  $\opttilt(P) = \argmax_{\lambda} \left\{ \lambda y - \log \E_P[e^{\lambda
      \rv}] \right\} \le \sigma - \frac{\mcpgamma}{y} \eqdef \bar{\lambda}$.
\end{enumerate}
\end{definition}
\noindent The condition $\rv \ge 0$ has no real significance other than simplifying the
calculations in the upper bound; the condition merely provides a range on
which $\rv$ is anchored in. As we showcase in our empirical demonstrations,
the translation-invariant property of the stability measure allows translating
the cost $\rv$ and the threshold $y$ by a constant to ensure nonnegativity. We
assume $\absc y > 1$ since $\opttilt(P) = 0$ and
$I(P) = 0$ for $P = \mathsf{Exp}(\absc) \in \mc{P}_{\absc,y,\mcpgamma}$ with
$\absc y \le 1$. The condition  $\E_P[\rv] \le \frac{1}{\absc}$ ensures
$\E_P[\rv] \le \frac{1}{\absc} < y$.

Conditions~\ref{item:nonneg-heavy-tail},~\ref{item:mgf-bound} restrict
$\mc{P}_{\absc,y,\mcpgamma}$ to have Gamma-like tails so that $\P(\rv \ge t)$ decays
like $e^{-\absc t}$ up to polynomial terms. To contextualize
Condition~\ref{item:argmax-bound}, consider
$\rv \sim P_0 = \mathsf{Gamma}(\mcpgamma,\absc)$, which has
$\lambda\opt(P_0) = \absc - \mcpgamma/y$. The final condition thus
considers distributions with the optimal exponential tilting factor
$\opttilt(P)$ bounded by that of the Gamma distribution with shape $\mcpgamma$
and rate $\absc$.  To unpack the condition further, denote by $\kappa_P$ the
cumulant generating function,
$\kappa_P(\lambda) = \log \E_P[e^{\lambda \rv}]$, and recall $\opttilt(P)$
satisfies $\kappa_P'(\opttilt(P)) = y$ from first-order optimality
conditions. Since $\kappa_P(\cdot)$ is convex,
$\kappa_P'(\lambda) = \E_P[\rv e^{\lambda \rv}] / \E_P[e^{\lambda \rv}]$ is
non-decreasing in $\lambda$. Hence, Condition~\ref{item:argmax-bound} is
equivalent to
\begin{equation}
  \E_P[\rv e^{(\absc - \mcpgamma/y) \rv}] \ge y \E_P[e^{(\absc - \mcpgamma/y) \rv}], \label{eqn:first-order-optimality}
\end{equation} 
which roughly implies $\rv$ has heavy enough tails. When
$\rv \sim P_0= \mathsf{Gamma}(\mcpgamma,\absc)$, equality holds in the above
display~\eqref{eqn:first-order-optimality}.

We first consider the large deviations regime where $\absc y \ge
2\mcpgamma$. Our convergence result explicitly quantifies how the statistical
error of the dual plug-in $\what{I}_n$ degrades with $y$.
\begin{theorem}
  \label{theorem:upper-ld-rate}
  Fix $\mcpgamma  \in (0,1)$.
  Let $\absc y \ge 2\mcpgamma$ and $\absc y > 1$. There is a universal constant $K > 0$ such that for any
  $\delta \in (0, 1)$ and $n \ge 8$, 
  \begin{equation}
    \label{eqn:upper-ld-rate}
    \sup_{P \in \mc{P}_{\absc,y,\mcpgamma}}
    \P\left(\left|\what{I}_n - I(P)\right| \ge
      K 
      \left(
 1- \frac{\mcpgamma}{\absc y}
    \right) 
    \left(\frac{1}{\delta}\right)^{2(1 - \frac{\mcpgamma}{\absc y})}
    \left( 4  \left(1 - \frac{\mcpgamma}{\absc y} \right) \log n  \right)^{ 1+ 2(1 - \frac{\mcpgamma}{\absc y})}
      n^{- \frac{\mcpgamma}{\absc y} }
    \right) 
    \le \delta.
  \end{equation}
\end{theorem}
\noindent See Section~\ref{section:proof-of-upper-ld-rate} for the
proof.

On the other hand, in the moderate deviations regime where $\absc y < 2\mcpgamma$, the
dual plug-in estimator $\what{I}_n$ achieves uniform parametric rates.  This
is in contrast to the primal plug-in estimator, which suffers nonparametric
rates even in this regime due to the difficulty of estimating densities.
\begin{theorem}
\label{theorem:upper-md-rate}
  Fix $\mcpgamma \in (0,1)$.
Let $1 < \absc y < 2\mcpgamma$. There is a universal constant $K > 0$ such that for any
$\delta \in (0, 1)$,
\begin{equation}
\label{eqn:upper-md-rate}
\sup_{P \in \mc{P}_{\absc,y,\mcpgamma}}
\P\left(\left|\what{I}_n - I(P)\right| \ge
  K \left(\frac{1}{\delta}\right)^{\frac{1}{2}}
 \left(
 \frac{\absc y}{\mcpgamma-  \frac{\absc y}{2}} 
 \right)^{\half}
\frac{\absc y - \mcpgamma}{\mcpgamma-\frac{\absc y}{2}} 
  n^{-\half}
\right) 
\le \delta.
\end{equation}
\end{theorem}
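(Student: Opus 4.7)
The plan is to reduce $|\what{I}_n - I(P)|$ to a uniform bound on the empirical moment-generating process over $\lambda \in [0, \bar\lambda]$, and then to exploit the finite second moment of $e^{\bar\lambda \rv}$ that is available precisely in the moderate deviations regime where $2\bar\lambda < \absc$.

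First, I would use the standard sup-of-differences comparison for argmax problems to obtain
\[
|\what{I}_n - I(P)| \le \sup_{\lambda \in [0, \bar\lambda]} \bigl| \log \E_{\emp}[e^{\lambda \rv}] - \log \E_P[e^{\lambda \rv}] \bigr|,
\]
which requires both $\opttilt(P)$ and $\opttilt(\emp)$ to lie in $[0, \bar\lambda]$. The population side holds by Condition~\ref{item:argmax-bound}; for the empirical side, I would either define a truncated estimator that restricts the dual supremum to $[0, \bar\lambda]$ and separately show that $\what{I}_n$ and the truncated version differ negligibly, or invoke strict concavity of the empirical dual objective near its maximum to conclude that $\opttilt(\emp)$ concentrates near $\opttilt(P)$. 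Since $\rv \ge 0$ implies $\E_P[e^{\lambda \rv}], \E_{\emp}[e^{\lambda \rv}] \ge 1$ for $\lambda \ge 0$, the mean value theorem then converts the previous display into a bound on $\sup_{\lambda \in [0, \bar\lambda]} |(\E_{\emp} - \E_P) e^{\lambda \rv}|$.

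Next, I would control this empirical process by chaining over the one-dimensional parameter $\lambda$. Condition~\ref{item:mgf-bound} yields the envelope control
\[
\|e^{\bar\lambda \rv}\|_{L^2(P)}^2 \le \frac{\absc}{\absc - 2\bar\lambda} = \frac{\absc y}{2\mcpgamma - \absc y},
\]
which is finite, and this is what makes a parametric rate feasible. For the Lipschitz constant in the $L^2(P)$ metric, the pointwise inequality $|e^{\lambda \rv} - e^{\lambda' \rv}| \le |\lambda - \lambda'| \rv e^{\bar\lambda \rv}$ gives $L^2 \le \E_P[\rv^2 e^{2\bar\lambda \rv}]$, which I would bound by interpolating $\rv^2 e^{-\epsilon \rv} \le (2/(e\epsilon))^2$ against the slightly enlarged mgf $\E_P[e^{(2\bar\lambda + \epsilon) \rv}]$ with the optimizing choice $\epsilon = (\absc - 2\bar\lambda)/2$. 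Dudley's entropy integral for the one-dimensional class $\{e^{\lambda \rv} : \lambda \in [0, \bar\lambda]\}$ then yields
\[
\bigl\| \sup_{\lambda \in [0, \bar\lambda]} |(\E_{\emp} - \E_P) e^{\lambda \rv}| \bigr\|_{L^2(\P)} \lesssim \frac{L \bar\lambda}{\sqrt{n}},
\]
and Chebyshev's inequality converts this second-moment bound into the high-probability statement with the claimed $\delta^{-1/2}$ factor. Plugging in $\bar\lambda = (\absc y - \mcpgamma)/y$ together with the interpolation-based bound on $L$ reproduces the explicit product $\sqrt{\absc y/(\mcpgamma - \absc y/2)} \cdot (\absc y - \mcpgamma)/(\mcpgamma - \absc y/2)$ up to universal constants.

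The main obstacle will be keeping the polynomial dependence on $(2\mcpgamma - \absc y)^{-1}$ tight. As $\absc y \uparrow 2\mcpgamma$, both the envelope $e^{\bar\lambda \rv}$ and the Lipschitz weight $\rv e^{\bar\lambda \rv}$ approach the boundary of $L^2$-integrability, so the interpolation constant in bounding $\E_P[\rv^2 e^{2\bar\lambda \rv}]$ must be chosen carefully to produce exactly the $(\mcpgamma - \absc y/2)^{-3/2}$ blowup implicit in the theorem. A secondary technical issue is the control of the empirical argmax in the initial reduction; this should be routine given the strict concavity of the dual objective plus the pointwise Chebyshev bound for the mgf at $\bar\lambda$.
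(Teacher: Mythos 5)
Your proposal follows essentially the same route as the paper. The paper's proof instantiates the general moment bound of Theorem~\ref{theorem:convergence} at $\ubletter=2$ and $\bar\lambda=\absc-\mcpgamma/y$, and Theorem~\ref{theorem:convergence} itself is proved by the same chain of steps you propose: reduce $|\what{I}_n-I|$ to $\sup_{\lambda\in[0,\bar\lambda]}|(\E_{\emp}-\E_P)e^{\lambda\rv}|$ (after passing $|\log u-\log v|\le|u-v|$ for $u,v\ge1$), control that supremum via symmetrization plus chaining with envelope $F(x)=xe^{\bar\lambda x}$, bound the resulting $\E_P[\rv^2e^{2\bar\lambda\rv}]$ with exactly your interpolation inequality $x^a\le e^{tx}(a/(et))^a$ at the same optimizing choice $t=(\absc-2\bar\lambda)/2=\mcpgamma/y-\absc/2$, and convert the second moment to a tail bound by Markov. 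One point worth noting: your concern about controlling the empirical maximizer $\opttilt(\emp)$ is well founded. The paper's intermediate step $|I-\what{I}_n|\le\sup_{\lambda\in[0,\bar\lambda]}|\hat\kappa(\lambda)-\kappa(\lambda)|$ only uses $\opttilt(P)\le\bar\lambda$, which yields one direction of the inequality for free but leaves the other direction implicit; on the (exponentially small) event where all samples fall below $y$ one even has $\what{I}_n=\infty$, so the moment bound~\eqref{eqn:moment-bound} cannot hold literally without either a truncation of the dual search to $[0,\bar\lambda]$ or a conditioning argument of the kind you sketch. Also, your Dudley-integral step would need to be preceded by the symmetrization the paper carries out, since the raw increments $(\E_{\emp}-\E_P)(e^{\lambda\rv}-e^{\lambda'\rv})$ are not sub-Gaussian for heavy-tailed $\rv$; the paper handles this by conditioning on the data and applying Hoeffding to the Rademacher average.
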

\noindent See Section~\ref{section:proof-of-upper-md-rate} for the
proof. Using the identity $\E[X] = \int_{0}^\infty \P(X \ge x) dx$ for a
nonnegative random variable $X$, our probability bounds can be transformed to
give upper bounds on the expected absolute error as we initially outlined in
the display~\eqref{eqn:minimax}.

\highlight{
To consider a broader class of distributions, we  extend Definition~\ref{def:pclass} by relaxing Condition~\ref{item:mgf-bound}.
\begin{definition} For $m \ge 1, m \in \mathbb{N}_+$, $\absc,y,\mcpgamma$ with $\absc y >m$, $\mcpgamma \in (0,1)$,    $\mc{P}_{\absc, y,\mcpgamma,m}$ is the class of distributions 
satisfying the following conditions.
\begin{enumerate}
\item \label{item:nonneg-heavy-tail-relaxed} $\rv \ge 0$ and $\E_{P}[e^{\absc\rv}] = \infty$. 
\item \label{item:mgf-bound-relaxed}
$\E_P[e^{\lambda \rv}] \le \paran{\frac{\absc}{\absc- \lambda}}^m$ for
$0 \le \lambda < \absc$ and $\E_P[\rv] \le \frac{m}{\absc}$.
\item \label{item:argmax-bound-relaxed}
$\opttilt(P) = \argmax_{\lambda} \left\{ \lambda y - \log \E_P[e^{\lambda
  \rv}] \right\} \le \sigma - \frac{\mcpgamma}{y} \eqdef \bar{\lambda}$.
\end{enumerate}
\label{def:pclass-relaxed}
\end{definition}
\noindent The condition  $\E_P[\rv] \le \frac{m}{\absc}$ ensures
$\E_P[\rv] \le \frac{m}{\absc} < y$ for any $P \in \mc{P}_{\absc, y,\mcpgamma,m}$. Condition~\ref{item:mgf-bound-relaxed} 
restricts all distributions in $\mc{P}_{\absc, y,\mcpgamma,m}$ to have a lighter tail compared to $\mathsf{Gamma}(m,\absc)$, while
Condition~\ref{item:argmax-bound-relaxed} requires $P$ to have a heavier tail compared to  $\mathsf{Gamma}(\mcpgamma,\absc)$.
Thus,  $\mathsf{Gamma}(k,\absc) \in \mc{P}_{\absc, y,\mcpgamma,m}$ for all $k \in [\mcpgamma,m]$. 

Recalling we assume $\absc y >m$ in Definition~\ref{def:pclass-relaxed},  the inequality $\absc y > m > 2\mcpgamma$ always holds 
for $\mcpgamma < \half$
and therefore we only have the large deviations regime in this case.
We  have the upper bound result that is similar to Theorem~\ref{theorem:upper-ld-rate},
which we prove in Section~\ref{sec:proof-of-upper-ld-rate-relaxed}.
\begin{theorem}
  \label{theorem:upper-ld-rate-relaxed}
  Fix $\mcpgamma  \in (0,1)$.
  Let $\absc y > m > 2\mcpgamma$. There is a universal constant $K > 0$ such that for any
  $\delta \in (0, 1)$ and $n \ge 8$, 
  \begin{equation}
    \label{eqn:upper-ld-rate-relaxed}
    \sup_{P \in \mc{P}_{\absc,y,\mcpgamma,m}}
    \P\left(\left|\what{I}_n - I(P)\right| \ge
      K 
      \left(
 1- \frac{\mcpgamma}{\absc y}
    \right) 
    \left(\frac{1}{\delta}\right)^{2(1 - \frac{\mcpgamma}{\absc y})}
    \left( 4  \left(1 - \frac{\mcpgamma}{\absc y} \right) \log n  \right)^{ 1+ 2m(1 - \frac{\mcpgamma}{\absc y})}
      n^{- \frac{\mcpgamma}{\absc y} }
    \right) 
    \le \delta.
  \end{equation}
\end{theorem}


}

\highlight{
  We can also extend the results in this section to light-tailed random variables.
  Letting $\absc \to \infty$ in Theorem~\ref{theorem:convergence}, we obtain the following result.
  \begin{corollary}
    \label{cor:gaussian-ub}
Let $\bar{\lambda}$ be any fixed constant with
$\opttilt(P) \le \bar{\lambda}$ and assume $\E_P[e^{\lambda \rv}] < \infty$ for any $\lambda$.
There exists a universal
constant $K > 0$ such that
\begin{equation}
\label{eqn:upper-light-tail}
\P\left(\left|\what{I}_n - I(P)\right| \ge
K \left(\frac{1}{\delta}\right)^{\frac{1}{2}}
\bar{\lambda}
\left(
\E_P[|\rv|^2 e^{2\bar{\lambda} \rv}]
\right)^{\frac{1}{2}}
n^{-\frac{1}{2}}
\right) 
\le \delta.
\end{equation}
\end{corollary}
Concretely, we study the class of light-tailed distributions
$\mc{Q}_{\bar{\lambda},y}$   defined below.
\begin{definition} 
For $\bar{\lambda}> 0$ and $y$, 
  $\mc{Q}_{\bar{\lambda},y}$ is the class of distributions satisfying the
  following conditions.
\begin{enumerate}
\item \label{item:nonneg-light-tail}   $\E_{P}[e^{\lambda\rv}] < \infty$ for any $\lambda$.  
\item \label{item:argmax-bound-light-tail}
  $\opttilt_y(P) = \argmax_{\lambda} \left\{ \lambda y - \log \E_P[e^{\lambda
      \rv}] \right\} \le  \bar{\lambda}$.
\end{enumerate}
  \label{def:pclass-light-tail} 
\end{definition}

Corollary~\ref{cor:gaussian-ub} implies the standard $1/\sqrt{n}$ convergence rate
\begin{equation*}
 \sup_{P \in \mc{Q}_{\bar{\lambda},y}}
\P\left( \left|\what{I}_n(\rv_1^n) - I_y(P)\right| \ge 
K \left(\frac{1}{\delta}\right)^{\frac{1}{2}}
 \frac{1}{\sqrt{n}} 
\right)
\le \delta,
\end{equation*}
where $K$ is some constant that depends on $\bar{\lambda},y$.  For fixed
$\bar{\lambda}$ and $y$, we can show (see
Section~\ref{section:proof-property-Q-class}) that $\mc{Q}_{\bar{\lambda},y}$
contains the following sub-Gaussian distributions:
\begin{align*}
 \set{\mathsf{N}(\mu, s^2): \frac{y - \mu}{s^2} \le \bar{\lambda}}, \set{\mathsf{Ber}(p): 
p \ge \frac{y}{e^{\bar{\lambda}} - y   ( e^{\bar{\lambda}}- 1)}
}, \set{\mathsf{Uni}(a,b):  \frac{b-a}{b-y} \le \bar{\lambda}} \subset \mc{Q}_{\bar{\lambda},y}, 
\end{align*}
To further understand $\mc{Q}_{\bar{\lambda},y}$, consider any light-tailed
random variable $\rv \sim P$ with $y < \esssup \rv$, we can show that there
exists $\Tilde{\lambda}(y,P) < \infty$ such that
$P \in \mc{Q}_{\Tilde{\lambda}(y,P),y}$.  This implies for any finite set of
light-tailed distributions $\mc{P}$ with $y \le \esssup \rv$ for any
$\rv \sim P \in \mc{P}$, we can find $\Tilde{\lambda}(y,\mc{P})$ such that
$\mc{P} \subseteq \mc{Q}_{\Tilde{\lambda}(y,\mc{P}),y}$. 
}


\section{Fundamental hardness}
\label{section:hardness}

We now provide lower bounds on the minimax risk~\eqref{eqn:minimax} that
complement our convergence results in the previous section
(Theorems~\ref{theorem:upper-ld-rate} and~\ref{theorem:upper-md-rate}). Our
lower bounds quantify the fundamental hardness of estimation and show that the
plug-in estimator~\eqref{eqn:dual-plug-in} achieves the optimal rate of
convergence up to polylogarithmic factors. Unlike previous asymptotic results
that consider a fixed data-generating distribution $P$~\cite{Feuerverger89},
we consider a fixed finite sample size and quantify the difficulty of
estimation over a class of data-generating distributions. As minimax lower
bounds~\cite{Wainwright19} can be overly pessimistic when considering a large
set of data-generating distributions, we focus on the adapted class of
distributions $\mc{P}_{\absc, y, \mcpgamma}$ that exhibit a particular
Gamma-like tail behavior defined in Definition~\ref{def:pclass}.

As an example of pessimistic hardness results when considering a large class
of data-generating distributions, consider the case where we do not restrict
distributions in $\mc{P}_{\absc,y, \mcpgamma}$ to have the same abscissa of
convergence $\absc = \inf \{\lambda: \E[e^{\lambda \rv}] = \infty\}$.  So long
as $\mc{P}_{\absc,y, \mcpgamma}$ just contains two distributions with tails
decaying like $e^{-\absc x}$ and $e^{-\absc(1 - 1/\log n) x}$, we can show
that no estimator of stability $I_y(P)$ can have expected estimation error
better than $\Omega(1 / \log n)$. Intuitively, this is due to the inherent
difficulty in estimating $\absc$, the abscissa of convergence. Since $\absc$
is the knife-edge determining the existence of the MGF, it is difficult to
estimate: it is well known that the minimax rate for estimating $\absc$ is
$\Theta_p(1/\log n)$~\cite{HallTeVa92}.

With this theoretical motivation in mind, we focus on the admissible setting
where $\absc$ is fixed
(Conditions~\ref{item:nonneg-heavy-tail},~\ref{item:mgf-bound}). We
characterize a \emph{tailored} notion of hardness adapted to the class
$\mc{P}_{\absc, y, \mcpgamma}$, and show an unavoidable degradation from the
parametric rate even over the restricted set of distributions with a fixed
abscissa of convergence~$\absc$.  The phase shift behavior
$\widetilde{\Theta}_p(n^{- (\half \wedge \frac{\mcpgamma}{\absc y})})$ in our
bound~\eqref{eqn:minimax} is fundamental. Estimation becomes harder for large
threshold values $y$, as well as for light-tailed distributions with higher
$\absc$. Although a priori counterintuitive, the latter scaling is natural
since adverse events with large values of $R$ occur more rarely for
lighter-tailed distributions, and the stability measure is thus more difficult
to estimate.

To prove our minimax lower bound, we use Le Cam's reduction from estimation to
hypothesis testing~\cite{LeCam73, LeCam86, Yu97}.  Our information-theoretic
approach constructs two distributions close in total variation distance---thus
difficult to tell apart using finite samples---but sufficiently separated with
respect to the stability $I(P)$. Although we state probabilistic minimax
bounds for the estimation error, their expectation counterparts can also be
easily shown from our proofs. Following Section~\ref{section:convergence}, we
first focus on the large deviations regime, where we observe a degradation from
the standard parametric rate.
\begin{theorem}
\label{theorem:lower-ld-rate}
Fix $\mcpgamma \in (\half, 1)$ and   $\delta \in (0, \half)$.
Let $\absc y \ge 2 \mcpgamma > 1$ and define $c = \frac{1}{2(1-2\delta)^2}$. If
\begin{align*}
  \log cn \ge     \max\Bigg\{
    \frac{\absc y}{\mcpgamma} \left( 2 \log \frac{\log cn}{\absc} + \log \left( \frac{4 \absc \left( 
    \frac{\mcpgamma}{y} \vee 1 \right) }{1-\mcpgamma}  \right)\right),
  & ~\absc y \paran{\frac{2}{\mcpgamma^2} \frac{\absc y + 1}{\absc y - 1} \vee \frac{2}{\mcpgamma}
    \vee \frac{4}{\absc y -  1 }  }, \\
  &   \frac{2 \absc \log 3(1 \vee \absc^{-2})}{1 - \mcpgamma},  \absc+ 1,  \frac{2}{1-\mcpgamma} \Bigg\},
\end{align*}
\begin{equation*}
\mathfrak{M}_n \defeq
\inf_{\what{I}_n} \sup_{P \in \mc{P}_{\absc,y,\mcpgamma}}
\P\left( \left|\what{I}_n(\rv_1^n) - I(P)\right| \ge 
         \left(\frac{\absc y}{\log cn}\right)^{1-\mcpgamma}
\frac{\absc y - 1}{4 \Gamma(\mcpgamma) e^{\frac{1}{\absc y}} (\absc y + 1)(\log cn- 1)}
\left(  \frac{1}{cn} \right)^{\frac{\mcpgamma}{\absc y}}
\right)
\ge \delta,
\end{equation*}
where the infimum is taken over the set of $(\rv_1, \ldots, \rv_n)$-measurable
functions.
\end{theorem}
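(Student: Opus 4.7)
The plan is to apply Le Cam's two-point method for minimax lower bounds. I will exhibit two distributions $P_0, P_1 \in \mc{P}_{\absc,y,\mcpgamma}$ such that (i) the product measures $P_0^n$ and $P_1^n$ are close in total variation, making them statistically indistinguishable from $n$ samples, and (ii) the stabilities $I_y(P_0)$ and $I_y(P_1)$ differ by at least twice the separation $\Delta_n$ stated in the theorem. Once this pair is in hand, the standard reduction $\mc{M}_n \ge \frac{1}{2}(1 - \tvnorm{P_0^n - P_1^n})$, combined with a choice of the constant $c = (2(1-2\delta)^2)^{-1}$ ensuring $\tvnorm{P_0^n - P_1^n} \le 1-2\delta$, yields $\mc{M}_n \ge \delta$.

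For the base distribution I take $P_0 = \mathsf{Gamma}(\mcpgamma, \absc)$, which is the extremal element of $\mc{P}_{\absc,y,\mcpgamma}$: it achieves equality in Condition~\ref{item:argmax-bound} with $\opttilt(P_0) = \bar{\lambda} = \absc - \mcpgamma/y$, satisfies Condition~\ref{item:mgf-bound} because $\mcpgamma < 1$ makes $(\absc/(\absc-\lambda))^\mcpgamma \le \absc/(\absc-\lambda)$, and admits an explicit formula for $I_y(P_0)$ via Lemma~\ref{lemma:duality-result}. For the alternative $P_1$, I construct a tail perturbation at the scale $t_n \asymp (\log cn)/\absc$ with perturbation mass $\epsilon \asymp 1/(cn)$. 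The concrete candidate is $P_1 = (1-\epsilon) P_0 + \epsilon \delta_{t_n}$, or, to exactly match the polynomial-in-$\log cn$ prefactors in the theorem statement, a refined perturbation that reshapes the Gamma density on $[t_n, \infty)$ so that the correction inherits the factor $t_n^{\mcpgamma-1} e^{-\absc t_n}$ from the Gamma tail. Showing $P_1 \in \mc{P}_{\absc,y,\mcpgamma}$ reduces to checking Conditions~\ref{item:mgf-bound} and~\ref{item:argmax-bound}: the MGF bound holds because $\epsilon e^{\lambda t_n}$ remains small for $\lambda$ bounded away from $\absc$ once $\log cn$ is large, and near $\absc$ the bound $\absc/(\absc-\lambda)$ blows up; the argmax condition $\opttilt(P_1) \le \bar{\lambda}$ follows from $\kappa_{P_1}'(\bar{\lambda}) \ge y$, which holds because shifting mass to $t_n > y$ increases the ratio $\E_P[\rv e^{\bar{\lambda}\rv}]/\E_P[e^{\bar{\lambda}\rv}]$ at $\bar{\lambda}$.

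The TV bound is the easy half: $\tvnorm{P_0^n - P_1^n} \le n \tvnorm{P_0 - P_1} \le n\epsilon \le 1/c$. For the separation, I plug $\lambda = \bar{\lambda}$ into the dual representation to get $I_y(P_1) \le \bar{\lambda} y - \log \E_{P_1}[e^{\bar{\lambda} \rv}]$, no—this is the wrong direction since $I_y(P_1)$ is a supremum—so instead I Taylor expand around $\bar{\lambda}$ and $\lambda_1^* \defeq \opttilt(P_1)$. Writing $\lambda_1^* = \bar{\lambda} - \eta$ and using first-order optimality to pin down $\eta \approx (\epsilon B/A)(t_n - y)/\kappa_{P_0}''(\bar{\lambda})$ where $B = e^{\bar{\lambda} t_n}$ and $A = \E_{P_0}[e^{\bar{\lambda}\rv}]$, the quadratic and linear terms in the expansion of $I_y(P_0) - I_y(P_1)$ combine to give the leading-order identity $I_y(P_0) - I_y(P_1) \asymp \epsilon B/A$, up to a quadratic correction of smaller order. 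Substituting $\epsilon = 1/(cn)$ and $t_n = (\log cn)/\absc$ yields $\epsilon B = (cn)^{-\mcpgamma/(\absc y)}$ and $A = (\absc y/\mcpgamma)^\mcpgamma$, giving the announced rate; the polynomial factors $(\log cn)^{\mcpgamma-1}$ and $(\log cn - 1)^{-1}$ come from the incomplete-Gamma asymptotics attached to the refined perturbation and from the second-derivative term $\kappa_{P_0}''(\bar{\lambda}) = \mcpgamma(y/\mcpgamma)^2$.

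The main obstacle is twofold. First, verifying that $P_1$ remains in the class $\mc{P}_{\absc,y,\mcpgamma}$ under the construction: Condition~\ref{item:argmax-bound} is delicate because the perturbation changes the first-order optimality condition nonlinearly, and this is exactly where the lower bound on $\log cn$ in the hypothesis becomes essential—each of the six lower bounds on $\log cn$ in the theorem statement arises from one of these verifications or from ensuring a term in the separation expansion is negligible. Second, matching the exact constants in the separation $\Delta_n$ requires controlling the higher-order Taylor remainder $\half \kappa_{P_0}''(\bar{\lambda})(\bar{\lambda} - \lambda_1^*)^2$ uniformly in $n$; this dictates the shape of the perturbation and produces the $\Gamma(\mcpgamma)$, $e^{1/(\absc y)}$, and $(\absc y \pm 1)$ prefactors in the stated rate.
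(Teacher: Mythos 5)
Your high-level plan — Le Cam's two-point method with a Gamma-like base distribution and a perturbation localized at scale $t_n \asymp (\log cn)/\absc$ — is the right one, and your total-variation calculation is essentially correct. But there are two substantive issues with the specific construction, and they interact.

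\textbf{Saturating Condition~\ref{item:argmax-bound}.} You take $P_0 = \mathsf{Gamma}(\mcpgamma,\absc)$, for which $\opttilt(P_0) = \absc - \mcpgamma/y = \bar\lambda$ exactly, i.e., Condition~\ref{item:argmax-bound} holds with equality and there is no slack. Because of this, the perturbation must make the tail \emph{heavier}, not lighter, or the perturbed distribution falls out of $\mc{P}_{\absc,y,\mcpgamma}$. You recognize this (you shift mass to $t_n > y$). But heavier tails mean the perturbed distribution has \emph{smaller} stability, so you need to \emph{upper}-bound the supremum $I(P_1) = \sup_\lambda\{\lambda y - \kappa_{P_1}(\lambda)\}$, which is exactly the hard direction: you flag this yourself and then resort to a Taylor expansion of the argmax $\opttilt(P_1) = \bar\lambda - \eta$. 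Controlling $\eta$ to the precision needed for a finite-sample result with the stated constants — including the quadratic remainder uniformly over the class — is a genuinely delicate calculation you do not carry out; it is by no means clear that it closes. The paper instead chooses the base distribution $P_1 = \mathsf{Gamma}(\mcpgamma + \frac{1}{\absc x_0}, \absc)$ with $x_0 = (\log cn)/\absc$; this leaves slack $\opttilt(P_1) = \absc - \mcpgamma/y - \frac{1}{\absc x_0 y} < \bar\lambda$, so the perturbed $P_2$ can have a \emph{lighter} tail (density $\propto x^{-1}e^{-\absc x}$ on $(x_0,\infty)$) and hence \emph{larger} stability while still satisfying Condition~\ref{item:argmax-bound}. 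Then the separation is bounded from below simply by evaluating the dual objective of $I(P_2)$ at the \emph{suboptimal} point $\lambda = \opttilt(P_1)$, which is in the trivial direction for a supremum: $I_2 - I_1 \ge \kappa_1(\opttilt(P_1)) - \kappa_2(\opttilt(P_1)) \ge 1 - \E[e^{\opttilt(P_1) \rv_2}]/\E[e^{\opttilt(P_1)\rv_1}]$. No argmax shift needs to be tracked at all.

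\textbf{Constants and perturbation shape.} The factors $\Gamma(\mcpgamma)$, $e^{1/(\absc y)}$, and $(\log cn - 1)^{-1}$ in the stated bound are tied to the shifted shape parameter $\mcpgamma + \tfrac{1}{\absc x_0}$ (equivalently to $\eta = 1-\mcpgamma - \tfrac{1}{\absc x_0}$) and to the smooth tail reshape with normalization constant $C$; with $P_0 = \mathsf{Gamma}(\mcpgamma,\absc)$ and a point-mass perturbation they simply do not appear. You acknowledge a ``refined perturbation'' may be needed, but this is precisely where the work lives, and a point mass also forces you to argue about an atom when verifying the moment-generating-function bound of Condition~\ref{item:mgf-bound} across all $\lambda \in (0,\absc)$, where the competition between $\epsilon e^{\lambda t_n}$ and the slack $\absc/(\absc-\lambda) - (\absc/(\absc-\lambda))^{\mcpgamma}$ is tight near $\lambda = 0$ (both vanish linearly in $\lambda$, so the comparison is first-order, not automatic). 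Finally, several of the lower bounds on $\log cn$ in the theorem statement correspond one-for-one to explicit inequalities verified in the paper's Lemmas~\ref{lemma:class-inclusion}--\ref{lemma:ld-kl-bound}; asserting that they ``arise from one of these verifications'' without supplying them does not establish the result.

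In short: correct method, but the construction you propose requires estimating the shift of a supremizer to control an upper bound, whereas the paper's shifted-shape base and lighter-tail perturbation make the separation bound a one-line plug-in, and the stated constants are specific to that choice. Your sketch as written cannot be completed into a proof of the theorem's exact statement without major additional work and a change of base distribution.
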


Deferring a formal proof of Theorem~\ref{theorem:lower-ld-rate} to
Section~\ref{section:proof-of-lower-ld-rate}, we outline the main ideas
below. We construct two distributions $P_1, P_2 \in \mc{P}_{\absc,y,\mcpgamma}$
that exhibit slightly different tail behaviors
\begin{align}
  \label{eqn:ld-construction}
  f_1(x) \propto  x^{\mcpgamma + \frac{1}{\absc x_0} - 1 } e^{-\absc x}\indic{x \ge 0}~~\mbox{and}~~
  f_2(x) \propto
  \begin{cases}
    x^{\mcpgamma  + \frac{1}{\absc x_0} - 1} e^{-\absc x} & ~~\mbox{if}~~ 0 \le x \le x_0 \\
    x^{-1} e^{-\absc x} & ~~\mbox{if}~~ x > x_0 
  \end{cases}.
\end{align}
By selecting $x_0$ appropriately, we can show that samples from the two
distributions are difficult to tell apart with probability at least $\delta$,
yet the stability measures $I(P_1)$ and $I(P_2)$ are at least
$\widetilde{\Omega}(n^{-\frac{\mcpgamma}{\absc y}})$ separated.

In the moderate deviations regime where $\absc y < 2\mcpgamma$, we can show
that the minimax lower bound reduces to the usual parametric rate.
\begin{theorem}
\label{theorem:lower-md-rate}
  Fix $\mcpgamma \in (\half,1)$ and   $\delta \in (0, \half)$.
  Let $1 < \absc y < 2\mcpgamma$. If
\begin{equation*}
\log n \ge \max\left\{ 2 \left( 
\log \left( \frac{1}{(1-\mcpgamma) \wedge 
(2 - \absc y)} \right)
+
\log 2(1-2\delta)
  + \log \absc y \right),
\absc y \log 2 + \log 2(1-2\delta) \right\},
\end{equation*}
\begin{equation*}
\mathfrak{M}_n \defeq
\inf_{\what{I}_n} \sup_{P \in \mc{P}_{\absc,y,\mcpgamma}}
\P\left( \left|\what{I}_n(\rv_1^n) - I(P)\right| \ge
\frac{(\absc y -1)(1-2\delta)}{4 \sqrt{n}} \right)
\ge \delta.
\end{equation*}
\end{theorem}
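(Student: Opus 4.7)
The plan is to apply Le Cam's two-point method, paralleling the strategy used for Theorem~\ref{theorem:lower-ld-rate}. It suffices to exhibit two distributions $P_1, P_2 \in \mc{P}_{\absc,y,\mcpgamma}$ with $\tv{P_1^n}{P_2^n} \le 1 - 2\delta$ and $|I(P_1) - I(P_2)| \ge \frac{(\absc y - 1)(1-2\delta)}{2\sqrt{n}}$; Le Cam's inequality then yields the claimed minimax lower bound after halving the separation.

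Unlike the large deviations regime, here the optimal dual variable $\opttilt(P) \le \absc - \mcpgamma/y$ lies strictly below $\absc/2$, so the tilted measure $Q\opt \propto e^{\opttilt \rv}P$ has a bounded second moment. This permits a simple $O(1/\sqrt{n})$ linear-likelihood perturbation rather than the tail-surgery construction~\eqref{eqn:ld-construction}. A natural choice is $P_1 = \mathsf{Exp}(\absc)$ (which lies at the boundary of $\mc{P}_{\absc,y,\mcpgamma}$) together with $\frac{dP_2}{dP_1}(x) = 1 - \epsilon(x - 1/\absc)$ for $\epsilon = c/\sqrt{n}$ and a constant $c$ calibrated below; the lower bound on $\log n$ ensures $dP_2 \ge 0$. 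The mean identity $\E_{P_2}[\rv] = 1/\absc - \epsilon/\absc^2 < 1/\absc$ gives Condition~\ref{item:mgf-bound}, the abscissa at $\absc$ is preserved (Condition~\ref{item:nonneg-heavy-tail}), and Condition~\ref{item:argmax-bound} follows from continuity of $\opttilt$ in $\epsilon$ combined with the strict inequality $\opttilt(P_1) = \absc - 1/y < \absc - \mcpgamma/y$ available since $\mcpgamma < 1$.

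The two quantitative estimates then come from standard expansions. A second-order Taylor expansion gives $\dkl{P_1}{P_2} \approx \tfrac{\epsilon^2}{2}\,\mathrm{Var}_{P_1}(\rv) = \tfrac{\epsilon^2}{2\absc^2}$, so Pinsker's inequality $\tv{P_1^n}{P_2^n} \le \sqrt{\tfrac{n}{2}\dkl{P_1}{P_2}}$ delivers the required total-variation bound for appropriate $c$. An envelope-theorem computation applied to $I(P) = \sup_\lambda\{\lambda y - \log \E_P[e^{\lambda \rv}]\}$ at $\opttilt(P_1) = \absc - 1/y$ yields
\begin{equation*}
I(P_2) - I(P_1) = \epsilon\, \E_{Q\opt(P_1)}\!\left[\rv - \tfrac{1}{\absc}\right] + O(\epsilon^2) = \tfrac{\epsilon(\absc y - 1)}{\absc} + O(\epsilon^2),
\end{equation*}
using $\E_{Q\opt(P_1)}[\rv] = y$ from the dual first-order condition and $Q\opt(P_1) = \mathsf{Exp}(1/y)$ from the exponential example following Lemma~\ref{lemma:duality-result}. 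Choosing $c = \absc(1-2\delta)/2$ balances the two bounds to match the target separation.

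The main obstacle will be rigorously controlling the $O(\epsilon^2)$ remainders in both the KL and the envelope expansions without sacrificing the sharp multiplicative constant $(\absc y - 1)/4$ in the claim. This is where the explicit lower bounds on $\log n$ in the hypothesis are used: they make $\epsilon = O(1/\sqrt{n})$ small enough that higher-order terms are absorbed into the constants, preserving the clean form of the stated separation. A secondary check is verifying Condition~\ref{item:argmax-bound} uniformly in $n$, which reduces to showing the first-order inequality~\eqref{eqn:first-order-optimality} at $\absc - \mcpgamma/y$ for $P_2$; since the corresponding inequality is strict for $P_1$, it survives the $O(\epsilon)$ perturbation.
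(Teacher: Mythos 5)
Your high-level plan---Le Cam's two-point method with $P_1 = \mathsf{Exp}(\absc)$ and an $O(1/\sqrt{n})$ perturbation---matches the paper's, and your envelope calculation correctly predicts the separation $\epsilon(\absc y - 1)/\absc$. But your choice of $P_2$ is not a valid member of $\mc{P}_{\absc,y,\mcpgamma}$, and the defect cannot be fixed by the $\log n$ hypothesis.

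The proposed likelihood ratio $\frac{dP_2}{dP_1}(x) = 1 - \epsilon(x - 1/\absc)$ is negative for every $x > 1/\absc + 1/\epsilon$, no matter how small $\epsilon > 0$ is; since $\mathsf{Exp}(\absc)$ has unbounded support, no lower bound on $n$ makes $dP_2 \ge 0$. So $P_2$ is not a probability measure. Nor does truncating to $\max\{1 - \epsilon(x - 1/\absc),\, 0\}$ and renormalizing help: then $P_2$ has compact support, so $\E_{P_2}[e^{\absc \rv}] < \infty$, which violates Condition~\ref{item:nonneg-heavy-tail} of Definition~\ref{def:pclass} ($\absc$ is no longer the abscissa). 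Moreover, $P_1$ would not be absolutely continuous with respect to the truncated $P_2$, so $\dkl{P_1}{P_2} = \infty$ and Pinsker's inequality gives nothing.

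The paper avoids both obstructions with a piecewise construction: $P_2$ has density $\absc(1+\omega)e^{-\absc(1+\omega)x}$ on $[0,x_0]$ and $\absc e^{-\absc\omega x_0}e^{-\absc x}$ on $(x_0,\infty)$, with $\omega \asymp 1/\sqrt{n}$ and $x_0 = \absc^{-1}\log n$. The steeper rate on the finite interval produces the $O(\omega)$ shift in the mean and cumulant generating function, hence the $O(1/\sqrt{n})$ separation in $I$, while reverting to rate $\absc$ beyond $x_0$ keeps the abscissa pinned at $\absc$ so that Condition~\ref{item:nonneg-heavy-tail} holds exactly. In that construction, nonnegativity is automatic and the $\log n$ hypothesis serves only to control the residual $e^{-\absc x_0} = 1/n$ appearing in the KL bound $\dkl{P_1}{P_2} \le \frac{1}{2}\omega^2(1 - e^{-\absc x_0})$ and in the separation bound of Lemma~\ref{lemma:md-rate-separated}. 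If you want to retain a linear-perturbation flavor, you would need to splice onto it a tail that decays exactly like $e^{-\absc x}$; the paper's piecewise-exponential tilt is the simplest such device, and it is what makes the membership check in Lemma~\ref{lemma:md-class-inclusion} go through.
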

\noindent Compared to the construction~\eqref{eqn:ld-construction} for
Theorem~\ref{theorem:lower-ld-rate}, the ``hard examples'' $P_1$ and $P_2$
take on a different shape when $\absc y < 2\mcpgamma$. In this regime, we
consider a more standard example of two distributions that differ in their
average-case behavior (when $x \le x_0$) in our proof in
Section~\ref{section:proof-of-lower-md-rate}.

\highlight{ 
The lower bound results shown Theorems~\ref{theorem:lower-ld-rate} and~\ref{theorem:lower-md-rate} 
still hold when we replace $P \in \mc{P}_{\absc,y,\mcpgamma}$ with $P \in \mc{P}_{\absc,y,\mcpgamma,m}$ introduced in 
Definition~\ref{def:pclass-relaxed}
since $\mc{P}_{\absc,y,\mcpgamma} \subseteq \mc{P}_{\absc,y,\mcpgamma,m}$ for any $m \ge 1$.
Combining these with Theorem~\ref{theorem:upper-ld-rate-relaxed}, 
 we   extend our main result~\eqref{eqn:minimax} to the more general
class of distributions $\mc{P}_{\absc,y,\mcpgamma,m}$:
\begin{equation*} 
 \inf_{\what{I}_n} \sup_{P \in \mc{P}_{\absc,y,\mcpgamma,m}}
  \E_P  \left| \what{I}_n - I_y(P) \right|
  \asymp  n^{- \paran{\half \wedge \frac{\mcpgamma}{\absc y}}}.
\end{equation*} 
}

\highlight{
We can also extend the main results in this section to random variables with light-tailed random variables.
In this case, we recover the usual parametric rate. See Section~\ref{section:proof-gaussian-lb} for the proof.
\begin{corollary}
\label{cor:gaussian-lb}
Fix $\bar{\lambda}, y$ such that $\mc{Q}_{\bar{\lambda}, y} \ne \emptyset$ and   $\delta \in (0, \half)$.
Define $c = 2(1-2\delta)$. If $n \ge \paran{\frac{c}{\bar{\lambda}}}^2$, 
\begin{equation*}
\mathfrak{M}_n \defeq
\inf_{\what{I}_n} \sup_{P \in\mc{Q}_{\bar{\lambda}, y}}
\P\left( \left|\what{I}_n(\rv_1^n) - I_y(P)\right| \ge 
 \bar{\lambda}
 \frac{c}{2\sqrt{n}} 
\right)
\ge \delta,
\end{equation*}
where the infimum is taken over the set of $(\rv_1, \ldots, \rv_n)$-measurable
functions.
\end{corollary}
 }


\section{Experiments}
\label{section:experiment}

We empirically demonstrate our approach in two operational problems where it
is critical to evaluate system stability prior to deployment. First, we study
scheduling policies in queueing, where the modeler learns a scheduling policy
based on a simulator and wishes to ensure the deployed policy maintains robust
performance over multiple real-world environments. Second, we turn our
attention to the empirical evaluation of prediction models. We study models
trained to predict healthcare resource utilization based on past and present
data. Since demographic distributions shift over time, we wish to ensure that
predictive performance remains stable in the future.  We adapt our proposed
methodology to be less conservative by considering distribution shift only
over a subset of covariates instead of the usual joint distribution shift over
all randomness. This allows modeling more structured and realistic
distribution shifts that occur in predictive modeling.  In both scenarios, the
proposed stability measure differentiates between brittle vs. robust models,
in contrast to typical average-case performance metrics that do not account
for tail performance. \highlight{Our empirical study also highlights the
  limitation of the proposed approach and directions of future work.}

\subsection{Queueing systems}
\label{section:queueing}

Managing congestion under resource constraints is a core goal in operations
research~\cite{Williams16}. Scheduling policies must maintain robust
performance over varied distribution shifts that occur in real-world queueing
systems~\cite{Koopman72, GreenKo89,GreenKoSv91,EickMaWh93}. In contrast to
typical simplifying assumptions in queueing theory, arrival distributions are
often highly nonstationary~\cite{GlynnZh19}, and demand surges are
frequent~\cite{HuChDo21}. Simultaneously, supply-side disruptions due to
staffing difficulties~\cite{Cook21, GargNa22} are increasingly common, leading
to nonstationary service rates.

We use a G/G/1 queue with multi-class jobs as the primary vehicle for modeling
stochastic workloads, where we consider jobs that incur quadratic costs with
sojourn time~\cite{VanMieghem95}.  Solving for the optimal scheduling policy
is intractable even for a single-server queue with a fixed arrival and service
distribution due to the prohibitively large state/policy
spaces~\cite{PapadimitriouTs99}.  We compare the stability of two
approximation methods that take radically different approaches.  First, we
look to the classical heavy-traffic limit literature that designs effective
policies under highly congested environments~\cite{HarrisonWe89, VanMieghem95,
  Whitt02, HarrisonZe04, MandelbaumSt04}. Under the tractable heavy-traffic
diffusion limit, a simple index-based myopic policy---the generalized c-$\mu$
rule (Gc-$\mu$)---is optimal for convex cost functions~\cite{VanMieghem95}.
The Gc-$\mu$ rule is intuitive and does not require any arrival information:
it serves jobs with the highest index calculated by the product of
instantaneous cost and the service rate. It enjoys a natural adversarial
robustness guarantee under demand surges, although its optimality is only
guaranteed in the heavy-traffic limit.

As an alternative to the classical index-based policy, we study deep
reinforcement learning (DRL) approaches to queueing. DRL methods have recently
achieved remarkable success in games and
robotics~\cite{SilverScSiAnHuGuHuBaLaBo17, GuHoLiLe17}, and can heuristically
solve the scheduling dynamic program through black-box function
approximations. However, they are generally difficult to train reliably due to
unbounded state spaces in queueing systems~\cite{WaltonXu21, DaiGl21}.  They
empirically exhibit high sensitivity to hyperparameters, implementation
details, and even random seeds~\cite{HendersonIsBa18}. Although DRL methods
require significant engineering effort and a large number of simulated samples
to achieve good performance in practice, we demonstrate below that DRL
approaches can effectively optimize queueing performance under a fixed
system dynamics.

We consider a single-server multi-class queueing model operating in a finite
time interval $[0,100]$, which we implement using a custom discrete-event
simulator. We consider three job types $j \in \mathcal{J} = \{1, 2, 3\}$ with
independent interarrival times. Denote by $\lambda_j(t)$ and $\mu_j(t)$ the
arrival and service rates and $\rho_j(t) = \frac{\lambda_j(t)}{\mu_j(t)}$ the
traffic intensity at time $t$.  We consider cost functions $C_j(x)= w_j x^2$
with $w_1=1,w_2=3,w_3=6$ where $x$ is the time a job spent in the system.  We
focus on a scenario where the modeler has access to a calibrated simulator
with independent inter-service times following an exponential distribution
with mean $\frac{1}{\mu_j(t)}$ and independent interarrival distribution
\begin{equation*}
  \half~ \mathsf{Exp}(\lambda_j(t)) + \half~ \left|\normal\left(0,\lambda_j(t)^{-2} \frac{\pi}{2}\right)\right|.
\end{equation*}
Following standard practice, we fix the training distribution to have
stationary arrival and service rates. 

\begin{figure}[t]
  \centering
  \begin{subfigure}[t]{0.45\textwidth}
\centering
\includegraphics[height=0.21\textheight]{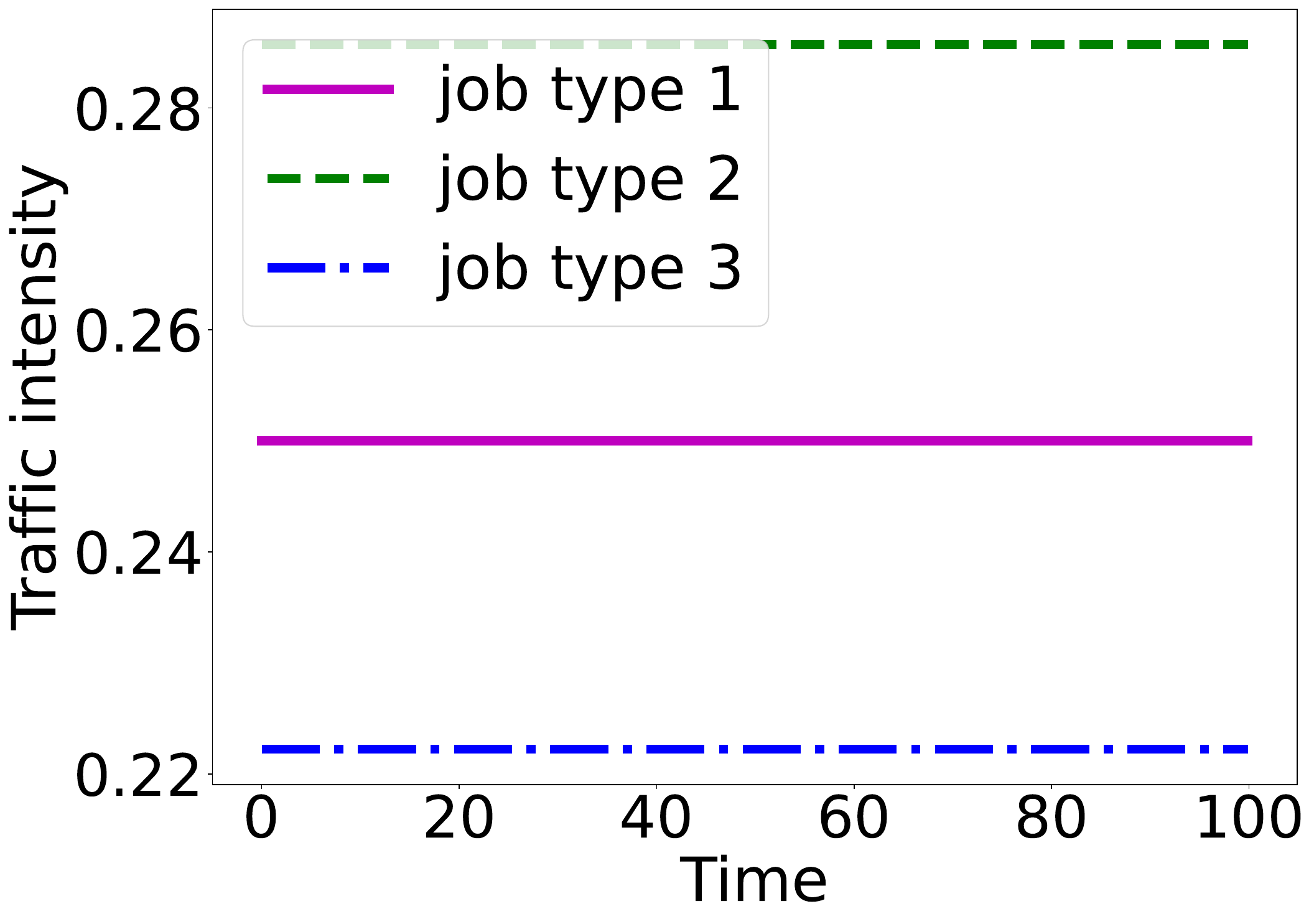}
\caption{Traffic intensities $\rho(t) = \frac{\lambda(t)}{\mu(t)}$
  representing arrival and service rates for each class.}
\end{subfigure}
\hfill
\begin{subfigure}[t]{0.45\textwidth}
\centering
\includegraphics[height=0.21\textheight]{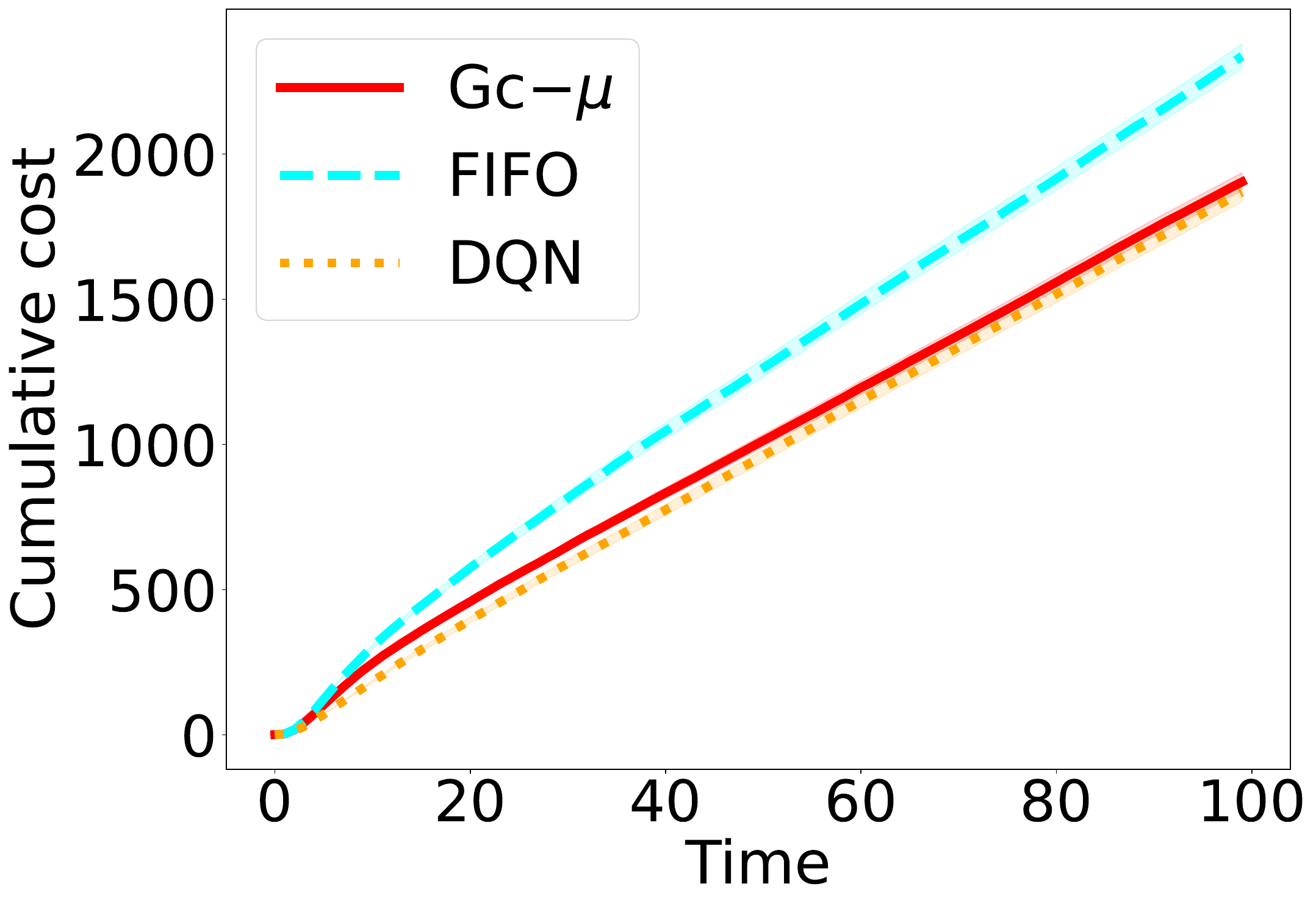}
\caption{Average cumulative costs. Shaded regions denote $\pm 1.96~\times$ standard deviations.}
\end{subfigure}
\caption{Performance of policies on i.i.d. test data}
\label{fig:queue-train}
\end{figure}

We compare three different scheduling policies.  As our first policy, letting
$a_j(t)$ be the age of the oldest type $j$ job at time $t$, the Gc-$\mu$ rule
serves jobs with the highest value of $C_j'(a_j(t))\mu_j(t)$ (breaking ties
arbitrarily). Since the Gc-$\mu$ rule only requires knowledge of service
rates, we assume $\mu_j(\cdot)$ is known as it can easily be estimated from
offline data. Second, we use a Q-learning method where we estimate the value
function using a feedforward neural network (DQN; Deep
Q-Networks~\cite{MnihKaSiGrAnWiRi13}). Deferring implementation details to
Section~\ref{section:detail-DQN}, we note that since the DQN policy is
finetuned to the particular simulation setting---itself often calibrated from
data---its robustness under distribution shift is of substantial
concern. Finally, we consider the first-in-first-out (FIFO) policy as a simple
benchmark.

On the training distribution with stationary arrival and service rates, we
first evaluate the average cumulative cost under each policy over $n=100,000$
sample paths.  Since the DQN model was trained to optimize performance on this
distribution, we observe it achieves the best average-case performance in
Figure~\ref{fig:queue-train}. As distribution shifts and nonstationarities are
common in practice~\cite{Koopman72,GreenKo89, GreenKoSv91,EickMaWh93}, we also
evaluate the stability of each policy using the cumulative cost at the end of
the horizon ($t = 100$).  \highlight{We first assess the validity of our
  estimation strategy. We  again compare our estimator $\what{I}_n$ based on the dual
  reformulation~\eqref{eqn:dual} with the one based on kernel density
  estimation of $\rv$~\eqref{eqn:dual-kde}. 
  Similar to
  Figure~\ref{fig:simulation-convergence-plot}, in
  Figure~\ref{fig:queue-convergence-plot}, we analyze the convergence rate of
  different estimators. For all three decision policies, we observe that our
  dual-based estimator outperforms nonparametric counterparts by multiple
  orders of magnitude.}

\begin{figure}[t]
  \centering
  \begin{minipage}{.55\textwidth} 
  \centering
\includegraphics[height=0.23\textheight]{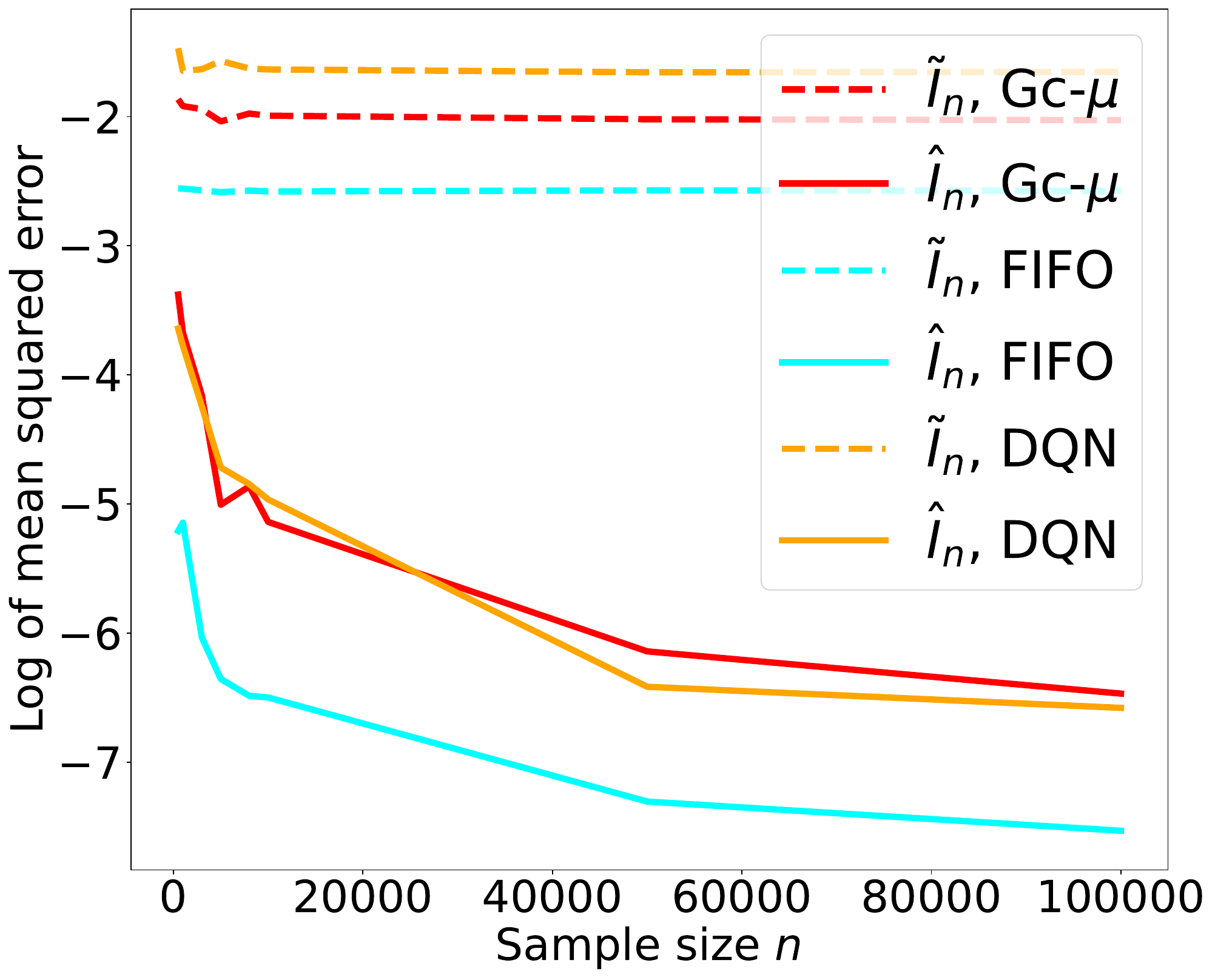}
\caption{\highlight{Mean squared error of $\what{I}_n$ (over 40 runs) for different policies with threshold $y= 3720.66$.
Here for the KDE estimator $\Tilde{I}_n$~\eqref{eqn:dual-kde} we choose the Gaussian kernel function with bandwith $h = 100$.}
}
\label{fig:queue-convergence-plot}   
\end{minipage} 
\begin{minipage}{.44\textwidth}
  \centering 
\includegraphics[height=0.23\textheight]{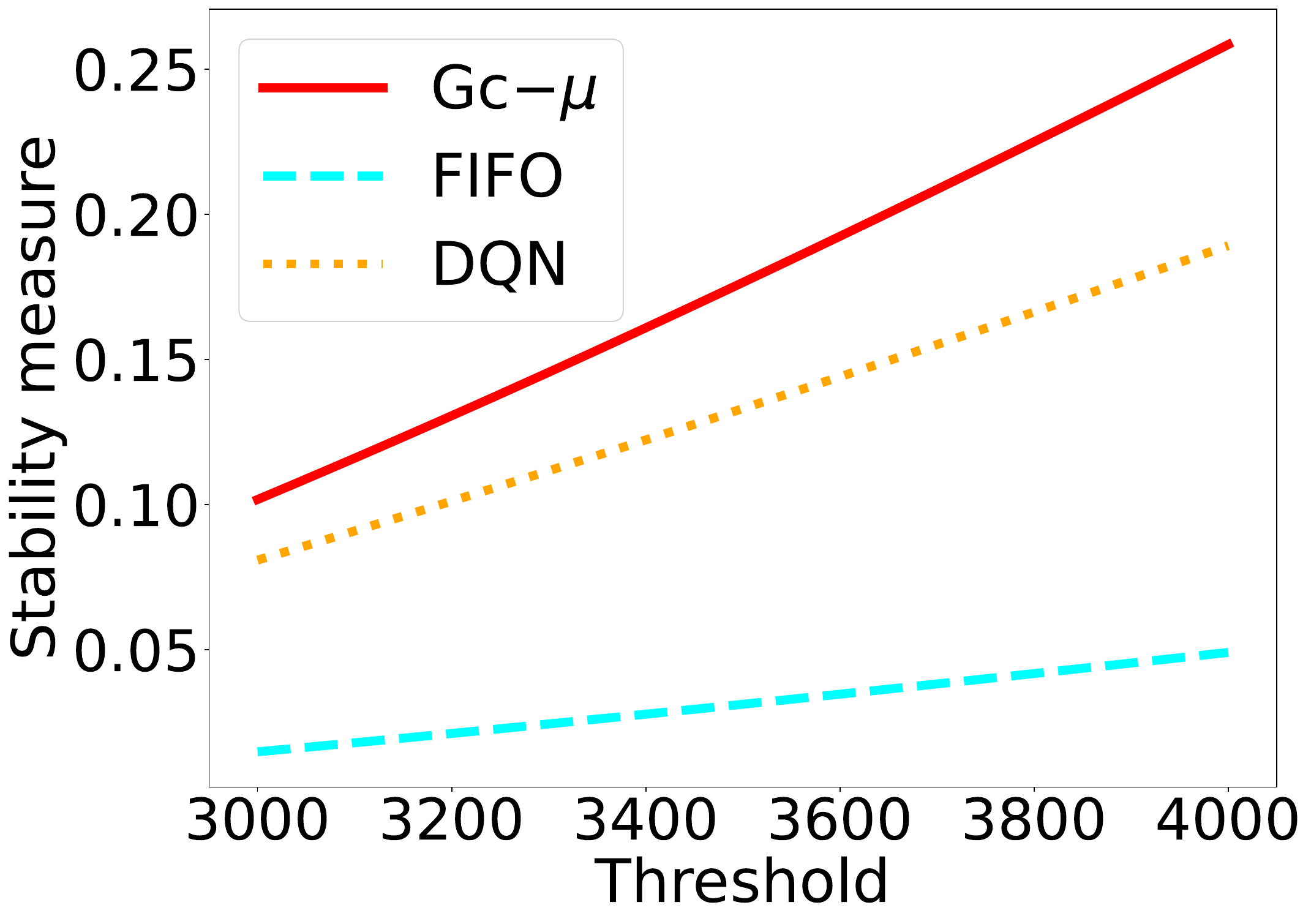}
\caption{Stability $\what{I}_n$ for different policies over a set of thresholds}  
\label{fig:queue-stability-plot}
\end{minipage}%
\end{figure}

Next, we use our dual-based estimator to compare the stability of different
decision policies. To set a common benchmark performance for all policies, we
choose the threshold value $y$ to be twice the mean of the average cumulative
cost under the DQN policy. Despite choosing a threshold value that unfairly
favors the DQN policy---as it had the lowest average cost---in
Table~\ref{table:queue} we observe that the Gc-$\mu$ rule exhibits the highest
stability.  To further examine whether Gc-$\mu$ rule is robust due to the
choice of $y$, we compute $\what{I}_n$ over a wide range of thresholds, and in
Figure~\ref{fig:queue-stability-plot}, we see the gap in stability growing as
$y$ increases.

\begin{table}[H] 
\centering
\begin{tabular}{|l|l|l|l|}
\hline
Policy &  Stability $\what{I}_n$ & Average Cost  \\ \hline
Gc-$\mu$ & 0.21194 & 1928.72    \\  
DQN    & 0.15741 & 1861.79 \\  
FIFO   &  0.03887 & 2345.53 \\ \hline
\end{tabular}
\captionof{table}{Stability with threshold $y= 3720.66$}
\label{table:queue}
\end{table}

The proposed stability measure simulates worst-case distribution shifts by
only using training data. To contextualize our findings, we simulate five
concrete distribution shifts modeling different changes to arrival and service
rates in the same family of distributions as above.
\begin{enumerate} 
\setlength{\itemsep}{1pt}
\setlength{\parskip}{0pt}
\setlength{\parsep}{0pt}
\item Arrival rates linearly increase to certain levels over time. \label{item:queue-shift-1}
\item There is a sudden increase in arrival rates.  \label{item:queue-shift-2}
\item Service rates linearly decrease to certain levels over time.  \label{item:queue-shift-3}
\item Arrival rates linearly increase, and service rates linearly decrease to certain levels over time.  \label{item:queue-shift-4}
\item Arrival rates are cyclical (sinusoidal).  \label{item:queue-shift-5}
\end{enumerate}

\begin{figure}[H]
\centering
\begin{subfigure}[t]{0.45\textwidth}
\includegraphics[height=0.21\textheight]{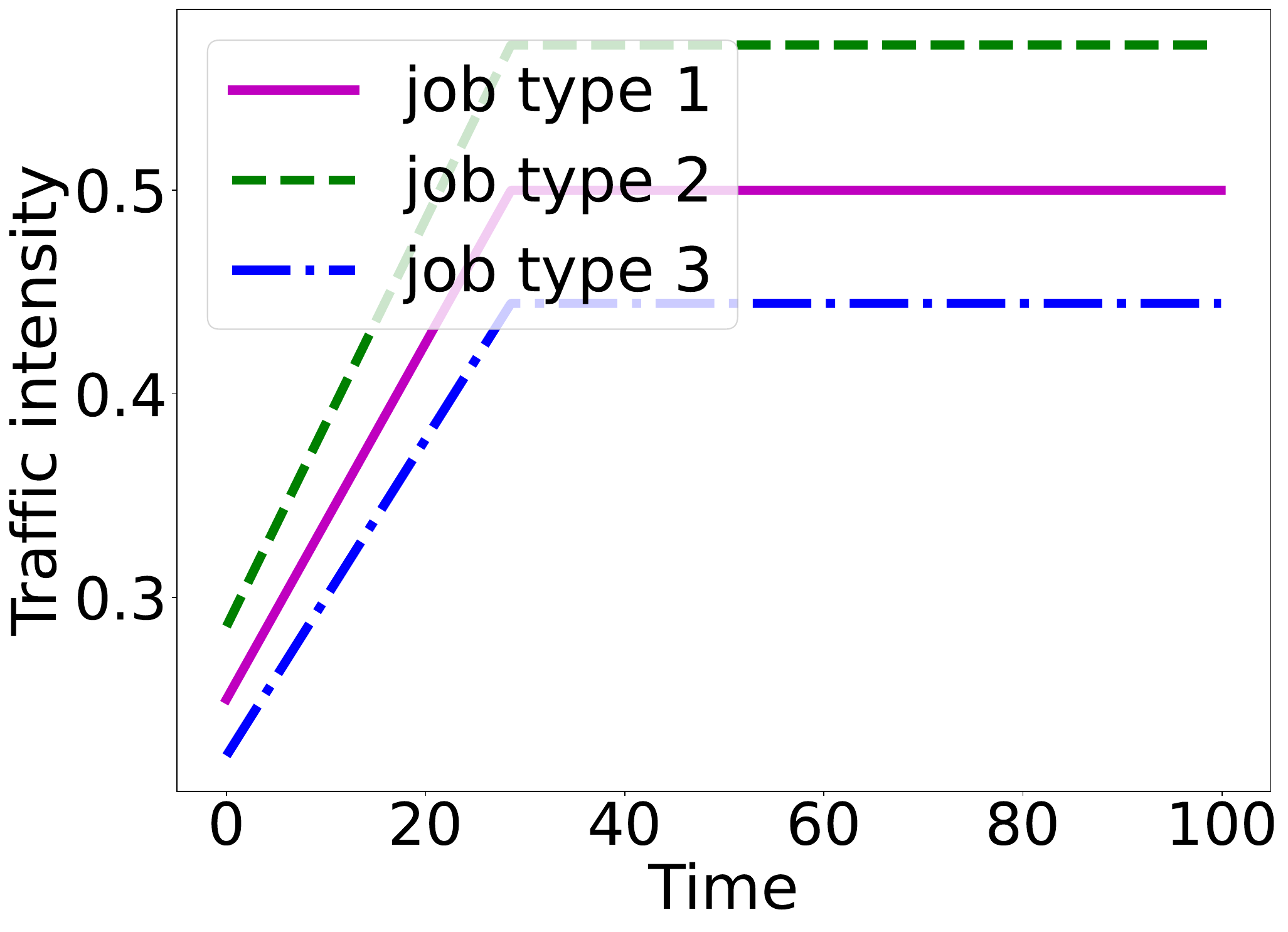}
\caption{Traffic intensities $\rho(t) = \frac{\lambda(t)}{\mu(t)}$   representing arrival and service rates for each class.} 
\end{subfigure}
\hfill
\begin{subfigure}[t]{0.45\textwidth}
\includegraphics[height=0.21\textheight]{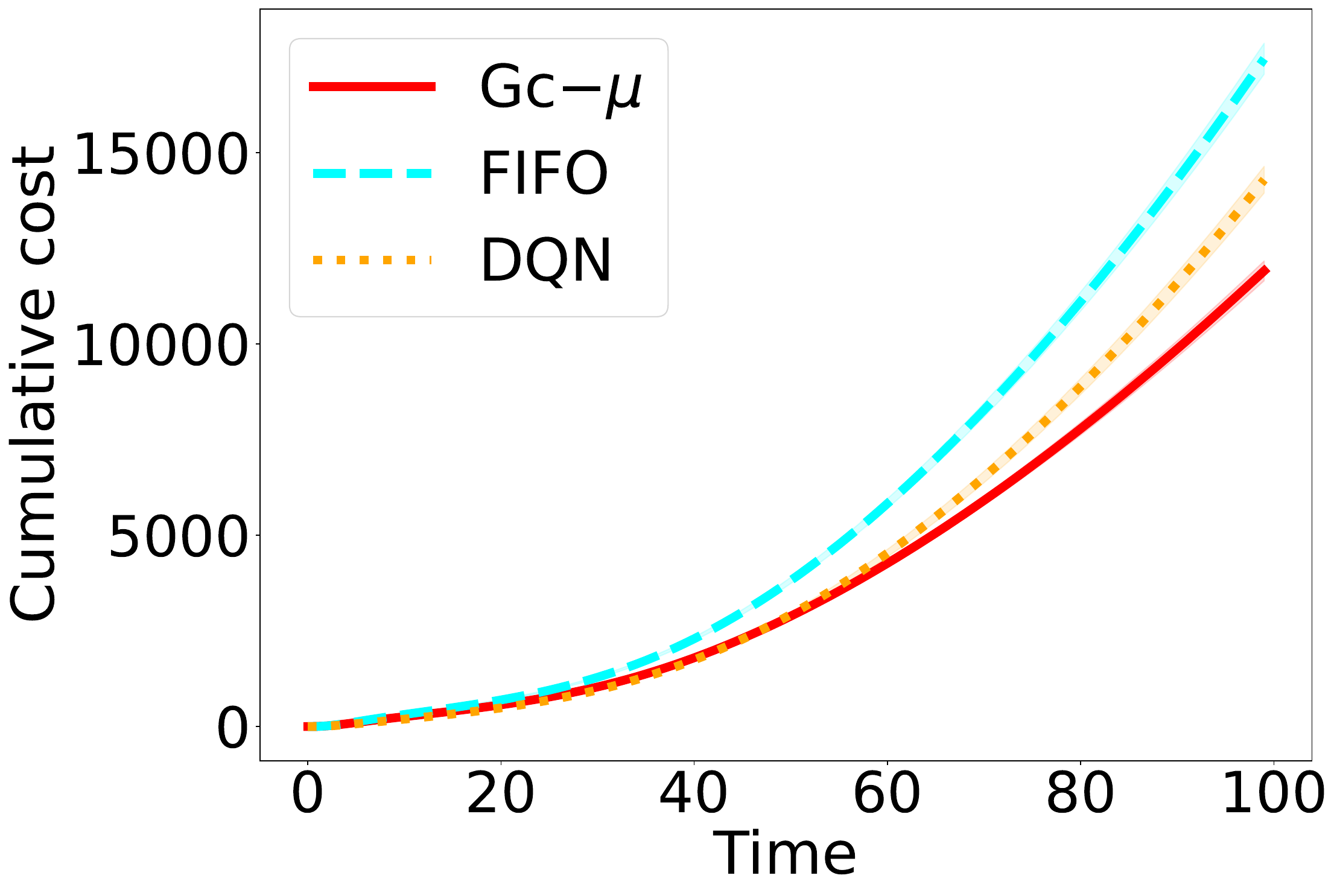}
\caption{Average cumulative costs. Shaded regions denote $\pm 1.96~\times$ standard deviations.} 
\end{subfigure}
\caption{Performance of  policies on data under distribution shift~\ref{item:queue-shift-1}}
\label{fig:queue-shift-1}
\end{figure}

\begin{figure}[H]
\centering
\begin{subfigure}[t]{0.442\textwidth}
\centering
\includegraphics[height=0.21\textheight]{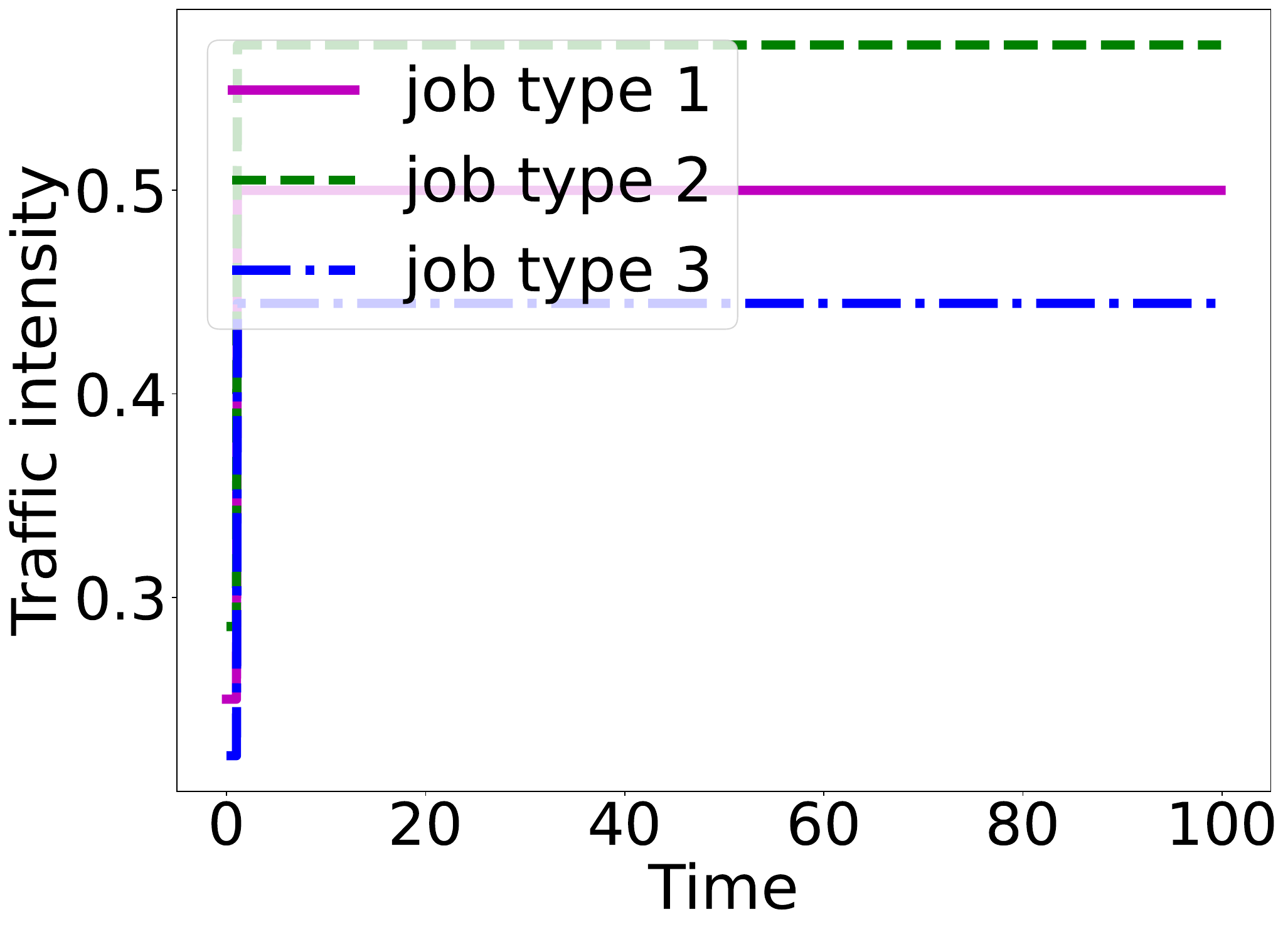}
\caption{Traffic intensities $\rho(t) = \frac{\lambda(t)}{\mu(t)}$   representing arrival and service rates for each class.} 
\end{subfigure}
\hfill
\begin{subfigure}[t]{0.45\textwidth}
\centering
\includegraphics[height=0.21\textheight]{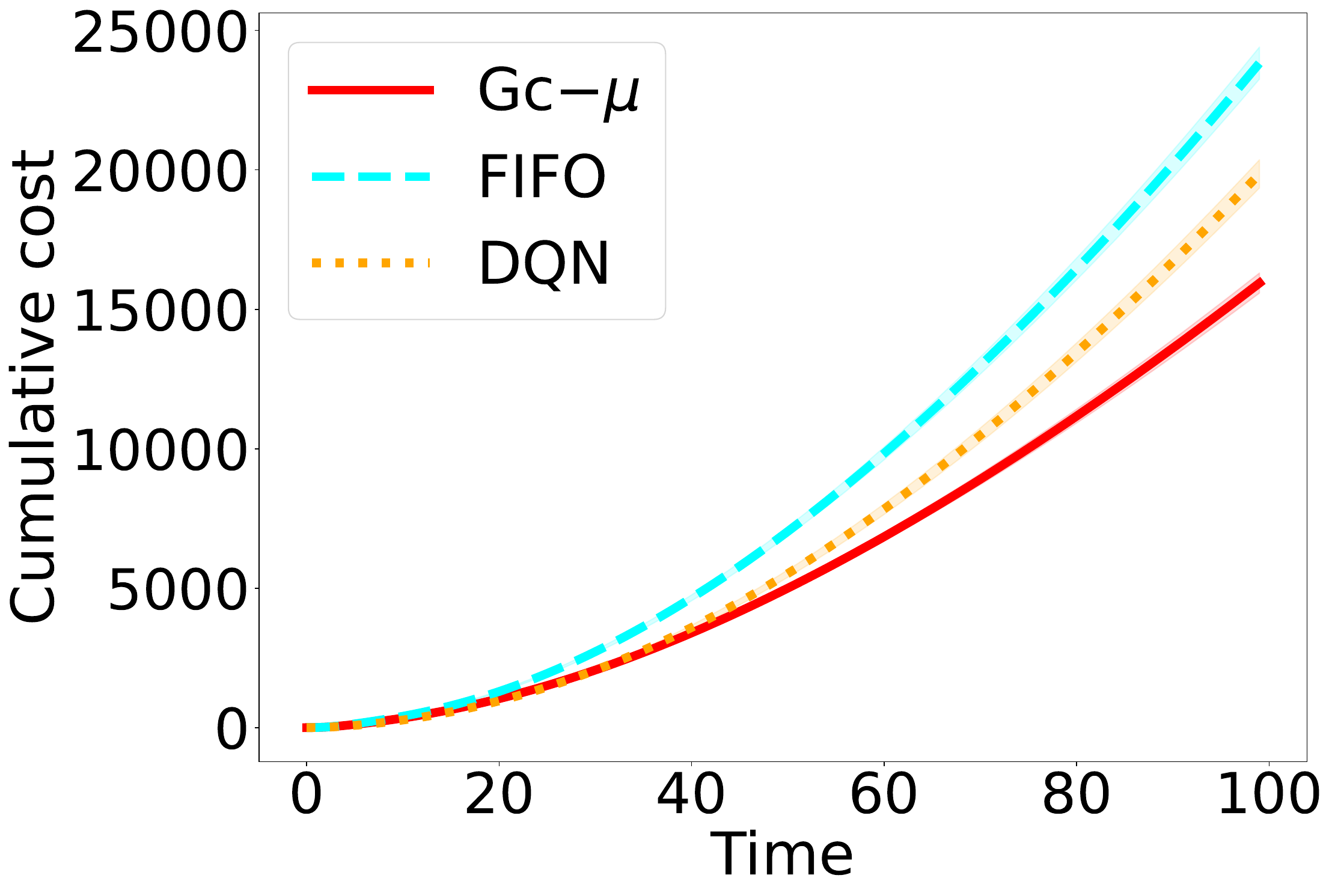}
\caption{Average cumulative costs. Shaded regions denote $\pm 1.96~\times$ standard deviations.} 
\end{subfigure}
\caption{Performance of  policies on data under distribution shift~\ref{item:queue-shift-2}}
\label{fig:queue-shift-2}
\end{figure}

\begin{figure}[H]
\centering
\begin{subfigure}[t]{0.442\textwidth}
\centering
\includegraphics[height=0.21\textheight]{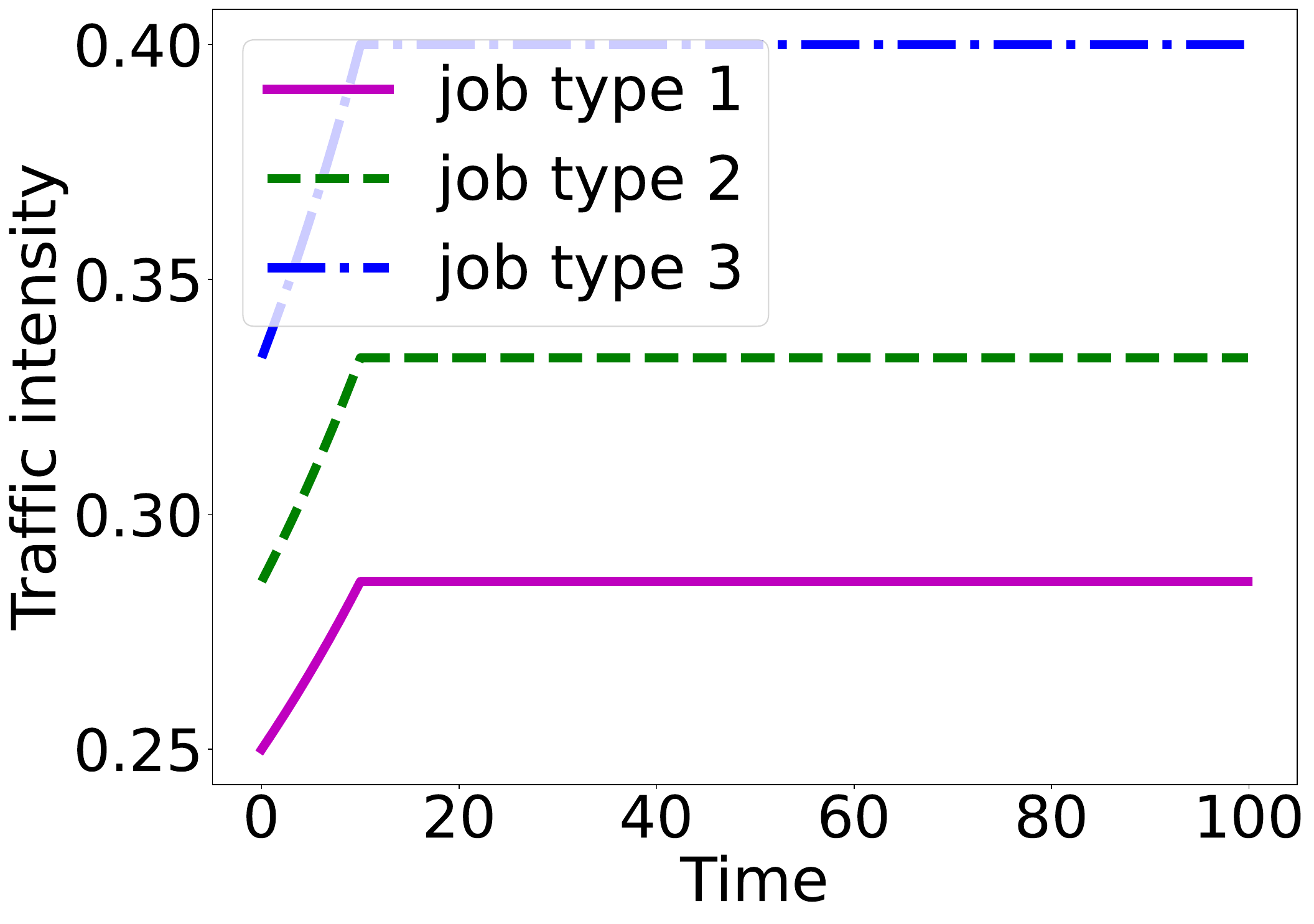}
\caption{Traffic intensities $\rho(t) = \frac{\lambda(t)}{\mu(t)}$   representing arrival and service rates for each class.} 
\end{subfigure}
\hfill
\begin{subfigure}[t]{0.45\textwidth}
\centering
\includegraphics[height=0.21\textheight]{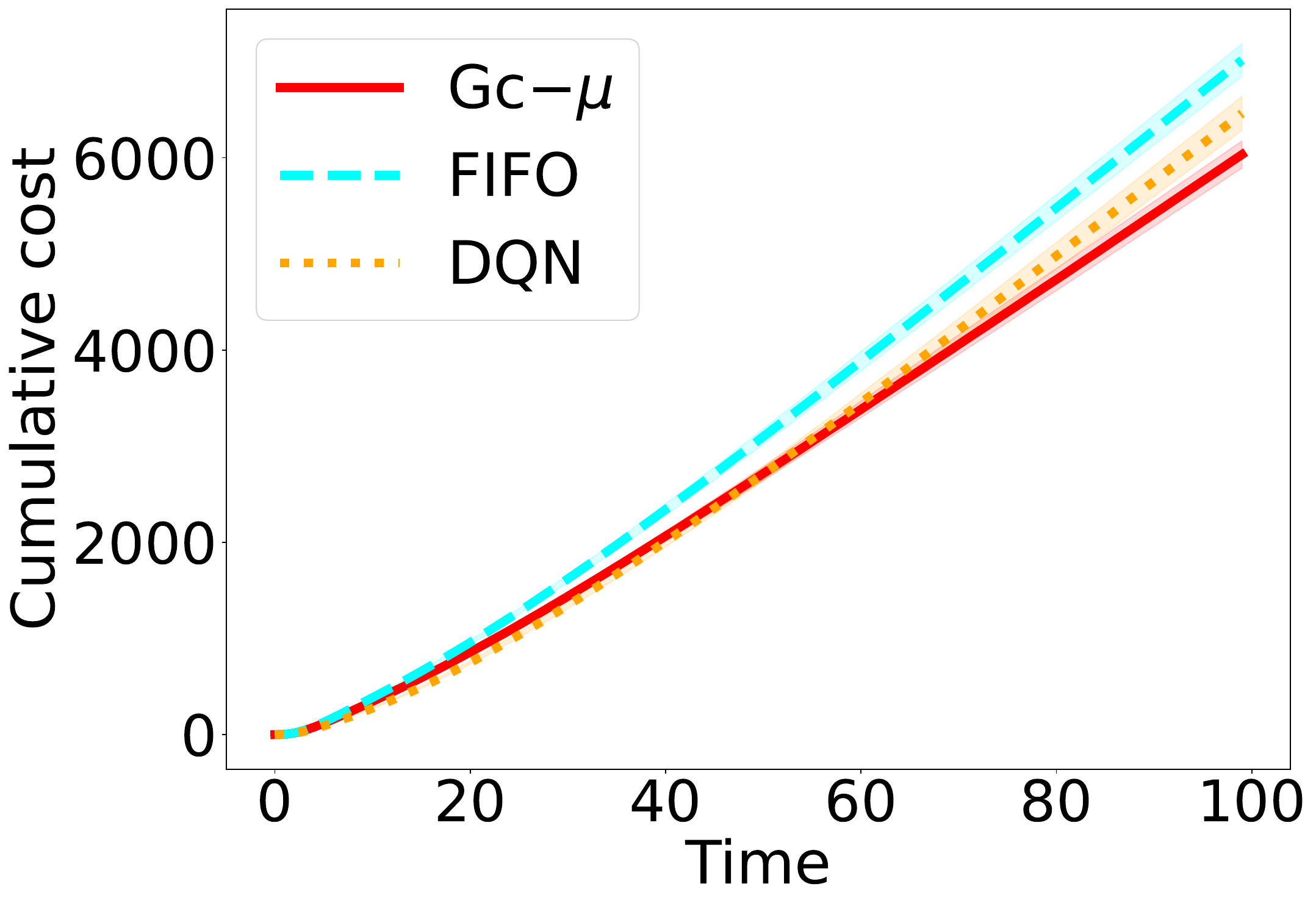}
\caption{Average cumulative costs. Shaded regions denote $\pm 1.96~\times$ standard deviations.} 
\end{subfigure}
\caption{Performance of  policies on data under distribution shift~\ref{item:queue-shift-3}}
\label{fig:queue-shift-3}
\end{figure}

\begin{figure}[H]
\centering
\begin{subfigure}[t]{0.442\textwidth}
\centering
\includegraphics[height=0.21\textheight]{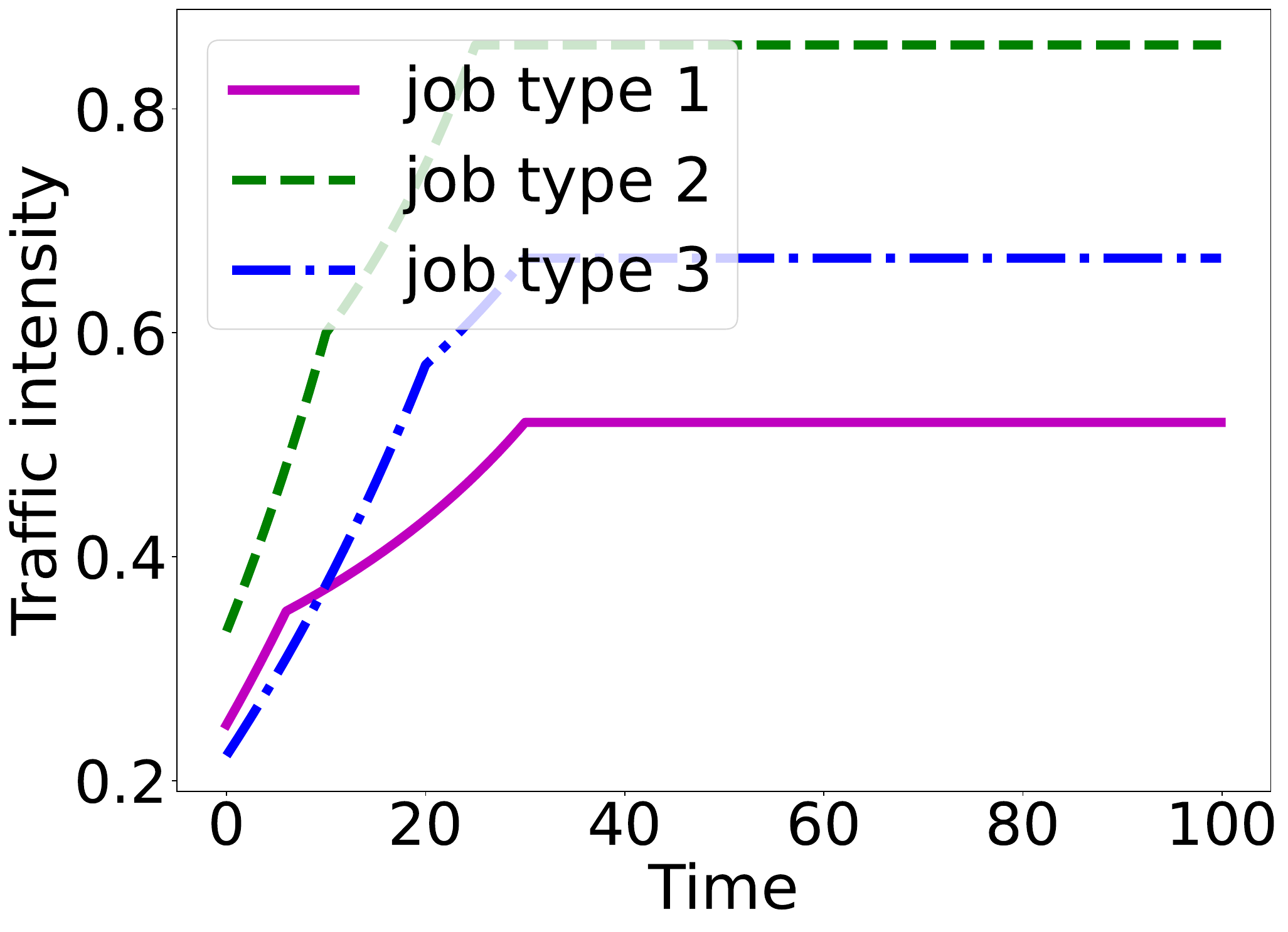}
\caption{Traffic intensities $\rho(t) = \frac{\lambda(t)}{\mu(t)}$   representing arrival and service rates for each class.} 
\end{subfigure}
\hfill
\begin{subfigure}[t]{0.45\textwidth}
\centering
\includegraphics[height=0.21\textheight]{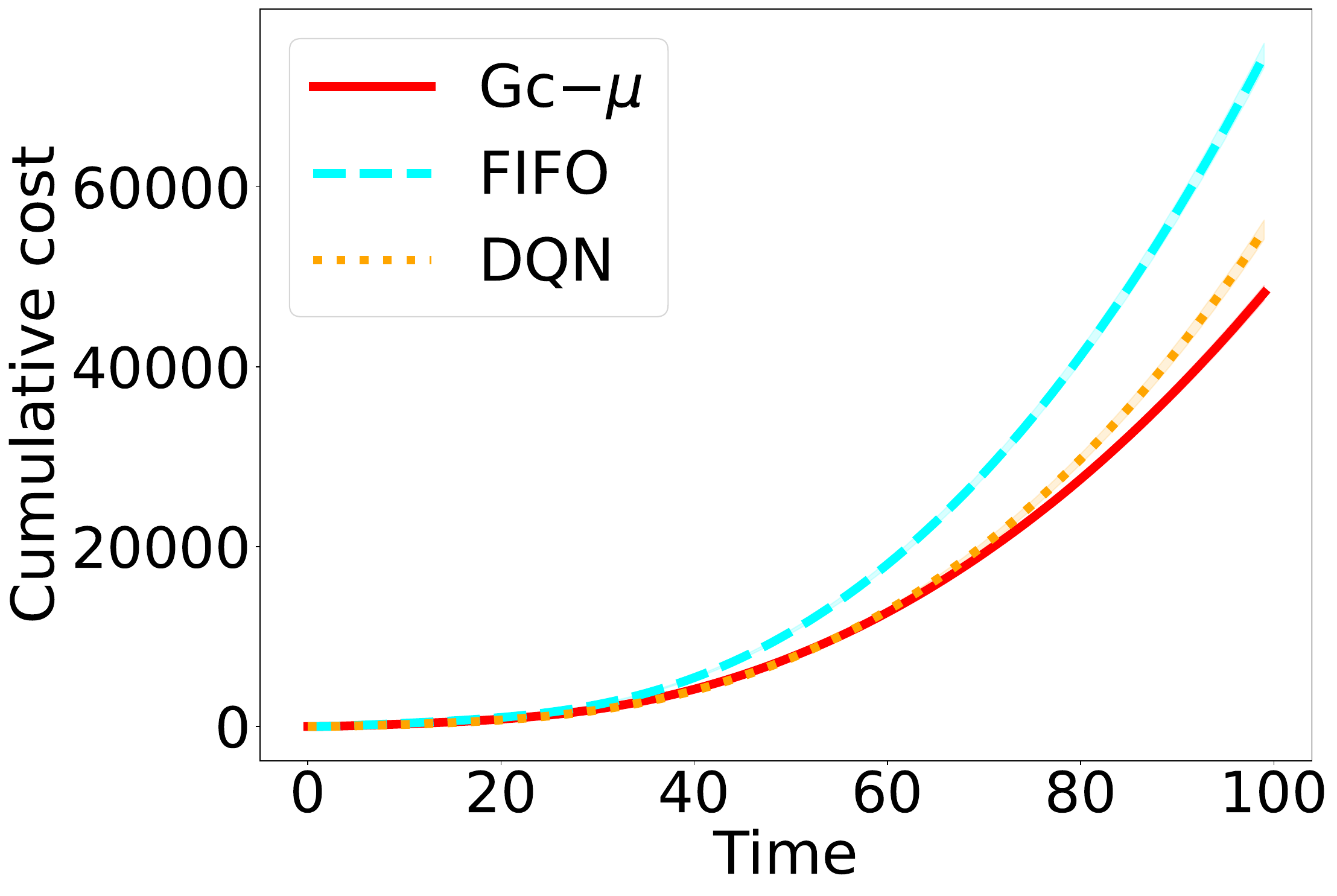}
\caption{Average cumulative costs. Shaded regions denote $\pm 1.96~\times$ standard deviations.} 
\end{subfigure}
\caption{Performance of  policies on data under distribution shift~\ref{item:queue-shift-4}}
\label{fig:queue-shift-4}
\end{figure}

\begin{figure}[H]
\centering
\begin{subfigure}[t]{0.442\textwidth}
\centering
\includegraphics[height=0.21\textheight]{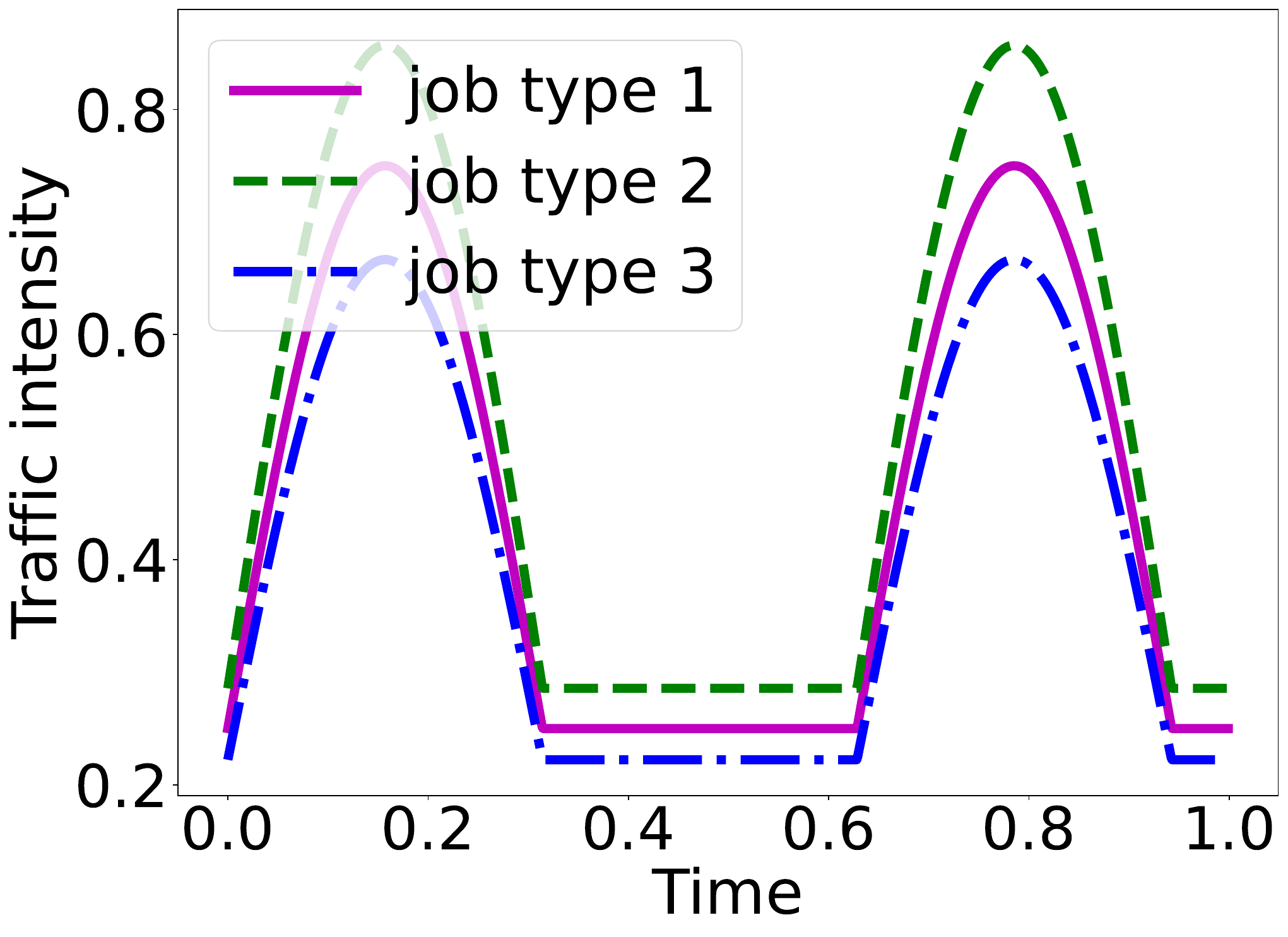}
\caption{Traffic intensities $\rho(t) = \frac{\lambda(t)}{\mu(t)}$   representing arrival and service rates for each class.} 
\end{subfigure}
\hfill
\begin{subfigure}[t]{0.45\textwidth}
\centering
\includegraphics[height=0.21\textheight]{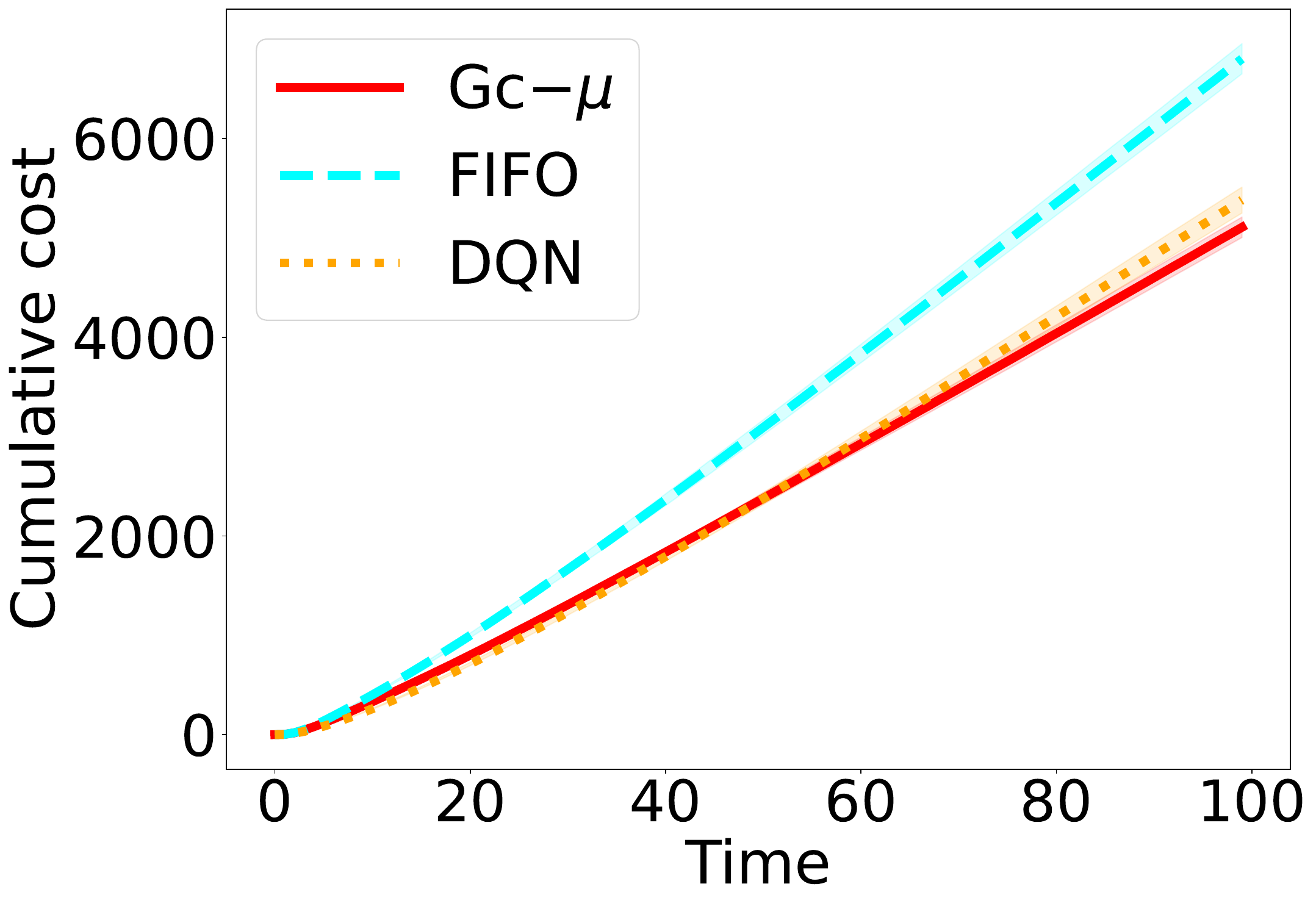}
\caption{Average cumulative costs. Shaded regions denote $\pm 1.96~\times$ standard deviations.} 
\end{subfigure}
\caption{Performance of  policies on data under distribution shift~\ref{item:queue-shift-5}}
{\footnotesize Since we consider cyclical arrival rates in distribution shift~\ref{item:queue-shift-5}, 
we only plot $\rho(t)$ for $t \in [0,1]$. \par}
\label{fig:queue-shift-5}
\end{figure}

To make a fair comparison, we assume that the Gc-$\mu$ rule does not know that
the underlying system has changed and still uses the (possibly incorrect)
original service rate to make decisions. In
Figures~\ref{fig:queue-shift-1}-\ref{fig:queue-shift-5}, we present
performance over 100,000 sample paths on each distribution shift. As suggested
by our stability metric, we observe that the simple index-based policy
exhibits significant robustness benefits over the engineered DQN approach.
\highlight{Our empirical analysis also shows that the stability metric is no
  panacea. When the real distribution shift in question is not aligned with
  the worst-case distribution in the stability measure~\eqref{eqn:primal}, we
  observe the stability measure may not be so informative in informing
  performance differentials.}

\highlight{
\begin{figure}[H]
\centering
\begin{minipage}{.44\textwidth}
  \centering 
\includegraphics[height=0.18\textheight]{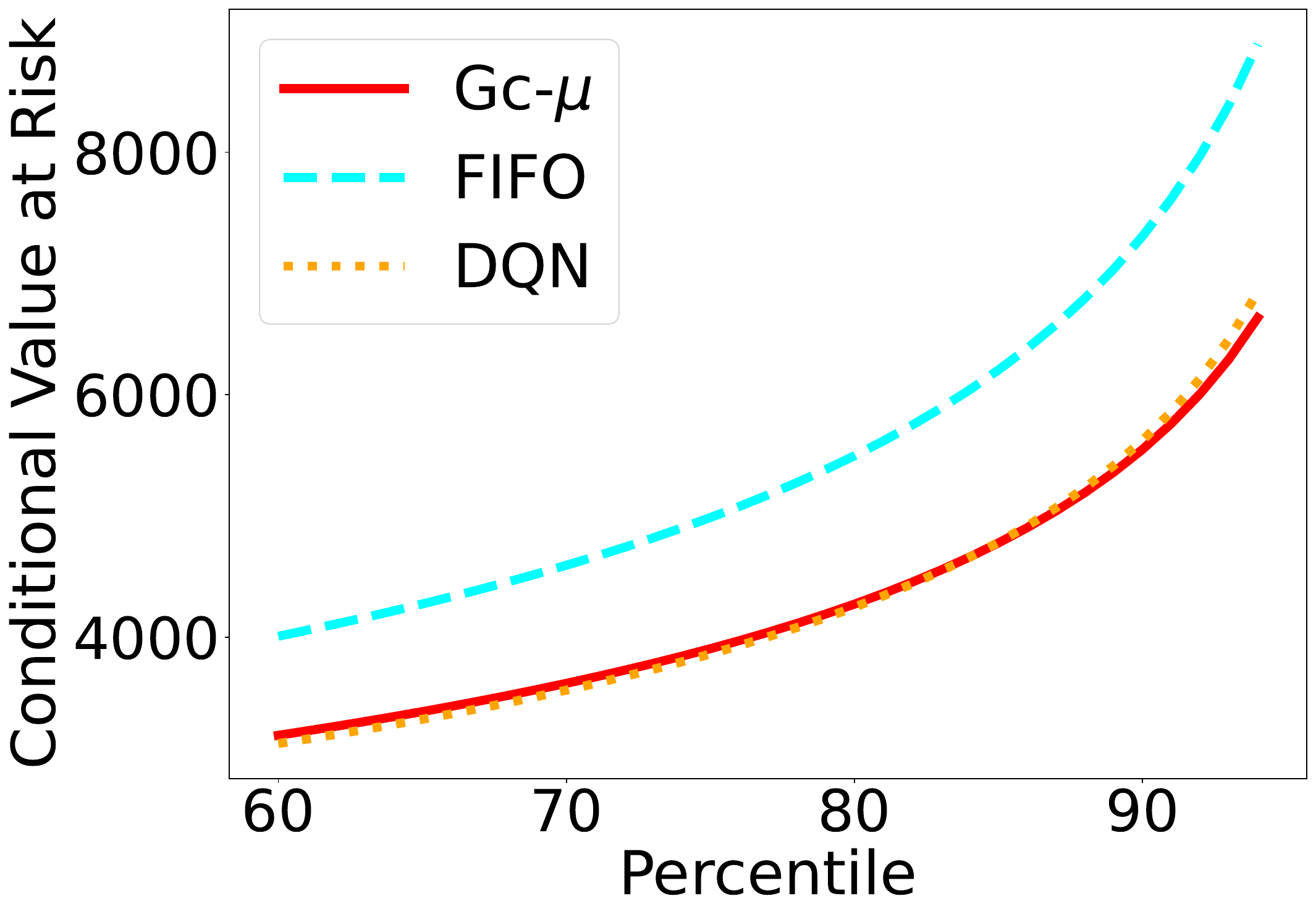}
\caption{CVaR with quantile level $\alpha$ for different policies}  
\label{fig:queue-cvar-plot}
\end{minipage}%
\begin{minipage}{.44\textwidth}
  \centering 
\includegraphics[height=0.18\textheight]{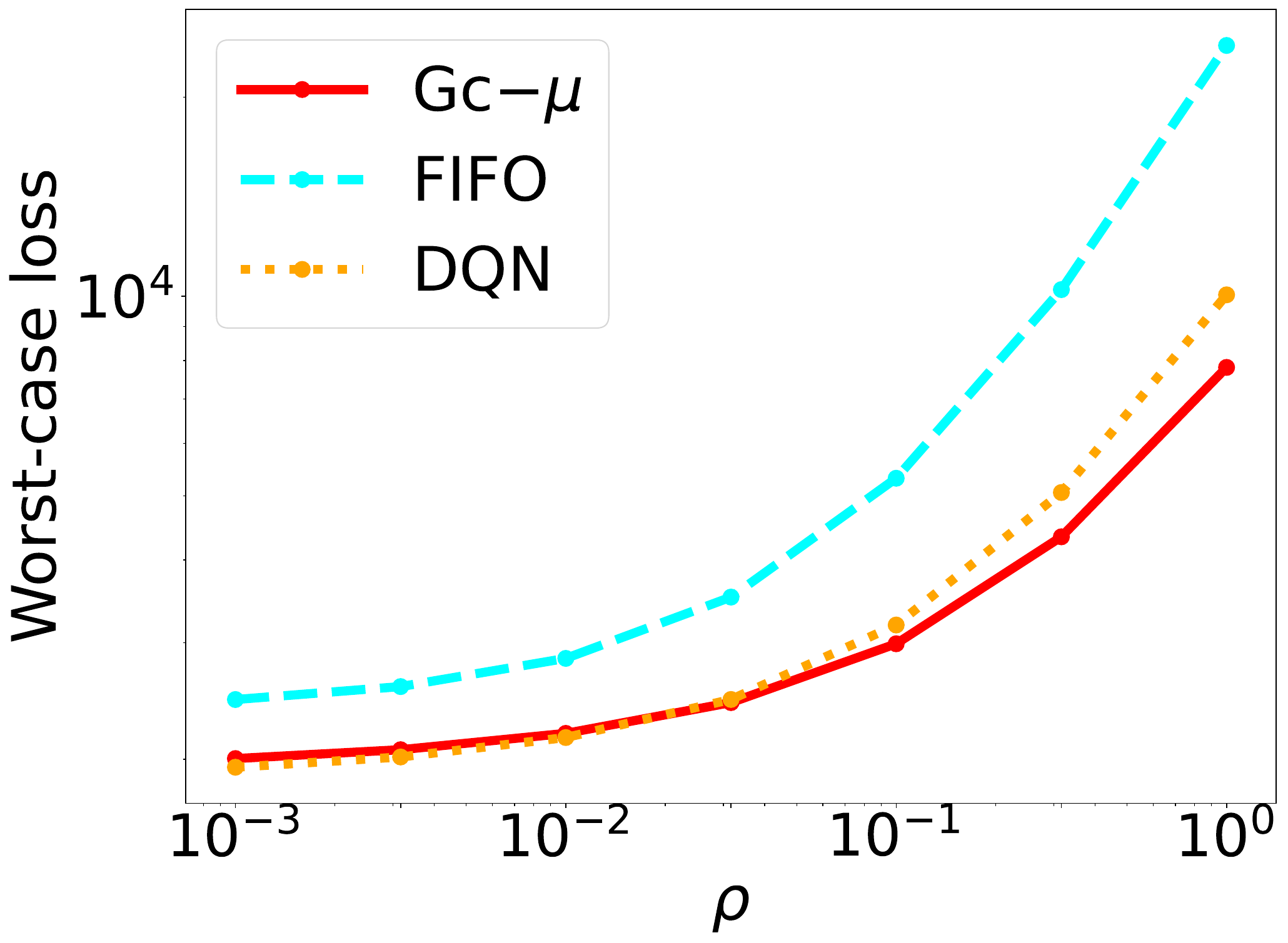}
\caption{Worst-case expectation over a KL neighborhood of radius $\rho$ for
  different policies}
\label{fig:queue-worst-case-expectation-plot}
\end{minipage}%
\end{figure}

Finally, we compare our stability measure against other common
coherent risk measures. We consider distributionally robust
losses---equivalently, coherent risk measures~\cite[Chapter
6]{ShapiroDeRu09}---over the ambiguity set $\mc{P}$
\begin{align*}
  \sup_{Q \in \mc{P}} \E_{Q}[\rv] ~~\mbox{where}~~
  & \mc{P}^{\rm kl}_{\rho} = \{Q: \dkl{Q}{P} \le \rho\}~\mbox{or}\\
  &  \mc{P}^{\rm cvar}_{\alpha} = \set{Q: P = a Q + (1-a) Q' ~~\mbox{for some probability}~Q'~\mbox{and}~ 1 \ge a \ge \alpha}.
\end{align*}
In particular, $\mc{P}^{\rm cvar}_{\alpha}$ gives the popular Conditional
Value-at-Risk (CVaR) parameterized by the percentile $\alpha$.  As we see in
Figure~\ref{fig:queue-cvar-plot}, compared to
Figure~\ref{fig:queue-stability-plot}, our stability measure is better at
differentiating the risk for different policies while CVaR outputs different
rankings in risk with different choices of $\alpha$.  Similarly, in
Figure~\ref{fig:queue-worst-case-expectation-plot}, we estimate the worst-case
expectation under $\mc{P}^{\rm kl}_{\rho}$ using the dual
reformulation~\eqref{eqn:worst-case-exp-kl-duality} and we observe that this
metric applying on the loss gives different robustness rankings for different
choices of $\rho$.  }

\subsection{Health utilization prediction}
\label{section:nhis}
 
As our second example, we analyze a prediction setting where we are interested
in whether an individual is utilizing healthcare services based on a rich set
of demographic features and medical information. Individual-level predictions
can be used to inform a variety of critical operational decisions. Allowing
low-income individuals to utilize healthcare resources is a major policy
goal~\cite{CurrieGr96, LiptonDe15, CurrieFa05, KahendeMaEnZhMoXuSeRo17}, and
individual-level predictions based on reliable ML models can inform targeted
policy-making~\cite{KleinbergLuMuOb15,Athey17}.  Insurance firms can also
decide whether or not to enter a new market based on estimates of the new
market's average utilization~\cite{WeiRaVa15}.

Using the National Health Interview Survey (NHIS), we collate a supervised
dataset focusing on low-income individuals. We are interested in a binary
outcome $Y$ indicating whether the individual visited the doctor's office in
the two weeks before the survey date. We use a rich set of covariates
$X \in \R^d$ ($d=396$) consisting of demographics, earnings, enrollment in
government insurance programs (e.g., Medicaid), medical history, and
pre-existing conditions. We take the viewpoint of an analyst in 2015 who
wishes to train and deploy a prediction model to be used in subsequent
years. The NHIS data contains temporal data up to 2018, allowing us to
simulate distribution shifts occurring in ``future years.''  \emph{We use this
  example to illustrate that stability is a useful complementary measure to
  the usual average-case prediction evaluations, and highlight the
  limitations of our approach.}

The distribution of demographic and socioeconomic factors can significantly
change over time, leading to large covariate shifts~\cite{JeongNa20}.  To
illustrate the flexibility of our stability approach, we consider a scenario
where distribution shift is a major concern only for a subset $Z$ of the
covariates $X$. In contrast to our original framework defined over changes in
the joint distribution of $(X, Y)$, we define the stability measure only over
marginal shifts in $Z$, assuming $(X, Y) | Z$ is fixed.  To start, we focus on
the following list of core variables: gender, years of education, number of
working hours, number of overnight hospital visits, health status, no. home
visits by health professionals in the past 2 weeks, whether the main reason
for not working last week is due to health, whether the individual received
care 10+ times in the past 12 months, months worked last year, limitation
regarding all conditions, whether the individual has no coverage of any type,
family's spending on medical care, months without coverage and number of
nights in the hospital in the past 12 months, and reported health status.  See
Table~\ref{table:NHIS-core-variable-new} in for a detailed description of
these variables.

Formally, let $f(X)$ be a classifier and let $\loss(f(X); Y)$ denote the 0-1
loss. We define our stability measure over the conditional risk
\begin{equation}
  R(Z;f) := \E[\ell(f(X);Y)|Z], \label{eqn:cond-risk}
\end{equation}
considering shifts in the marginal distribution of $Z$. Since $R(Z; f)$ is
never observed, we estimate it using an \emph{auxiliary} black-box machine
learning model for each candidate prediction model $f(\cdot)$ we wish to
evaluate. Then, we compute our stability estimator $\what{I}_n$ based on the
approximate data $\{R(Z_i; f)\}_{i=1}^n$.  We evaluate the stability of
several different prediction modeling frameworks (model classes $\mc{F}$),
with a particular focus on tree-based ensemble methods~\cite{HastieTiFr09} as
they provide state-of-the-art performance on tabular
data~\cite{GrinsztajnOyVa22}. We train logistic regression and random forest
classifiers using the $\mathsf{scikitlearn}$ package and train a gradient-boosted decision tree using the $\mathsf{LightGBM}$ package; for all model
classes, we use $0.5$ as the classification threshold and use standard
five-fold cross-validation to select hyperparameters. See
Section~\ref{section:nhis-table} for implementation details.

\begin{figure}[t]
\centering  
\includegraphics[height=0.18\textheight]{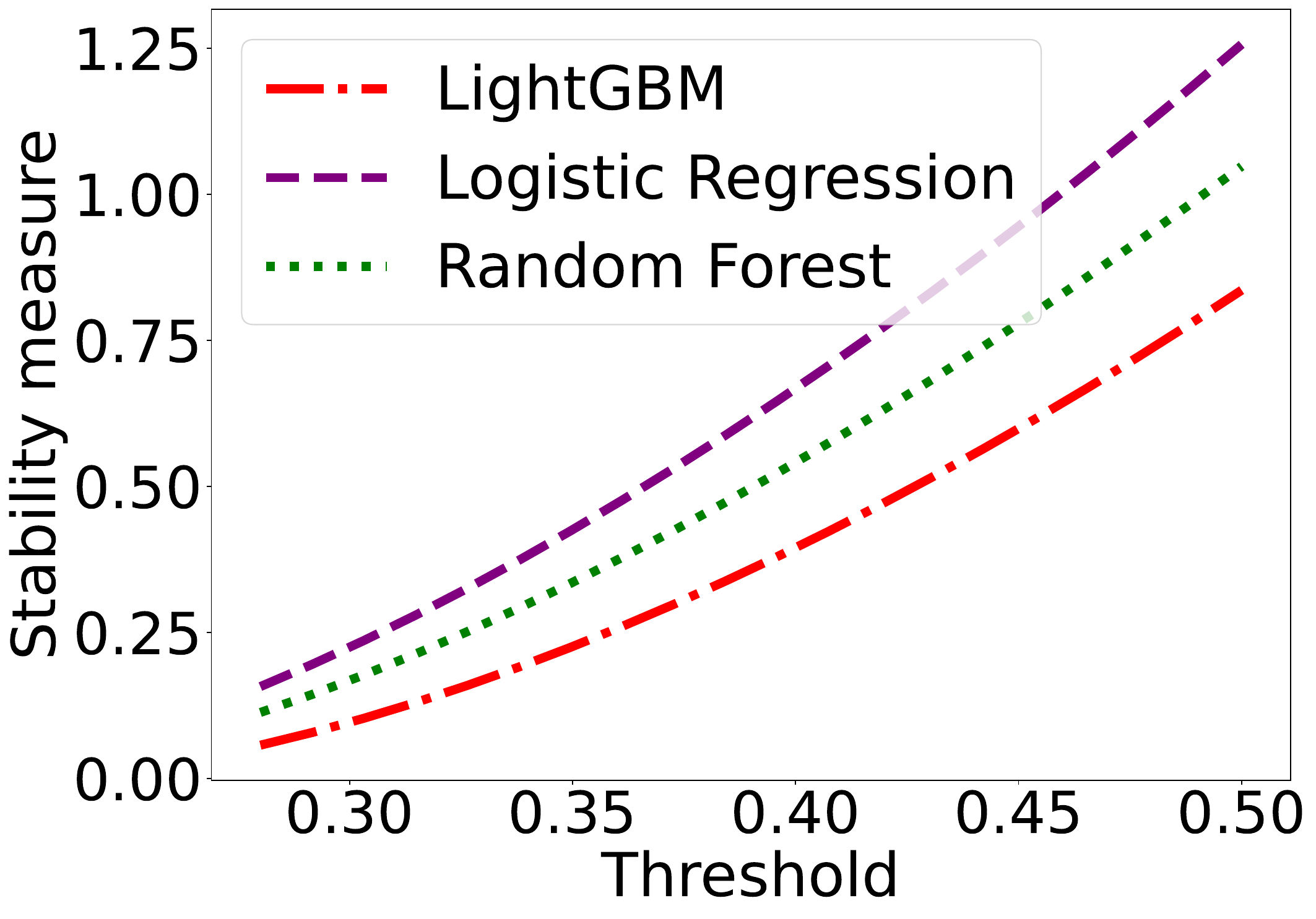}
\caption{Stability $\what{I}_n$ for different models over a set of thresholds}
\label{fig:nhis-stability-plot-new-Z}
\end{figure}

We see that models perform similarly according to average test error:
LightGBM: 14.87\%, Logistic regression: 15.03\%, Random forest: 15.29\%.  The
LightGBM model performs marginally better than the other two models. Going
beyond this standard average-case comparison, we evaluate the stability of
each model under distribution shift, i.e., assess how much the distribution of
$R(Z;f)$ must change until the error rate degrades above the threshold
$y$. For simplicity, we focus on a threshold value $y$ that is approximately
thrice the test error of LightGBM; this again unfairly favors the LightGBM
model, which has the best average error rate. In a stark reversal compared to
the average-case evaluations, we observe that the stability of the LightGBM
model is significantly worse than that of the other two models.  Indeed, in
Figure~\ref{fig:nhis-stability-plot-new-Z}, we observe that LightGBM remains
the least stable model for different thresholds.  In contrast, if this plot
outputs that LightGBM is only stable for certain thresholds, the analyst can
document this information, and stakeholders can take it into account for
decision-making.  Even if the analyst has an exact threshold for model
comparison, computing stability around this threshold and making a plot
similar to Figure~\ref{fig:nhis-stability-plot-new-Z} can provide further context.
\highlight{
Similar to the queueing example, we compare the stability measure with other risk measures
in Section~\ref{section:nhis-table}.
}

\begin{figure}[t]
\begin{subfigure}[t]{0.48\textwidth}
\centering
\includegraphics[height=0.21\textheight]{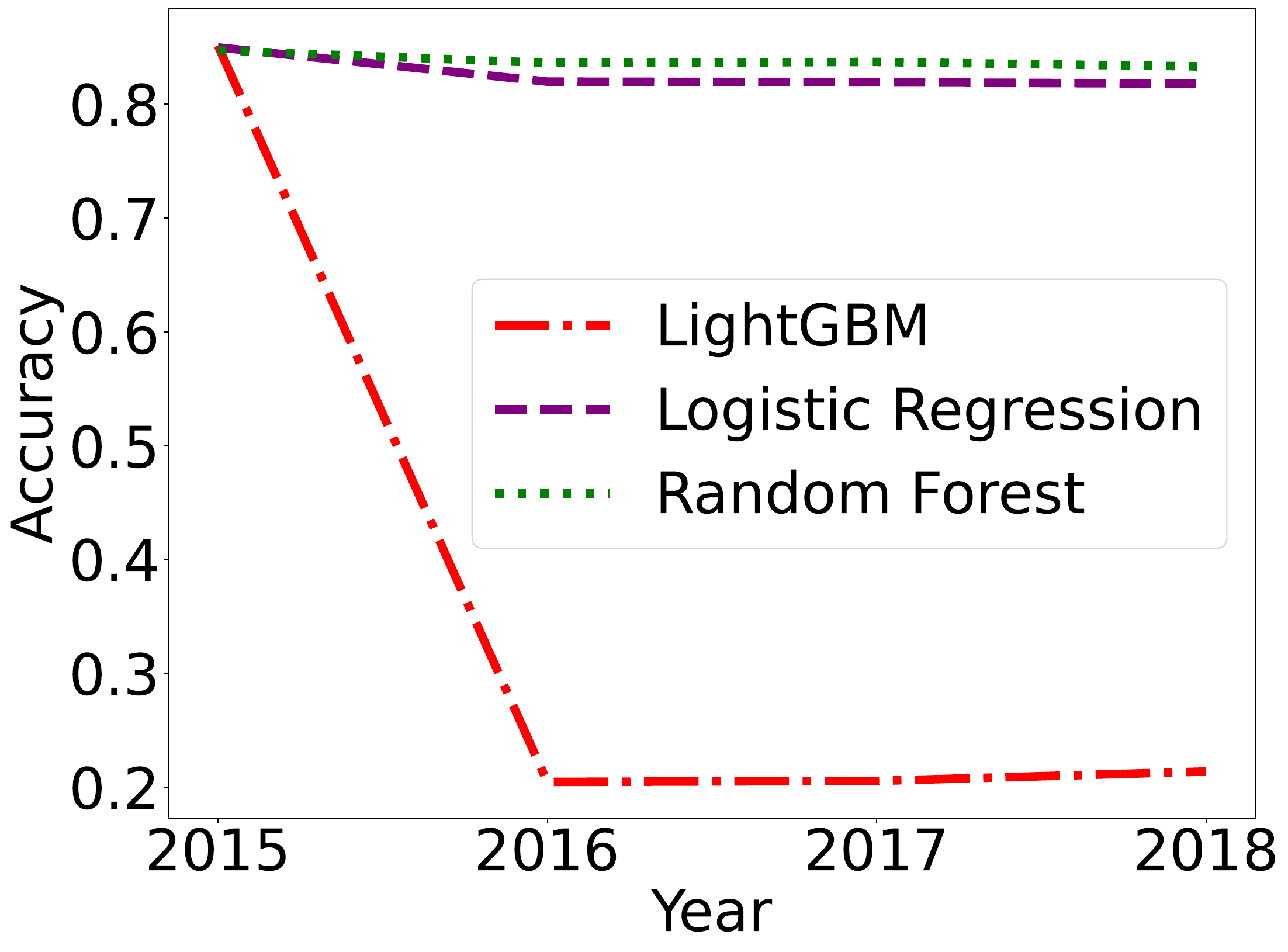}
\caption{Accuracy across time} 
\end{subfigure}  
\begin{subfigure}[t]{0.48\textwidth}
\centering
\includegraphics[height=0.21\textheight]{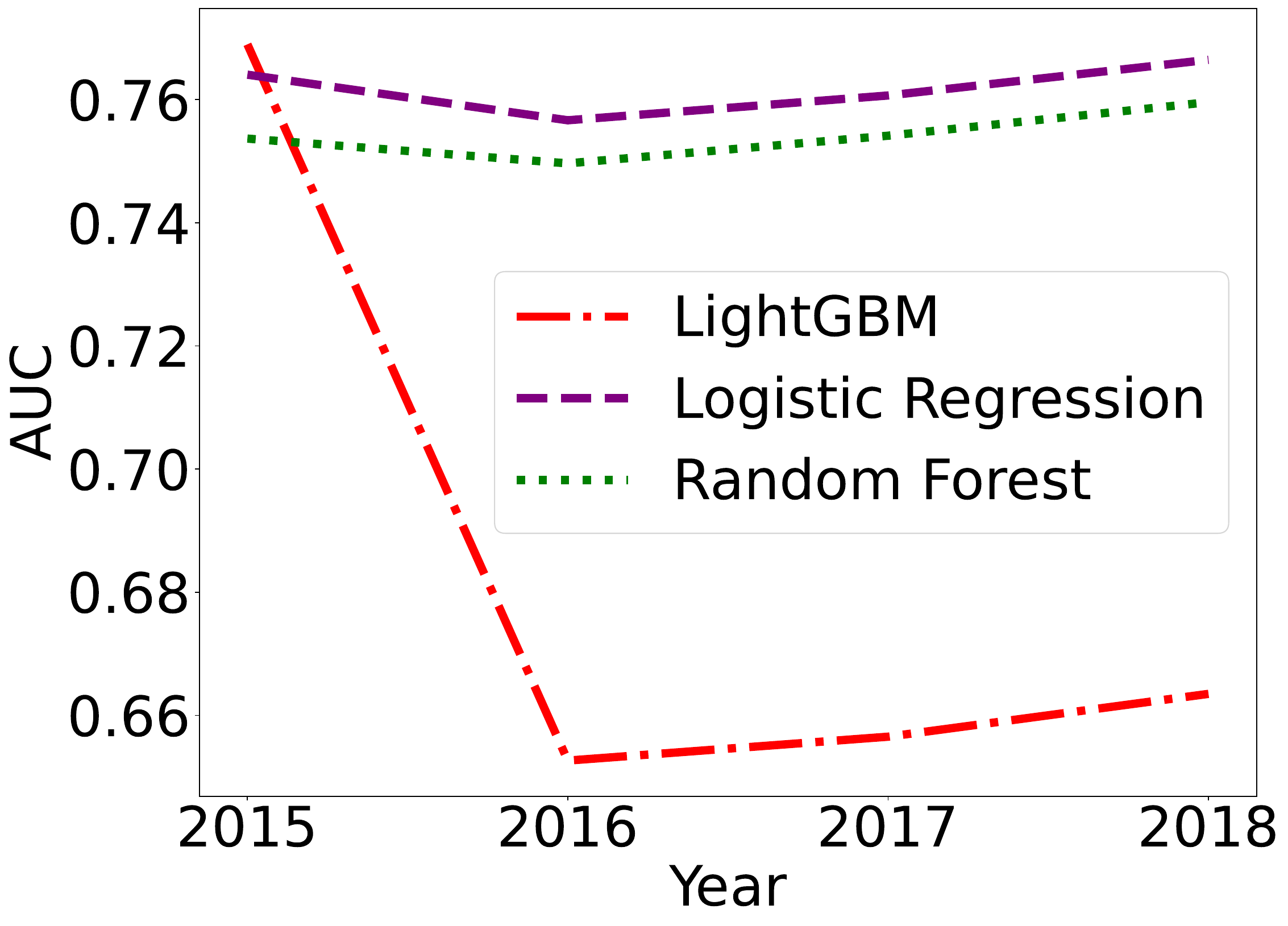}
\caption{AUC across time} 
\end{subfigure}
\caption{Model performance in ``future'' years}
\label{fig:NHIS-auc-and-accuracy}
\end{figure}

\begin{figure}[H]
\centering
\begin{subfigure}[b]{0.6\textwidth}
\centering
\includegraphics[height=0.25\textheight]{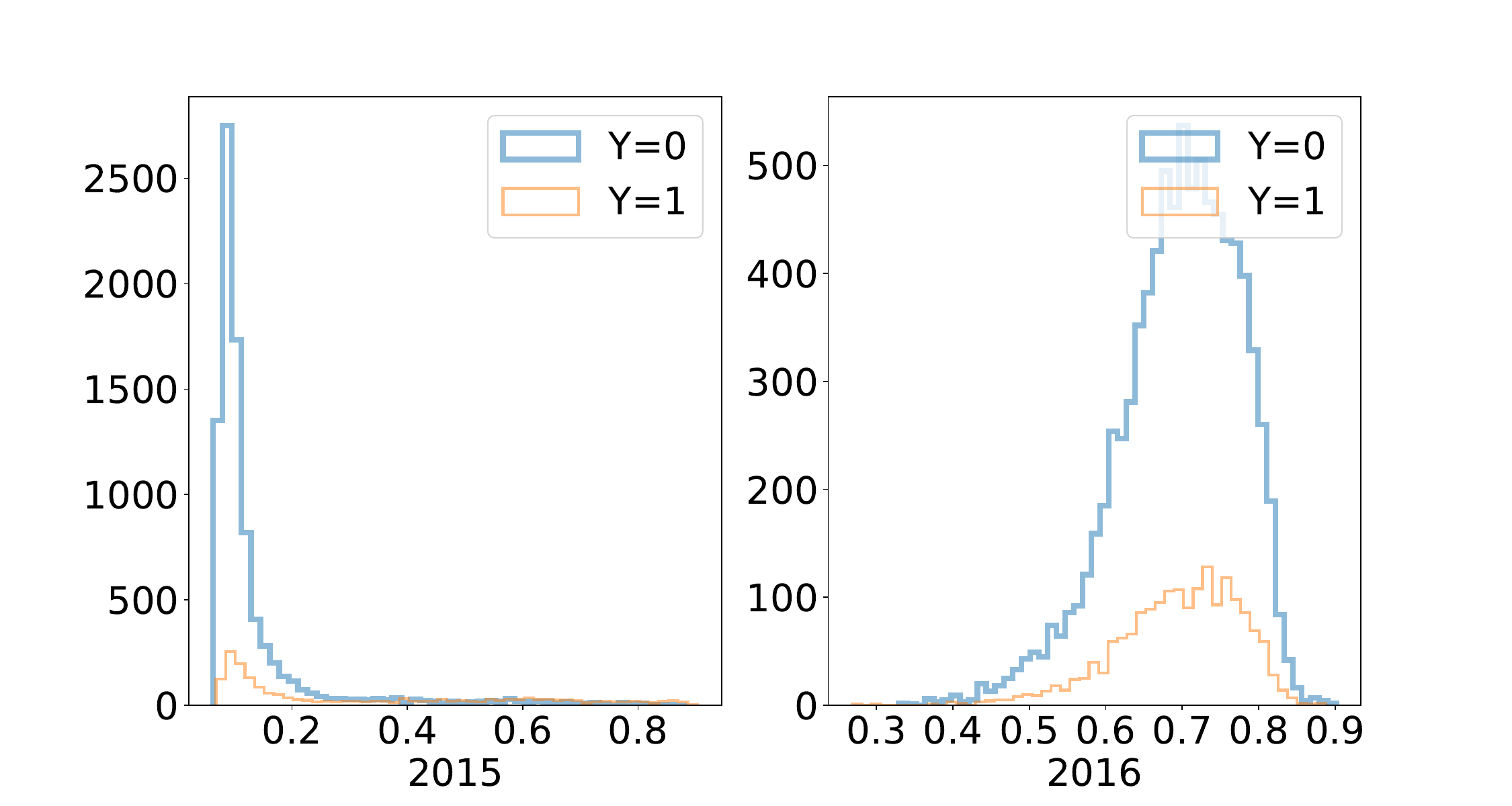}
\caption{LightGBM} 
\end{subfigure}
\hfill
\begin{subfigure}[b]{0.6\textwidth}
\centering
\includegraphics[height=0.25\textheight]{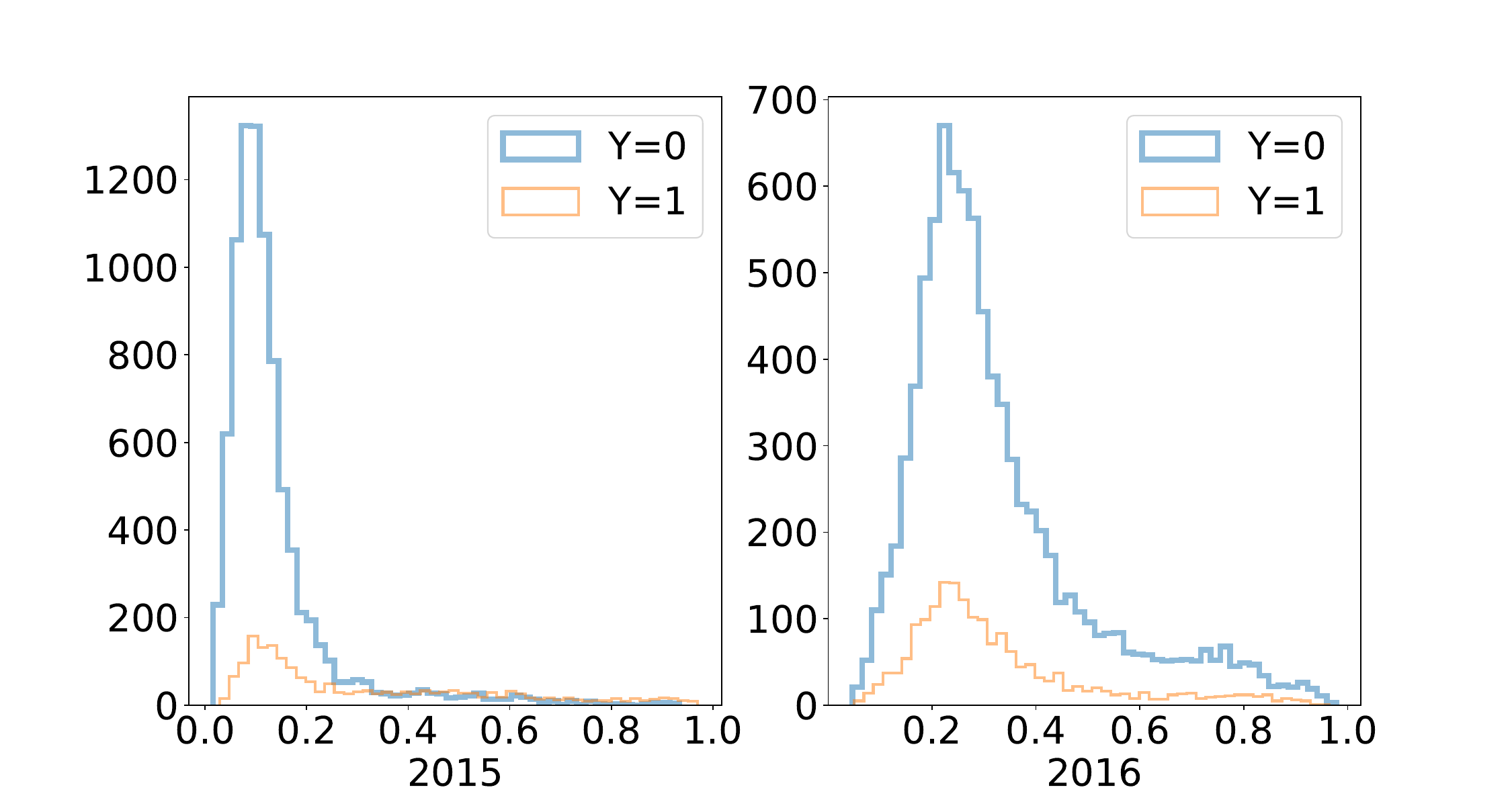}
\caption{Logistic Regression} 
\end{subfigure}
\hfill
\begin{subfigure}[b]{0.6\textwidth}
\centering
\includegraphics[height=0.25\textheight]{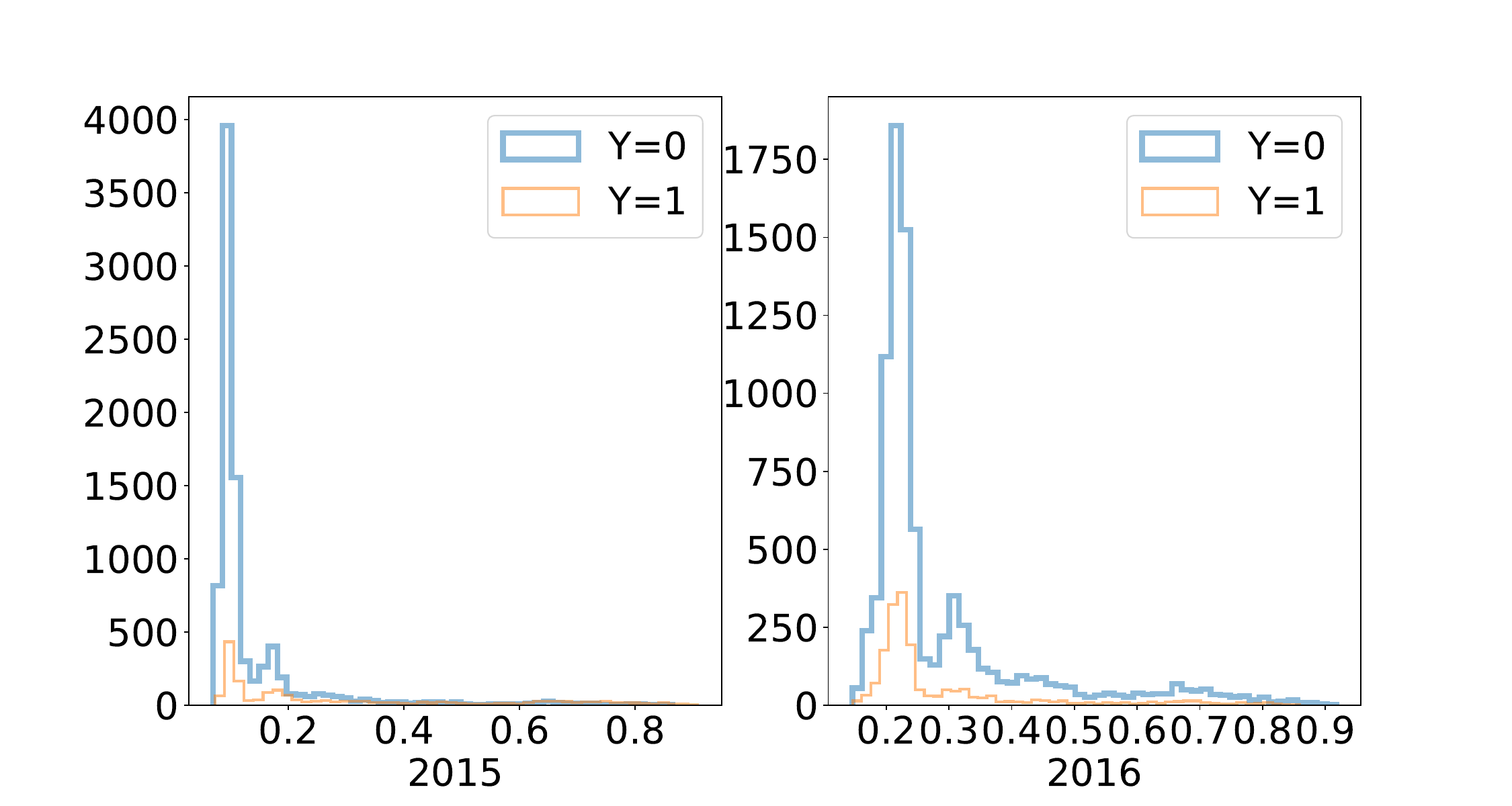}
\caption{Random Forest} 
\end{subfigure}
\caption{Histogram of calibrated probabilities for three models across time}
\label{fig:NHIS-predicted-probabilities}
\end{figure}

To complement our stability evaluations based on worst-case distribution
shifts, we evaluate the performance of each prediction model on ``future
data.''  In Figure~\ref{fig:NHIS-auc-and-accuracy}a), we observe that while
the performance of the random forest and logistic regression models remain
stable, the performance of the LightGBM model substantially degrades over
time. The large performance gap is particularly unexpected given
near-identical test accuracy on 2015 data. To assess whether the issue is
simply due to the classification threshold---recall that we had simply set it
to 0.5 for all models---we evaluate models using the AUC in
Figure~\ref{fig:NHIS-auc-and-accuracy}b) and find similar trends: although
adjusting classification thresholds can mitigate instability, the LightGBM
model still performs poorly under threshold-invariant AUC evaluations.  To
understand the severe performance degradation of the LightGBM model, we
perform further qualitative analysis. We find that the distribution of
(calibrated) classification probabilities changes drastically for the LightGBM
model across different years in Figure~\ref{fig:NHIS-predicted-probabilities},
whereas the shift for the two other models is milder. (We see similar trends
persisting across other years.) We observe a large distribution shift in the
number of home visits by healthcare professionals over time, which the
LightGBM model overly relies on for prediction.

\begin{figure}[t]
\centering  
\includegraphics[height=0.21\textheight]{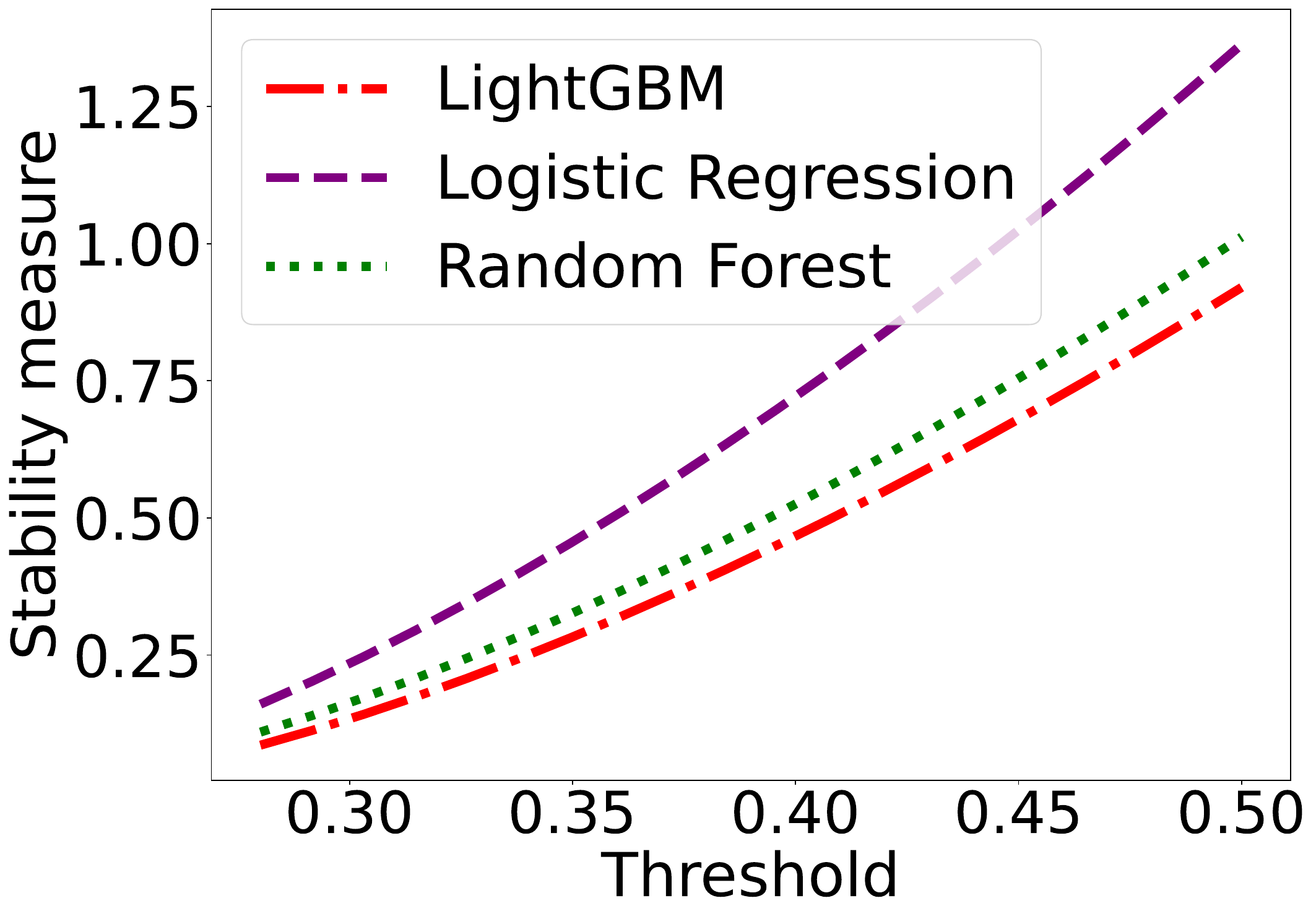}
\caption{Stability $\what{I}_n$ for different models over a set of thresholds}  
\label{fig:nhis-stability-plot}
\end{figure}

Overall, our stability analysis allows surfacing the flaws of the LightGBM
model; it is difficult for the analyst to realize these flaws merely using the
average accuracy and AUC on held-out test data from 2015.  \highlight{ 
Our empirical analysis also highlights the limitations of our approach.  
  We  carry out the same procedure to compute the stability measure described in
  Algorithm~\ref{algorithm:nhis} but with a different set of core variables
  $Z$.  For concreteness, we focus on the following list of ten core
  variables: gender, Medicaid enrollment status, years of education, health
  status, number of working hours, number of overnight hospital visits, number
  of home visits by healthcare professionals, employment, and retirement
  status. In Figure~\ref{fig:nhis-stability-plot}, we observe that although
  the stability measure indicates logistic regression is more stable than the
  random forest classifier, both perform similarly in future years. This
  suggests that stability measure with these new variables $Z$ may be too
  conservative for such fine-grained comparisons. A principled selection
  mechanism for $Z$ is a promising future research topic.}


\section{Discussion}
\label{section:discussion}

We study a stability framework for assessing system performance against
distribution shifts. Our approach is interpretable as it is parameterized by a
level of permissible performance degradation in the cost scale and allows
relative comparisons between different system designs as we detail in
Section~\ref{section:experiment}. In contrast, a principled choice of the
magnitude of distribution shift is a longstanding open problem in the
distributional robustness literature; this is often the most important
modeling choice that governs subsequent performance evaluations. As humans are
typically unable to reason on the scale of distribution shifts, our framework
circumvents this problem by considering a dual-like
quantity~\eqref{eqn:primal} defined with respect to the performance threshold
$y$.

Theoretically, the finite sample minimax rates we present in
Sections~\ref{section:convergence} and~\ref{section:hardness} are one of the
first hardness results in estimating system performance under distribution
shifts.~\citet{DuchiNa21, LevyCaDuSi20} are the only prior works we know of,
that showed minimax rates for distributionally robust optimization. Given the
connections to large deviations theory delineated in
Section~\ref{section:approach}, we hope the analytic tools developed in this
work can spur further formal studies on the statistical estimation of rare
event probabilities and large deviation rates~\cite{DemboZe98, DeuschelSt89,
  AsmussenGl07, Ellis07}.

We focus on the formulation~\eqref{eqn:primal} considering all distribution
shifts, but this may often be overly conservative in practice. In many
application scenarios, the modeler is concerned with a particular type of
distribution shift as discussed in Section~\ref{section:nhis}. In prediction,
spatial and subpopulation shifts often lead to changes in certain demographic
compositions~\cite{DuchiHaNa22, JeongNa22, KohSaEtAl20}. In causal inference,
one is typically concerned with unobserved confounders that impact changes in
the distribution of $R \mid X$, where $X$ is a set of observed
covariates~\cite{Rosenbaum02, Tan06, KallusZh18}.  A stability framework over
a structured set of distribution shifts---constructed using domain
knowledge---is a fruitful future direction.

In many applications, the performance of a system must be evaluated across
multiple dimensions. In drug development, both efficacy and potency are
crucial. For online platforms, consumer satisfaction and long-term revenue are
natural dual objectives.  In each case, stability in one dimension may not
imply stability in the other. Any robustness evaluation must take into
account potential trade-offs, which requires modeling correlation
structures. Developing an understanding of stability in such multidimensional
settings is an important future research direction.




\bibliographystyle{abbrvnat}

\ifdefined\useorstyle
\setlength{\bibsep}{.0em}
\else
\setlength{\bibsep}{.7em}
\fi
\bibliography{bib}

\ifdefined\useorstyle

\ECSwitch


\ECHead{Appendix}

\else
\newpage
\appendix
\fi

\section{Duality results}

\subsection{Proof of Lemma~\ref{lemma:duality-result}}
\label{section:proof-of-duality}
Since $y \ge \E_P[R]$, we have $\lambda\opt \ge 0$ by Jensen's inequality. To prove the first result of the lemma, it suffices to show that
\begin{align*}
 I_y(P) = \sup_{\lambda \ge 0} \set{\lambda  y - \log \E_P[e^{\lambda \rv}]}.
\end{align*} 
Consider the likelihood ratio formulation~\eqref{eqn:likelihood}. 
Introducing Lagrange multipliers
$\lambda \ge 0$ for the constraint $\E_P[L \rv] \ge y$ and $\eta$ for the
constraint $\E_P[L] = 1$, the Lagrangian is given by
\begin{equation*}
  \mc{L}(L, \lambda, \eta) \defeq
  \E_P[L \log L] - \lambda (\E_P [L \rv] - y) - \eta (\E_P[L] - 1).
\end{equation*}

Taking $L(\rv) = \frac{e^{\mu \rv}}{\E_P[e^{\mu \rv}]}$ for $\mu$ large enough so that $\E_P[LR] > y$, extended Slater's condition holds.  Standard functional
duality theory (e.g.,~\citet[Theorem 8.6.1, Problem
8.7]{Luenberger69},~\citet[Theorem 2.165]{BonnansSh00}) shows that there is no
duality gap. We hence arrive at
\begin{align*}
  & \inf_{L \ge 0}  \left\{ \E_P[L \log L]: \E_P [L \rv] \geq y,
  \E_P[L] = 1 \right\} \\
  & = \sup_{\lambda \ge 0, \eta \in \R} \inf_{L \ge 0}
    \left\{ \E_P[L \log L] - \lambda (\E_P [L \rv] - y) - \eta (\E_P[L] - 1)
    \right\},
\end{align*}
where the infimums are taken over measurable functions.

We wish to interchange the inner infimum and the integral. For any fixed
$\lambda \ge 0$ and $\eta \in \R$,
\begin{align}
  & \inf_{L \ge 0}
  \left\{ \E_P[L \log L] - \lambda (\E_P [L \rv] - y) - \eta (\E_P[L] - 1)
  \right\} \noindent \nonumber \\
  & \ge  \E_P\left[ \inf_{l \ge 0}  \left\{
    l \log l - \lambda \rv l  - \eta l \right\} \right]
    + \lambda y  + \eta. \label{eqn:interchange-inf-int}
\end{align}
Using the first-order optimality condition for the infimum over $l \ge 0$,
we see that the minimum is obtained at
\begin{equation}
  \label{eqn:primal-opt}
  l\opt = \exp(\lambda \rv + \eta - 1).
\end{equation}
Since $L\opt \defeq\exp(\lambda \rv + \eta - 1)$ is clearly measurable, we conclude
equality holds in place of the inequality~\eqref{eqn:interchange-inf-int}.

Plugging  the optimal solution~\eqref{eqn:primal-opt} in the
problem~\eqref{eqn:interchange-inf-int}, the dual simplifies to
\begin{equation*}
  \sup_{\lambda \ge 0, \eta \in \R}
  \left\{ \lambda y  + \eta - e^{\eta -1} \E_P\left[ e^{\lambda \rv} \right]
  \right\}.
\end{equation*}
Noting that we can restrict attention to $\lambda$ such that
$\E_P[e^{\lambda \rv}] < \infty$, directly supremizing over $\eta$ gives
$\eta\opt = 1 - \log \E_P[e^{\lambda \rv}]$, which yields the first result.  The
second result follows from strong duality and the
characterization~\eqref{eqn:primal-opt}.

\subsection{Duality results for other metrics}
\label{sec:duality-result-other-metrics}
\highlight{We first extend the duality result to $f$-divergence.
  Let $f: \R \to \R_+ \cup \set{\infty}$ be a convex function satisfying
  $f(1) = 0$ and $f(t) = \infty$ for any $t < 0$. We define the
  $f$-divergence~\cite{AliSi66,Csiszar67} between $Q$ and $P$ as
\begin{align*}
    \fdiv{Q}{P} \defeq \int f\paran{\frac{dQ}{dP}} dP.
\end{align*}
The $f$-divergence is quite general and it covers the following commonly used
metrics:
\begin{enumerate}
\item KL divergence: $f(x) = x \log x - x +1$
\item Total variation: $f(x) = \half |x-1|$
\item $\chi^2$-divergence: $f(x) = (x-1)^2$
\end{enumerate}
We refer the reader to~\cite[Table 1]{RahimianMe19} for more examples of divergence metrics.

The following proposition extends the duality result for KL divergene through the Fenchel conjugate.
\begin{prop}
Let  $f^*$ be the Fenchel conjugate of $f$: $f^*(s) := \sup_t \set{s t - f(t)}$. 
For every $y$ with $\E_P[\rv] \le y < \mathrm{ess~sup}~\rv$,
  \begin{equation*}
       \inf_{Q} \set{\fdiv{Q}{P}:  \E_Q [\rv] \geq y} =   \sup_{\lambda \ge 0, \eta \in \R}
  \set{ \lambda y  + \eta  -  \E_P[ f^*(\lambda \rv + \eta)]}.
  \end{equation*}
  \label{prop:f-div-duality}
\end{prop}

\begin{proof}
Similar to~\eqref{eqn:likelihood}, we have
\begin{align*}
   \inf_{Q} \left\{\fdiv{Q}{P}:  \E_Q [\rv] \geq y\right\}
  = \inf_{L \ge 0}  \left\{ \E_P[f(L)]: \E_P [L \rv] \geq y, \E_P[L] = 1 \right\},
\end{align*}
We follow the same proof as in Lemma~\ref{lemma:duality-result} but replace $L \log L$ with $f(L)$ and $l \log l$ with $f(l)$.
Now,~\eqref{eqn:interchange-inf-int} changes to
\begin{equation*} 
 \lambda y  + \eta  -  \E_P[ f^*(\lambda \rv + \eta)],
\end{equation*} 
the rest is similar
and we complete the proof.
\end{proof}

We can also extend Lemma~\ref{lemma:duality-result} to Wasserstein distances.
Let $\rv \in \mc{X}$, where $\mc{X}$ is a Polish space, and $\mc{P}(\mc{X})$
denote all probability distributions.  
The Wasserstein distance induced by the cost function $c$ is
\begin{equation*}
    W_c(Q,P) \defeq \inf_{\pi \in \Pi(P,Q)} \set{\E_{(\rv, \hat{\rv}) \sim \pi} \Paran{c(\rv, \hat{\rv})}}, ~~ Q,P \in \mc{P}(\mc{X}),
\end{equation*} where $\Pi(P,Q)$ is the set of probability distributions on $\mc{X} \times \mc{X}$ with marginals $P$ and $Q$.
Throughout, we impose the following assumption on the cost function $c$, which is standard in the literature~\cite{BlanchetMu19}.
\begin{assumption}
    The function $c: \mc{X} \times \mc{X} \to [0,\infty)$ is lower semicontinuous and $c(x,x)=0$ for every $x \in \mc{X}$. \label{assumption:c-func}
  \end{assumption}
  We first state a useful technical result.
\begin{lemma} \cite[Proposition 1, Lemma 2, and Example 2]{ZhangYaGa22}
Consider any measurable function $\phi: \mc{X} \times \mc{X} \to \R \cup \set{-\infty}$.
    Then 
    \begin{equation*}
       \E_{\rv \sim P} \Paran{\sup_{x \in \mc{X}} \phi(\rv,x)} = \sup_{\pi \in \Pi_P} \set{\E_{(\rv, \hat{\rv}) \sim \pi} \Paran{\phi(\rv, \hat{\rv})}},
    \end{equation*}
    where $\Pi_P$ is set of probability distributions $\pi$ on $\mc{X} \times \mc{X}$ whose first marginal is $P$ such that $\E_{(\rv, \hat{\rv}) \sim \pi} \Paran{\phi(\rv, \hat{\rv})}$ is well-defined. \label{lemma:duality-change-order}
\end{lemma}

To compute the stability measure defined using the Wasserstein distance,
we present the following result, which reformulates an 
infinite-dimensional  problem to a low-dimensional problem involving computing the term $\psi_\lambda$.

\begin{prop}
Assume assumption~\ref{assumption:c-func} holds. 
Then 
\begin{equation*}
     \inf_{Q} \set{ \wass{Q}{P}:  \E_Q [\rv] \geq y} =  \sup_{\lambda \ge 0} \set{\lambda y -   \E_{P}[\psi_\lambda(\rv)] },
\end{equation*} where
\begin{equation*}
    \psi_\lambda(z) = \sup_{x \in \mc{X}} \set{\lambda x - c(z,x)}.
\end{equation*}
\label{prop:wasserstein-duality}
\end{prop}

\begin{proof}
We use a proof inspired by~\citet{ZhangYaGa22}.
Fix $P$,
let $L(y) = \inf_{Q} \set{ \wass{Q}{P}:  \E_Q [\rv] \geq y}$,   and  define 
\begin{equation*}
G(\lambda) = \inf_{y \in \R} \set{L(y) - \lambda y} = -L^*(\lambda), 
\end{equation*}
where $L^*$ is the Fenchel conjugate of $L$.
Note that $G(\lambda) = -\infty$ for any $\lambda < 0$.
We start by showing that $L(\cdot)$ is a convex function. To see this, we fix $\alpha \in [0,1]$ and  let 
$\mc{Q}_y \defeq \set{Q: \E_Q \ge y}$  for any $y$ and take
$Q_x \in \mc{Q}_x, Q_y \in \mc{Q}_y$.  Fix $x,y$ and let $z = \alpha x+(1-\alpha)y$.  
Then, letting $Q^\star = \alpha Q_x +(1-\alpha) Q_y$, we have $\E_{Q^\star}[\rv] = 
\alpha\E_{Q_x}[\rv] + (1-\alpha) \E_{Q_z}[\rv] \ge \alpha x + (1-\alpha) y =z$ so that $Q^\star \in \mc{Q}_z$. In addition,
\begin{align*}
L(z) \le 
    \wass{Q^\star}{P} \le \alpha    \wass{Q_x}{P} + (1-\alpha)    \wass{Q_y}{P},
\end{align*} taking infimum over $Q_x \in \mc{Q}_x$ and $Q_y \in \mc{Q}_y$, we arrive at
\begin{equation*}
    L(z) \le \alpha L(x) + (1-\alpha) L(y),
\end{equation*} and therefore $L(\cdot)$ is convex. 
Standard arguments from convex analysis~\citep[Theorem 12.2]{Rockafellar70} 
imply that 
\begin{equation}
    L(y)  = \sup_{\lambda \in \R} \set{\lambda y + G(\lambda)} 
     = \sup_{\lambda 
\ge 0 } \set{\lambda y + G(\lambda)}, \label{eqn:duality-fenchel-alternative-form}
\end{equation} 
where the second equality follows from the fact that $G(\lambda) = -\infty$ for any $\lambda < 0$.

Next,  for any  $\lambda \ge 0$,
\begin{align*}
G(\lambda) &= \inf_{y \in \R} \set{ \inf_{Q} \set{ \wass{Q}{P}:  \E_Q [\rv] \geq y} - \lambda y} \\ 
&= \inf_{y \in \R} \set{ \inf_{Q} \set{ \wass{Q}{P} -\lambda y:  \E_Q [\rv] \geq y} } \\ 
&= \inf_{Q}  \set{ \inf_{y \in \R}  \set{ \wass{Q}{P} - \lambda y:  \E_Q [\rv] \geq y} } \\ 
&= \inf_{Q}  \set{  \wass{Q}{P} -   \lambda   \E_Q [\rv]  }.
\end{align*}
Using the definition 
\begin{align*}
\wass{Q}{P} = \inf_{\pi \in \Pi(P,Q)} \E_{(\rv, \hat{\rv}) \sim \pi} \Paran{c(\rv, \hat{\rv})},
\end{align*}
we have 
\begin{align*}
G(\lambda) &=  \inf_Q \set{\inf_{\pi \in \Pi(P,Q)} \E_{(\rv, \hat{\rv}) \sim \pi} \Paran{c(\rv, \hat{\rv}) - \lambda \hat{\rv}}} \\
&= \inf_{\pi \in \Pi_P} \set{\E_{(\rv, \hat{\rv}) \sim \pi} \Paran{c(\rv, \hat{\rv}) - \lambda \hat{\rv}}} .
\end{align*} 
From~\eqref{eqn:duality-fenchel-alternative-form}, we have
\begin{align*}
L(y)  
&= \sup_{\lambda 
\ge 0 } \set{\lambda y + G(\lambda)} \\
& = \sup_{\lambda \ge 0} \set{\lambda y + \inf_{\pi \in \Pi_P} \E_{(\rv, \hat{\rv}) \sim \pi} \Paran{c(\rv, \hat{\rv}) - \lambda  \hat{\rv}}} \\
& = \sup_{\lambda \ge 0} \set{\lambda y - \sup_{\pi \in \Pi_P} \E_{(\rv, \hat{\rv}) \sim \pi} \Paran{\lambda  \hat{\rv} - c(\rv, \hat{\rv})}} \\
& = \sup_{\lambda \ge 0} \set{\lambda y -   \E_{P}[\psi_\lambda(\rv)] },
\end{align*}  where the last equality comes from Lemma~\ref{lemma:duality-change-order}.
\end{proof}
For completeness, we also include standard duality results in the literature~\citep{HuHong13, Shapiro17, DuchiNa21, GaoKl22} of the worst-case risk $\E_Q[\rv]$ when $Q$ belongs to an ambiguity set parametrized by $\rho > 0$.
\begin{prop}
We have 
\begin{align}
\sup_{Q} \set{ \E_Q [\rv]: \dkl{Q}{P} \le \rho} &= \inf_{\lambda > 0} \set{\lambda \rho +\lambda \log \E_P\Paran{e^{\frac{\rv}{\lambda}}}  }, 
\label{eqn:worst-case-exp-kl-duality}
 \\ 
\sup_{Q} \set{ \E_Q [\rv]: \fdiv{Q}{P} \le \rho} &= \inf_{\lambda > 0, \eta \in \R} \set{\lambda \rho + \eta + \E_P\Paran{\lambda f^*\paran{\frac{\rv - \eta}{\lambda}}}}, \\
\sup_{Q} \set{ \E_Q [\rv]: \wass{Q}{P} \le \rho} &=  \inf_{\lambda \ge 0} \set{\lambda \rho +   \E_{P}[\phi_\lambda(\rv)] },
\end{align} where $\phi_\lambda(z) =  \sup_{x \in \mc{X}} \set{x - \lambda c(z,x)}.$
\end{prop}
}


\section{Proof of convergence results}
\label{section:proof-of-upper}

We begin by proving Theorem~\ref{theorem:convergence} in
Section~\ref{section:proof-of-convergence}.  In
Theorem~\ref{theorem:convergence}, if we set $\bar{\lambda}$ and $\ubletter$
to be arbitrarily close to $\opttilt$ and
$\min\{2, \frac{\absc}{\bar{\lambda}}\}$ respectively, then we can obtain a
convergence rate close to
$O_p(n^{\min\{2, \frac{\absc}{\opttilt}\}^{-1} - 1})$. Roughly speaking, this
is the asymptotic---\emph{distribution-dependent}---rate of convergence for
the dual plug-in estimator shown by~\citet{Feuerverger89}.  Unlike our finite
sample convergence result, an asymptotic result requires
$\rv^{a} e^{\lambda \rv}$ to be regularly varying of order
$\frac{\lambda}{\absc}$~\cite{Feller71, DeHaanFe07} for any
$a, \lambda \ge 0$.

\subsection{Proof of Theorem~\ref{theorem:convergence}}
\label{section:proof-of-convergence}

\paragraph{Preliminaries} Let
$1 < \ubletter \le \min\{2, \frac{\absc}{\bar{\lambda}}\}$, and thus $\ubletter \in (1, 2]$. Throughout
this section, we will abuse notation and use $K$ to denote different universal
constants. Furthermore, we ignore all measurability issues in this proof which
can be dealt with by using outer measures and appropriate versions of Fubini's
theorems as in~\cite[Section~2.3]{VanDerVaartWe96}. Throughout our proof, we
use the inequality $(x +y)^{\ubletter} \le 2^{\ubletter-1} (x^{\ubletter} + y^{\ubletter})$
for $x, y \ge 0$ without justification. The inequality is a consequence of
convexity of $t \mapsto t^{\ubletter}$ on $\R_+$.

Recall the $\psi$-Orlicz norms of a random
variable $\rv$
\begin{equation*}
  \norm{Z}_{\psi} = \inf\left\{ c > 0: \E\left[
      \psi\left(\frac{|Z|}{c} \right) \right] \le 1 \right\},
\end{equation*}
where $\psi$ is a nondecreasing, convex function with $\psi(0) = 0$. For
example, $x \mapsto x^p$ with $p \ge 1$ yields the familiar $L_p$-norm
$\norm{Z}_{p} = (\E|Z|^p)^{1/p}$. If we let $\psi_p(x) = e^{x^p} - 1$ for
$p \ge 1$, we have $\norm{Z}_\ubletter \le \norm{Z}_2 \le \norm{Z}_{\psi_2}$
since $\ubletter \le 2$ and $x^2 \le \psi_2(x)$ for $x \ge 0$.

\paragraph{Symmetrization}  In the following, we show
that
\begin{equation}
  \label{eqn:moment-bound}
  \E|I(P) - \what{I}_n|^\ubletter \le  K \bar{\lambda}^{\ubletter}  n^{-\ubletter + 1}
  \E[|\rv|^\ubletter e^{\bar{\lambda} \ubletter \rv}].
\end{equation}

Using Markov's inequality, we 
plug $t = 
(\delta)^{-\frac{1}{\ubletter}}
K^{\frac{1}{\ubletter}} \bar{\lambda}
n^{\frac{1}{\ubletter} - 1}
\left(
  \E[|\rv|^\ubletter e^{\bar{\lambda} \ubletter \rv}]
  \right)^{\frac{1}{\ubletter}}
$ into
$\P\left( |I(P) - \what{I}_n| \ge t \right) \le 
\frac{\E|I(P) - \what{I}_n|^\ubletter}{t^\ubletter}$
and obtain the desired
result~\eqref{eqn:upper-general}.

To show the bound~\eqref{eqn:moment-bound}, define
$\opttilt \defeq \argmax_{\lambda} \left\{ \lambda y - \log \E e^{\lambda \rv}
\right\}$. Since $\E[\rv] \le y$ and $\opttilt \le \bar{\lambda}$, we have
$0 \le \opttilt \le \bar{\lambda}$ and
$I = \sup_{\lambda \in [0, \bar{\lambda}]} \left\{ \lambda y - \log
  \E[e^{\lambda \rv}] \right\}$. Recalling $|\log x - \log y| \le |x-y|$ for
$x, y \ge 1$, note $\E_{\emp}[e^{\lambda \rv}],~\E[e^{\lambda \rv}] \ge 1$ for
$\lambda \ge 0$ to arrive at the bound
\begin{align*}
  |I - \what{I}_n |
  \le \sup_{\lambda \in [0, \bar{\lambda}]}
  \left| \log \frac{\E [e^{\lambda \rv}]}{\E_{\emp} [e^{\lambda \rv}]} \right|
  \le \sup_{\lambda \in [0, \bar{\lambda}]}
  \left| \E [e^{\lambda \rv}] - \E_{\emp} [e^{\lambda \rv}] \right|.
\end{align*}

 We proceed via a symmetrization
argument~\cite[Chapter 2.3]{VanDerVaartWe96}.  Letting $\varepsilon_i$ be
i.i.d. random signs taking values in $\{-1, +1\}$ uniformly, our goal is to
replace the preceding supremum with a supremum over the \emph{symmetrized
  process}
$\sup_{\lambda \in [0, \bar{\lambda}]} \left| \frac{1}{n} \sum_{i=1}^n
  \varepsilon_i e^{\lambda \rv_i} \right|^{\ubletter}$. We will shortly see that
this latter supremum can be controlled by using the modulus of continuity of
the symmetrized process
$\lambda \mapsto \frac{1}{n} \sum_{i=1}^n \varepsilon_i e^{\lambda \rv_i}$.
To this end, define $\rv_i'$ to be i.i.d. copies of $\rv_i$ and denote by
$\E_{\rv'}$ expectation with respect to only $\rv'$. Rewrite the preceding
display with this new representation
\begin{align*}
  \E|I - \what{I}_n|^{\ubletter}
   & \le \E_{\rv} \sup_{\lambda \in [0, \bar{\lambda}]}
     \left| \E [e^{\lambda \rv}] - \E_{\emp} [e^{\lambda \rv}] \right|^{\ubletter} 
     = n^{-\frac{\ubletter}{2}} \E_{\rv} \sup_{\lambda \in [0, \bar{\lambda}]}
     \left| \E_{\rv'} \frac{1}{\sqrt{n}} \sum_{i=1}^n (e^{\lambda \rv_i}
     - e^{\lambda \rv_i'}) \right|^{\ubletter} \\
   &  \le n^{-\frac{\ubletter}{2}}  \E \sup_{\lambda \in [0, \bar{\lambda}]}
     \left|  \frac{1}{\sqrt{n}} \sum_{i=1}^n (e^{\lambda \rv_i}
     - e^{\lambda \rv_i'}) \right|^{\ubletter}.
\end{align*}
For any set of signs $\varepsilon_i$, the expectation remains unchanged if we
replace $e^{\lambda \rv_i} - e^{\lambda \rv_i'}$ with
$\varepsilon_i (e^{\lambda \rv_i} - e^{\lambda \rv_i'})$.  Using the
independence of $\varepsilon_i$'s, conclude
\begin{align}
  \E
  \sup_{\lambda \in [0, \bar{\lambda}]}
  \left| \frac{1}{\sqrt{n}} \sum_{i=1}^n (e^{\lambda \rv_i} - e^{\lambda \rv_i'}) \right|^{\ubletter}
  & = \E \sup_{\lambda \in [0, \bar{\lambda}]}
    \left| \frac{1}{\sqrt{n}}
    \sum_{i=1}^n \varepsilon_i (e^{\lambda \rv_i} - e^{\lambda \rv_i'})
    \right|^{\ubletter}  \nonumber \\
  & \le 2^{\ubletter} \E \sup_{\lambda \in [0, \bar{\lambda}]}
    \left| \frac{1}{\sqrt{n}} \sum_{i=1}^n \epsilon_i e^{\lambda \rv_i} \right|^{\ubletter}.
    \label{eqn:upper-bound-E-sup}
\end{align}
For a rigorous treatment of measurability issues arising in a symmetrization
argument, see~\citet[Section 2.3]{VanDerVaartWe96}.

We now bound~\eqref{eqn:upper-bound-E-sup} with a $\psi_2$-Orlicz norm over
the suprema of the symmetrized process.  From convexity of
$t \mapsto t^{\ubletter}$ for $t \ge 0$, 
\begin{align*}
  \E_{\epsilon} \sup_{\lambda \in [0, \bar{\lambda}]}
  \left| \frac{1}{\sqrt{n}} \sum_{i=1}^n \epsilon_i e^{\lambda \rv_i} \right|^{\ubletter}
  & \le 2^{\ubletter - 1} \left( \E_{\epsilon} \sup_{\lambda \in [0, \bar{\lambda}]}
  \left| \frac{1}{\sqrt{n}} \sum_{i=1}^n \epsilon_i (e^{\lambda \rv_i} - 1)
  \right|^{\ubletter}
  + \E_{\epsilon} \left| \frac{1}{\sqrt{n}} \sum_{i=1}^n \epsilon_i \right|^{\ubletter}
    \right)  \\
  & \le 2^{\ubletter - 1} \left( \E_{\epsilon} \sup_{\lambda, \lambda' \in [0, \bar{\lambda}]}
  \left| \frac{1}{\sqrt{n}} \sum_{i=1}^n \epsilon_i (e^{\lambda \rv_i} - e^{\lambda' \rv_i})
  \right|^{\ubletter}
  + \E_{\epsilon} \left| \frac{1}{\sqrt{n}} \sum_{i=1}^n \epsilon_i \right|^{\ubletter}
    \right).
\end{align*}
Recalling the Orlicz norm discussed in the preliminaries, the preceding
display can be bounded by the $\psi_2$-Orlicz norm (defined conditional on
$\rv_i$)
\begin{align}
  2^{\ubletter - 1} \left( \norm{\sup_{\lambda, \lambda' \in [0, \bar{\lambda}]}
  \left| \frac{1}{\sqrt{n}} \sum_{i=1}^n \epsilon_i (e^{\lambda \rv_i} - e^{\lambda' \rv_i})
  \right| }_{\psi_2}^{\ubletter}
  + \E_{\epsilon} \left| \frac{1}{\sqrt{n}} \sum_{i=1}^n \epsilon_i \right|^{\ubletter}
  \right).
    \label{eqn:orlicz-bound}
\end{align}
We proceed by bounding the two terms separately.

\paragraph{Bounding the second term in Eq.~\eqref{eqn:orlicz-bound}} To control
$\E_{\epsilon} \left| \frac{1}{\sqrt{n}} \sum_{i=1}^n \epsilon_i
\right|^{\ubletter}$, use Hoeffding's bound to get
\begin{align*}
  \E_{\epsilon}\left| \sum_{i=1}^n \epsilon_i x_i \right|^{\ubletter}
  = \int_0^{\infty} \P\left( \left| \sum_{i=1}^n \epsilon_i x_i \right|^{\ubletter}
  \ge t \right) dt
  \le \int_0^{\infty}
  2 \exp\left( -\frac{t^{\frac{2}{\ubletter}}}{2\norm{x}_2^2} \right) dt = \ubletter 2^{\frac{\ubletter}{2}} \norm{x}_2^{\ubletter}
  \int_0^{\infty} u^{\frac{\ubletter}{2} - 1} e^{-u} du,
\end{align*}
where we used a change of variables
$u = \frac{t^{\frac{2}{\ubletter}}}{2\norm{x}_2^2}$. Applying this bound with
$x \equiv 1$,
\begin{align}
  \E_{\epsilon} \left| \frac{1}{\sqrt{n}} \sum_{i=1}^n \epsilon_i \right|^{\ubletter}
  \le \ubletter 2^{\frac{\ubletter}{2}} 
  \Gamma \left( \frac{\ubletter}{2} \right).
  \label{eqn:second-term-bound}
\end{align}

\paragraph{Bounding the first term in Eq.~\eqref{eqn:orlicz-bound} through
  chaining}
We use a standard chaining argument (see, for
example,~\cite[Ch. 2.2]{VanDerVaartWe96}) to control the first term in the
bound~\eqref{eqn:orlicz-bound}. Our argument relies on the notion of 
covering numbers. Let $V \subset \mc{V}$ be any subset of a vector space
$\mc{V}$.  For a (semi) norm $d$ on $\mc{V}$, we say a collection of
$v_1, \ldots, v_N \subset \mc{V}$ is an $\epsilon$-cover of $\mc{V}$ if for
each $v \in \mc{V}$, there exists $v_i$ such that $d(v, v_i) \le
\epsilon$. Then \textit{covering number} of $V$ with respect to $d$ is given
by
\begin{equation*}
  N(V, \epsilon, d) \defeq \inf\left\{ N \in \N : \text{there is an}~ \epsilon\text{-cover over}~ V ~\text{of size}~ N ~\text{with respect to}~ d \right\}.
\end{equation*}
The following classical chaining inequality bounds the $\psi_2$-Orlicz norm of
the supremum of a stochastic process.
\begin{lemma}
  \cite[Theorem 2.2.4]{VanDerVaartWe96}
  \label{lemma:chaining}
  For some (semi) metric $d$ on $V$, let $\{ Z_v: v \in V \}$ be a separable
  stochastic process with
  \begin{equation*}
    \norm{Z_v - Z_w}_{\psi_2} \le Cd(v, w) ~~\text{  for every   }~~~ v, w \in V
  \end{equation*}
  for a constant $C$. Then, for a constant $K$ depending only on $C$
  \begin{equation*}
    \norm{\sup_{v, w \in V} |Z_v - Z_w|}_{\psi_2}
    \le K \int_0^{\diam V} \sqrt{\log N(V, \epsilon, d)} d\epsilon.
  \end{equation*}
\end{lemma}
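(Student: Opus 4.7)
The plan is to establish this classical chaining inequality by combining a maximal inequality for sub-Gaussian random variables with a telescoping decomposition over dyadic covers of $V$. Without loss of generality, by rescaling $d$ I can assume $\diam(V) = 1$, so the entropy integral runs from $0$ to $1$.

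The first ingredient is a maximal inequality: for any random variables $X_1, \ldots, X_N$ with $\|X_i\|_{\psi_2} \le \sigma$, one has $\|\max_i |X_i|\|_{\psi_2} \le K\sigma \sqrt{\log(1+N)}$ for a universal $K$. This follows from the convex-monotone bound $\psi_2(\max_i |X_i|/c) \le \sum_i \psi_2(|X_i|/c)$, taking expectations, and choosing $c$ proportional to $\sigma \sqrt{\log(1+N)}$. For each integer $k \ge 0$, I next construct a minimal $2^{-k}$-cover $V_k$ of $V$ of size $N_k \defeq N(V, 2^{-k}, d)$, together with a projection $\pi_k: V \to V_k$ sending each $v$ to a nearest center, and fix any $v_0 \in V_0$.

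With these pieces in hand, the chaining step writes, for each $v \in V$, the telescoping identity $Z_v - Z_{v_0} = \sum_{k \ge 1}(Z_{\pi_k(v)} - Z_{\pi_{k-1}(v)})$, which converges via separability of the process together with the hypothesis $\|Z_v - Z_w\|_{\psi_2} \le Cd(v, w)$. Since $d(\pi_k(v), \pi_{k-1}(v)) \le 2^{-k} + 2^{-(k-1)} \le 3 \cdot 2^{-k}$, each increment has $\psi_2$-norm at most $3C \cdot 2^{-k}$, and the number of distinct pairs $(\pi_k(v), \pi_{k-1}(v))$ appearing as $v$ ranges over $V$ is at most $N_k \cdot N_{k-1} \le N_k^2$. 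Applying the maximal inequality at each dyadic level and summing yields
\begin{equation*}
\Big\| \sup_{v \in V} \sum_{k \ge 1} |Z_{\pi_k(v)} - Z_{\pi_{k-1}(v)}| \Big\|_{\psi_2}
\le K' C \sum_{k \ge 1} 2^{-k} \sqrt{\log(1+N_k^2)}
\le K'' C \int_0^1 \sqrt{\log N(V, \epsilon, d)} \, d\epsilon,
\end{equation*}
where the last step uses the standard comparison between a geometric dyadic sum and the entropy integral. The triangle inequality $|Z_v - Z_w| \le |Z_v - Z_{v_0}| + |Z_w - Z_{v_0}|$ then upgrades this to the claimed bound on $\|\sup_{v, w}|Z_v - Z_w|\|_{\psi_2}$, absorbing the extra factor of two into the constant.

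The main obstacle is rigorously passing from suprema over the countable collection $\bigcup_k V_k$ of chaining centers to the true supremum over the possibly uncountable $V$. Separability of the process is precisely what allows this: I would first establish the displayed bound with $V$ replaced by $V_K$ for each finite $K$, then invoke monotone convergence (or Fatou's lemma applied to the $\psi_2$-Orlicz functional) to let $K \to \infty$, and finally use separability to identify $\sup_{V_\infty} = \sup_V$ almost surely. A minor secondary issue is the behavior of $N_k$ for small $k$, where $V_k$ may be a singleton and the corresponding summand vanishes; this only truncates the sum from below and does not affect its comparison to the entropy integral.
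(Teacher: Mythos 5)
The paper cites this result as \citet[Theorem 2.2.4]{VanDerVaartWe96} without reproducing a proof, so there is no internal argument to compare against; I am assessing your proposal on its own merits. Your chaining skeleton is the standard Dudley argument and is correct in structure: dyadic nets $V_k$ of mesh $2^{-k}$, the telescoping identity through nearest-point projections, the bound $d(\pi_k(v),\pi_{k-1}(v))\le 3\cdot 2^{-k}$, counting at most $N_k N_{k-1}$ link pairs at level $k$, comparing the dyadic sum to the entropy integral, and finishing by separability and the triangle inequality.

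The gap is in your proposed derivation of the maximal inequality, which is the load-bearing sub-Gaussian ingredient. You claim $\|\max_{i\le N}|X_i|\|_{\psi_2}\le K\sigma\sqrt{\log(1+N)}$ follows from $\psi_2(\max_i|X_i|/c)\le\sum_i\psi_2(|X_i|/c)$, taking expectations, and choosing $c\asymp\sigma\sqrt{\log(1+N)}$. This does not give the logarithmic dependence. For $c\ge\sigma$, the strongest bound available from $\|X_i\|_{\psi_2}\le\sigma$ via Jensen's inequality is
\begin{equation*}
\E\bigl[\psi_2(|X_i|/c)\bigr]=\E\bigl[e^{(X_i/c)^2}\bigr]-1\le \bigl(\E\bigl[e^{(X_i/\sigma)^2}\bigr]\bigr)^{\sigma^2/c^2}-1\le 2^{\sigma^2/c^2}-1,
\end{equation*}
which decays only like $\sigma^2/c^2$ as $c$ grows, not exponentially. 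Forcing $N(2^{\sigma^2/c^2}-1)\le 1$ requires $c\gtrsim\sigma\sqrt{N}$, not $\sigma\sqrt{\log N}$; substituting $\sqrt{N_k^2}$ for $\sqrt{\log N_k^2}$ in the dyadic sum destroys the comparison with the entropy integral and the whole chain collapses. The correct proof of the maximal inequality (VdVW Lemma~2.2.2) exploits the supermultiplicative growth of $\psi_2$, namely $\psi_2(x)\psi_2(y)\le 2\,\psi_2(xy)$ for $x,y\ge\sqrt{2}$: one factors a free parameter $y$ out of the argument via $\psi_2(u/y)\lesssim\psi_2(u)/\psi_2(y)$ on the relevant event, takes expectations with $c=\sigma$ (where the naive sum does give a usable bound $\le N$), and only then chooses $y\asymp\psi_2^{-1}(N)\asymp\sqrt{\log(1+N)}$. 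It is precisely this factoring-through-$\psi_2^{-1}$ step that produces the logarithm in the cardinality, and it is missing from your sketch. Replacing your derivation with this argument, or simply invoking VdVW Lemma~2.2.2 by name, makes the rest of your chaining proof go through.
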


To apply Lemma~\ref{lemma:chaining}, we verify
$\lambda \mapsto \frac{1}{\sqrt{n}} \sum_{i=1}^n \varepsilon_i e^{\lambda \rv_i}$
is Lipschitz with respect to the $L_2(\emp)$-norm
$\norm{Z}_n \defeq \left( \frac{1}{n} \sum_{i=1}^n Z_i^2 \right)^{\half}$.
Conditional on $\rv_i$'s,  Hoeffding's inequality gives
\begin{equation*}
  \P_{\varepsilon} \left(\left|\frac{1}{\sqrt{n}} \sum_{i=1}^n \varepsilon_i e^{\lambda \rv_i} -
      \frac{1}{\sqrt{n}} \sum_{i=1}^n \varepsilon_i e^{\lambda' \rv_i} \right|
    \ge t \right)
  \le 2 \exp\left( -\frac{t^2}{2\norm{e^{\lambda \rv} - e^{\lambda'\rv}}_n^2} \right).
\end{equation*}
\begin{claim}
The preceding display implies
\begin{equation*}
  \norm{\frac{1}{\sqrt{n}} \sum_{i=1}^n \varepsilon_i e^{\lambda \rv_i} -
    \frac{1}{\sqrt{n}} \sum_{i=1}^n \varepsilon_i e^{\lambda' \rv_i}}_{\psi_2}
  \le \sqrt{6} \norm{e^{\lambda \rv} - e^{\lambda'\rv}}_n.
\end{equation*}
\end{claim}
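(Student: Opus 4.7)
The plan is to use the standard equivalence between sub-Gaussian tail bounds and $\psi_2$-Orlicz norms. Let me abbreviate $Z \defeq \frac{1}{\sqrt{n}} \sum_{i=1}^n \varepsilon_i (e^{\lambda \rv_i} - e^{\lambda' \rv_i})$ and $\sigma \defeq \norm{e^{\lambda \rv} - e^{\lambda' \rv}}_n$, so the given Hoeffding bound (conditional on $\rv_1,\dots,\rv_n$) reads $\P_\varepsilon(|Z| \ge t) \le 2 \exp(-t^2/(2\sigma^2))$. The goal is to exhibit $c = \sqrt{6}\,\sigma$ such that $\E_\varepsilon[e^{Z^2/c^2} - 1] \le 1$, which by definition of the Orlicz norm gives $\|Z\|_{\psi_2} \le \sqrt{6}\,\sigma$ as claimed.

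First I would write $\E_\varepsilon[e^{Z^2/c^2}] = \int_0^\infty \P_\varepsilon(e^{Z^2/c^2} \ge u)\,du$ and perform the change of variables $u = e^{s}$, splitting the integral into $s \le 0$ (where the probability is trivially $1$, contributing $1$) and $s > 0$ (where $\P_\varepsilon(Z^2/c^2 \ge s) = \P_\varepsilon(|Z| \ge c\sqrt{s})$). Applying the given tail bound on the latter piece yields
\begin{equation*}
\E_\varepsilon\!\left[e^{Z^2/c^2}\right] \le 1 + \int_0^\infty 2 \exp\!\left(s - \frac{c^2 s}{2\sigma^2}\right) ds.
\end{equation*}
Provided $c^2 > 2\sigma^2$, the remaining integral evaluates to $2\bigl(\tfrac{c^2}{2\sigma^2} - 1\bigr)^{-1}$, so
\begin{equation*}
\E_\varepsilon\!\left[e^{Z^2/c^2} - 1\right] \le \frac{2}{\tfrac{c^2}{2\sigma^2} - 1}.
\end{equation*}

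Setting $c = \sqrt{6}\,\sigma$ makes the right-hand side equal to $1$, establishing the claim. There is no real obstacle here; the only mild care required is the change of variables and ensuring the integrability threshold $c > \sqrt{2}\,\sigma$ is satisfied, which it is with room to spare. Note that all expectations in this argument are taken conditional on $\rv_1,\dots,\rv_n$, matching the interpretation of the $\psi_2$-norm invoked in Lemma~\ref{lemma:chaining}.
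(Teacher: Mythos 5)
Your proposal is correct and follows essentially the same route as the paper: both write the $\psi_2$-moment as a tail integral, change variables to turn it into $\int_0^\infty \P(|Z| \ge c\sqrt{s})\,e^s\,ds$, apply the Hoeffding tail, and solve $2/(c^2/(2\sigma^2) - 1) \le 1$ to get $c = \sqrt{6}\sigma$. The only cosmetic difference is that you keep $e^{Z^2/c^2}$ and split off the $s \le 0$ piece to produce the ``$-1$'', whereas the paper subtracts $1$ inside the expectation first; the resulting integral and constant are identical.
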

\begin{proof-of-claim}
For any random variable $Z$ such that $\P(|Z| > u) \le 2\exp(-v^2 u^2)$, 
Fubini gives
\begin{equation*}
  \E[e^{(Z / c)^2} - 1] = \int_0^{\infty} \P\left( e^{(Z/c)^2} - 1 \ge t \right) dt
  = \int_0^\infty \P\left( |Z| \ge \sqrt{u} \right) \frac{1}{c^2} e^{u / c^2} du
  \le \frac{2}{c^2v - 1}
\end{equation*}
where we used the change of variables $u = c^2 \log (t+1)$.  Since
$2 / (c^2v - 1) \le 1$ for $c \ge \sqrt{3 / v}$, we obtain the claim.
\end{proof-of-claim}

Letting
$\mc{F} \defeq \left\{ x \mapsto e^{\lambda x}: \lambda \in [0, \bar{\lambda}]
\right\}$, use the claim to apply Lemma~\ref{lemma:chaining}, which
immediately yields
\begin{align*}
  \norm{\sup_{\lambda, \lambda' \in [0, \bar{\lambda}]}
  \left| \frac{1}{\sqrt{n}} \sum_{i=1}^n \epsilon_i
  (e^{\lambda \rv_i} - e^{\lambda' \rv_i})
  \right| }_{\psi_2}
  & \le K \int_0^{\bar{\lambda} \norm{F}_n} \sqrt{\log N(\epsilon,
    \mc{F}, \norm{\cdot}_n)}
    d\epsilon  \\
  & = K \int_0^{\bar{\lambda}}
    \sqrt{\log N(\epsilon \norm{F}_{n}, \mc{F}, \norm{\cdot}_n)}
    d\epsilon  \cdot  \norm{F}_n.
\end{align*}
To bound the covering number of $\mc{F}$, we use the Lipschitzness of
$\lambda \mapsto e^{\lambda x}$. 
\begin{lemma}
  \label{lemma:lipschitz-covering}
  \cite[Theorem 2.7.11]{VanDerVaartWe96} If
  $\mc{F} = \{f_\lambda: \lambda \in \Lambda \}$ is a class of functions such
  that $|f_{\lambda}(x) - f_{\lambda'}(x)| \le F(x) d(\lambda, \lambda')$ for
  some metric $d$ on $\Lambda$ and function $F$ on the sample space, we have
  \begin{equation*}
    N(\epsilon \norm{F}, \mc{F}, \norm{\cdot}_n)
    \le N(\epsilon, \Lambda, d).
  \end{equation*}
\end{lemma}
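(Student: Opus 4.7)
The plan is to transfer a cover of the parameter space $\Lambda$ to a cover of the function class $\mc{F}$ by using the Lipschitz assumption pointwise. Concretely, I would fix $\epsilon > 0$ and take a minimal $\epsilon$-cover $\{\lambda_1, \ldots, \lambda_N\} \subset \Lambda$ of $\Lambda$ with respect to $d$, where $N = N(\epsilon, \Lambda, d)$. Then I would claim that the induced collection $\{f_{\lambda_1}, \ldots, f_{\lambda_N}\}$ is an $\epsilon \norm{F}$-cover of $\mc{F}$ with respect to $\norm{\cdot}_n$.

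To verify the claim, I would take an arbitrary $f_\lambda \in \mc{F}$ and pick an index $i$ with $d(\lambda, \lambda_i) \le \epsilon$. The Lipschitz hypothesis then gives the pointwise bound $|f_\lambda(x) - f_{\lambda_i}(x)| \le F(x) \, d(\lambda, \lambda_i) \le \epsilon F(x)$. Since the empirical $L_2$ semi-norm $\norm{g}_n = (n^{-1} \sum_{j=1}^n g(x_j)^2)^{1/2}$ is monotone under pointwise domination, squaring and averaging yields
\begin{equation*}
  \norm{f_\lambda - f_{\lambda_i}}_n^2 = \frac{1}{n} \sum_{j=1}^n (f_\lambda(x_j) - f_{\lambda_i}(x_j))^2 \le \epsilon^2 \cdot \frac{1}{n} \sum_{j=1}^n F(x_j)^2 = \epsilon^2 \norm{F}_n^2,
\end{equation*}
so $\norm{f_\lambda - f_{\lambda_i}}_n \le \epsilon \norm{F}_n$, which establishes that $\{f_{\lambda_i}\}_{i=1}^N$ is indeed an $\epsilon \norm{F}_n$-cover of $\mc{F}$. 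Taking the infimum over all such covers and comparing sizes gives $N(\epsilon \norm{F}_n, \mc{F}, \norm{\cdot}_n) \le N = N(\epsilon, \Lambda, d)$.

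There is no substantive obstacle here: the entire content of the lemma is the pointwise Lipschitz inequality together with the monotonicity of $\norm{\cdot}_n$, and the proof is essentially a one-line application of the definition of covering numbers. The only mild subtlety is ensuring the norm used on the function side is compatible with pointwise bounds—an issue that is automatic for the empirical $L_2$ norm appearing in the chaining argument above, and more generally holds for any norm arising from a positive measure.
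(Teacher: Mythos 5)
Your proof is correct and is the standard argument for this fact (which is indeed essentially how it appears in the cited reference, Theorem 2.7.11 of van der Vaart and Wellner); the paper itself cites the result without proof, so there is no authorial proof to compare against. The one thing worth being precise about: the $\norm{F}$ in the statement should be read as $\norm{F}_n$ — the same empirical $L_2$ seminorm used on $\mc{F}$ — which is exactly how you use it, so this is a notational point rather than a gap.
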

\noindent
Note that $|e^{\lambda x} - e^{\lambda' x}| \le F(x) |\lambda - \lambda'|$
with $F(x) = xe^{\bar{\lambda}x}$. Applying
Lemma~\ref{lemma:lipschitz-covering} with $\Lambda := [0,\bar{\lambda}]$,
\begin{align*}
 \int_0^{\bar{\lambda}}
    \sqrt{\log N(\epsilon \norm{F}_{n}, \mc{F}, L_2(\emp))}
    d\epsilon  \cdot  \norm{F}_n  
   \le  \int_0^{\bar{\lambda}} \sqrt{\log \left( 1+ \frac{\bar{\lambda}}{\epsilon}\right)}
    d\epsilon \cdot \norm{F}_n \le K \bar{\lambda} \norm{F}_n.
\end{align*}

\paragraph{Final bound} Collecting these bounds, we arrive at
\begin{align*}
  \E | I - \what{I}_n |^{\ubletter}
  \le 2^{2\ubletter-1} n^{-\frac{\ubletter}{2}}
  \E_{\rv, \epsilon} \sup_{\lambda \in [0, \bar{\lambda}]}
  \left| \frac{1}{\sqrt{n}} \sum_{i=1}^n \epsilon_i e^{\lambda \rv_i} \right|^{\ubletter}
  & \le 2^{2\ubletter-1} n^{-\frac{\ubletter}{2}}
    \left( K \bar{\lambda}^{\ubletter} \E_{\rv}[\norm{F}_n^{\ubletter}] +
    \ubletter 2^{\frac{\ubletter}{2}} \Gamma\left(\frac{\ubletter}{2}\right) \right).
\end{align*}
Noting $\norm{x}_1 \le \left(\sum_{i=1}^n |x_i|^{\ubletter/2}\right)^{2/\ubletter}$
for $\ubletter \in (1, 2]$, the final result~\eqref{eqn:moment-bound} follows from
\begin{equation*}
  \E_{\rv} \norm{F}_n^{\ubletter}
  = n^{-\frac{\ubletter}{2}} \E_\rv \left( \sum_{i=1}^n \rv_i^2 e^{2\bar{\lambda}\rv_i}
  \right)^{\frac{\ubletter}{2}}
  \le n^{-\frac{\ubletter}{2} + 1} 
  \E[|\rv|^\ubletter e^{\bar{\lambda} \ubletter \rv}].
\end{equation*}

Summarizing, we have shown
\begin{align} 
  \E|I - \what{I}_n|^\ubletter
  & \le 2^{2\ubletter-1} n^{-\frac{\ubletter}{2}} 
    \E_{\rv, \epsilon} \sup_{\lambda \in [0, \bar{\lambda}]}
    \left| \frac{1}{\sqrt{n}} \sum_{i=1}^n \epsilon_i e^{\lambda \rv_i} 
    \right|^{\ubletter} \nonumber  \\
  & \le  2^{2\ubletter-1}  n^{-\frac{\ubletter}{2}}
    \left(
    K \bar{\lambda}^{\ubletter} \E_{\rv}[\norm{F}_n^{\ubletter}] +
    \ubletter 2^{\frac{\ubletter}{2}} \Gamma\left(\frac{\ubletter}{2}\right)
    \right) \nonumber \\
  & \le K \bar{\lambda}^{\ubletter}  n^{-\ubletter + 1}
  \E[|\rv|^\ubletter e^{\bar{\lambda} \ubletter \rv}].
  \label{eqn:bound-on-I-I-n}
\end{align}
for an appropriately large universal constant $K$, which is exactly~\eqref{eqn:moment-bound}.

\subsection{Proof of Theorem~\ref{theorem:upper-ld-rate}}
\label{section:proof-of-upper-ld-rate}

For an $\epsilon > 0$ to be chosen later, let
$\bar{\lambda} = \absc - \frac{\mcpgamma}{y}$ and
$\ubletter \defeq \frac{1}{1+2\epsilon} \frac{\absc}{\absc - \mcpgamma/y}  \in (1,2)$. 
Applying
Theorem~\ref{theorem:convergence} with this definition of $\bar{\lambda}$ and
$\ubletter$, we proceed to bound $\E_P[\rv^\ubletter e^{\bar{\lambda}\ubletter \rv}]$
uniformly over $P \in \mc{P}_{\absc, y, \mcpgamma}$.

Toward this goal, we will use the following useful algebraic inequality
\begin{equation}
  \label{eqn:moment-bound-exponential-function}
  x^a \le e^{tx} \left(\frac{a}{et} \right)^a
  ~~~\mbox{for all}~~x \geq 0, t > 0, a > 0.
\end{equation}
To see this, note that $1 + y \le e^y$ with $y = -1 + \frac{tx}{a}$ gives
$\frac{tx}{a} \le e^{ \frac{tx}{a} -1}$. 
Applying the inequality~\eqref{eqn:moment-bound-exponential-function} with
$t = \frac{\epsilon}{1+2\epsilon} \absc, a = \ubletter$, we deduce
\begin{equation*}
\rv^\ubletter e^{\bar{\lambda}\ubletter \rv}
\le e^{\frac{1}{1+2\epsilon} \absc \rv} e^{\frac{\epsilon}{1+2\epsilon} \absc \rv} \left(\frac{\ubletter}{e} \right)^\ubletter
\left(\frac{1+2\epsilon}{\epsilon  \absc} \right)^\ubletter.
\end{equation*}
Taking expectation on both sides, use Condition~\ref{item:mgf-bound} and
$\ubletter < e$ to arrive at
\begin{equation}
 \E_P[\rv^\ubletter e^{\bar{\lambda}\ubletter \rv}]
 \le \left(\frac{\ubletter}{e} \right)^\ubletter
  \left(\frac{1+2\epsilon}{\epsilon  \absc} \right)^\ubletter
  \E_P\left[e^{\frac{1+\epsilon}{1+2\epsilon} \absc \rv}\right]
  \le \left(\frac{1+2\epsilon}{\epsilon} \right)^{\ubletter+1}
\left(\frac{1}{  \absc}\right)^\ubletter. \label{eqn:upper-ld-proof-modify}
\end{equation}

Using the preceding bound on $\E_P[\rv^\ubletter e^{\bar{\lambda}\ubletter \rv}]$ in Theorem~\ref{theorem:convergence}, conclude
\begin{equation*}
  \sup_{P \in \mc{P}_{\absc,y,\mcpgamma}}
  \P\left(\left|\what{I}_n - I(P)\right| \ge
    K \left(
 1- \frac{\mcpgamma}{\absc y}
    \right) 
    \left(\frac{1}{\delta}\right)^{(1+2\epsilon) (1 - \frac{\mcpgamma}{\absc y})}
    \left( 2+ \frac{1}{\epsilon} \right)^{ 1+ (1+2\epsilon) (1 - \frac{\mcpgamma}{\absc y})}
    n^{- \frac{\mcpgamma}{\absc y} + 2\epsilon (1 - \frac{\mcpgamma}{\absc y})}
  \right) 
  \le \delta.
\end{equation*}
Choose $\epsilon \defeq \frac{1}{2 \log n} \frac{1}{1 - \frac{\mcpgamma}{\absc y}}$. Noting that
$\epsilon \le \half$ since $n 
\ge 8 > e^2
\ge \exp\left(
\frac{1}{1- \frac{\mcpgamma}{\absc y}}
\right)$ and $n^{1/\log n} = e$,
we obtain the desired result.

\subsection{Proof of Theorem~\ref{theorem:upper-md-rate}}
\label{section:proof-of-upper-md-rate}

Applying Theorem~\ref{theorem:convergence} with
$\bar{\lambda} \defeq \absc - \frac{\mcpgamma}{y}$ and $\ubletter =2$, it remains to
bound $\E_P[\rv^\ubletter e^{\bar{\lambda} \ubletter \rv}]$ uniformly over
$P \in \mc{P}_{\absc, y, \mcpgamma}$.  Use the
bound~\eqref{eqn:moment-bound-exponential-function} with
$t = \frac{\mcpgamma}{y} - \frac{\absc}{2} > 0$ and note $a = 2 < e$ to get
\begin{equation*}
\rv^\ubletter e^{\bar{\lambda}\ubletter \rv}
\le
e^{2(\absc - \frac{\mcpgamma}{y}) \rv}
e^{(\frac{\mcpgamma}{y} - \frac{\absc}{2}) \rv}
\left(\frac{1}{\frac{\mcpgamma}{y} - \frac{\absc}{2}} \right)^2.
\end{equation*}
Taking expectation on both sides and using Condition~\ref{item:mgf-bound},
\begin{equation}
\E_P[\rv^\ubletter e^{\bar{\lambda} \ubletter \rv}]
\le  \left(\frac{y}{\mcpgamma -\frac{\absc y}{2}} \right)^2 
\frac{\absc y}{\mcpgamma -  \frac{\absc y}{2}}.
\label{eqn:upper-md-proof-modify}
\end{equation}

\highlight{
\subsection{Proof of Theorem~\ref{theorem:upper-ld-rate-relaxed}}
\label{sec:proof-of-upper-ld-rate-relaxed}
    We use the same proof as in Theorem~\ref{theorem:upper-ld-rate}, except that we change \eqref{eqn:upper-ld-proof-modify}
    to
    \begin{equation*}
         \E_P[\rv^\ubletter e^{\bar{\lambda}\ubletter \rv}]
 \le \left(\frac{\ubletter}{e} \right)^\ubletter
  \left(\frac{1+2\epsilon}{\epsilon  \absc} \right)^\ubletter
  \E_P\left[e^{\frac{1+\epsilon}{1+2\epsilon} \absc \rv}\right]
  \le \left(\frac{1+2\epsilon}{\epsilon} \right)^{\ubletter+m}
\left(\frac{1}{  \absc}\right)^\ubletter.
    \end{equation*} and then the rest follows similarly, and we complete proving~\eqref{eqn:upper-ld-rate-relaxed}.

\subsection{Proof of properties of $ \mc{Q}_{\bar{\lambda},y}$}
\label{section:proof-property-Q-class}

The statement for Gaussian random variables follows directly from Example~\ref{example:Gamma}.

We now consider $P = \mathsf{Ber}(p)$, then $F(\lambda) \defeq \lambda y - \log \E_P[e^{\lambda \rv}] = 
\lambda y - \log((1-p) + p e^{\lambda} )
$ and $F'(\lambda) = y - \frac{p e^{\lambda}}{1-p + p e^{\lambda}}$, so we obtain $\lambda\opt = 
\log\left(\frac{(1-p)y}{p(1-y)}\right)
$ by setting $F'(\lambda) = 0$. 
Solving $\lambda\opt \le \bar{\lambda}$ yields $p \ge \frac{y}{e^{\bar{\lambda}} - y   ( e^{\bar{\lambda}}- 1)}$.

Next, consider $P = \mathsf{Uni}(0,1)$, we have 
$F(\lambda) = \lambda y - \log \paran{\frac{e^{ \lambda} - 1}{\lambda}}$, then 
\begin{align*}
    F'(\lambda) = \frac{e^{\lambda} ((y-1) \lambda + 1) - y \lambda - 1}{\lambda e^{\lambda}-1}.
\end{align*}
Noting that $F'( \frac{1}{1-y}) \le 0$, we have
  $\opttilt(\mathsf{Uni}(0,1)) \le \frac{1}{1-y}$. 
  Similarly, we can show that $\opttilt(\mathsf{Uni}(0,b)) \le \frac{1}{1-y/b}$ and
  and $\opttilt(\mathsf{Uni}(a,b)) \le \frac{1}{1-(y-a)/(b-a)} =  \frac{b-a}{b-y}$.
  }

\newcommand{\GammaConst}{\frac{\absc^{1-\eta}}{\Gamma(1-\eta)}}
\newcommand{\GammaConstRec}{\frac{\Gamma(1-\eta)}{\absc^{1-\eta}}}
\newcommand{\GammaLambdaOptUpper}{\left( \absc - \frac{\mcpgamma}{y} \right)} 
\newcommand{\GammaLambdaOptSep}{\left( \absc - \frac{1-\eta}{y} \right)}
\newcommand{\CUpperBound}{\GammaConst    \frac{\absc x_0}{\absc x_0 - 1}}
\newcommand{\expfactorGammaEta}{e^{- (1-\eta)x_0/y}}
\newcommand{\expfactorGammaEtaTwo}{e^{- \mcpgamma x_0/y}}

\section{Proof of fundamental hardness results}
\label{section:proof-of-lower}

\subsection{Proof of Theorem~\ref{theorem:lower-ld-rate}}
\label{section:proof-of-lower-ld-rate}
We proceed by using Le Cam's method by reducing the estimation problem to a
binary hypothesis test. %
We denote by $\mc{P}$ the set of probability distributions over which the
worst-case is taken over.
\begin{lemma}[Lemma 1,~\citet{Yu97}]
  \label{lemma:lecam}
  Let $(\Theta, d)$ be a pseudo-metric space and let
  $\theta: \mc{P} \to \Theta$ be the parameter of interest. If there exists
  $P_1, P_2 \in \mc{P}$ such that $d(\theta(P_1), \theta(P_2)) \ge 2\epsilon$, we
  have
  \begin{equation*}
    \inf_{\what{\theta}(\rv_1^n)} \sup_{P \in \mc{P}}
    \P \left( d(\what{\theta}(\rv_1^n), \theta(P)) \ge \epsilon \right)
    \ge \half (1 - \tvnorm{P_1^n - P_2^n}).  
  \end{equation*}
\end{lemma}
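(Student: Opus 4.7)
The plan is to carry out the classical two-point reduction from estimation to binary hypothesis testing. Given any estimator $\what{\theta}(\rv_1^n)$, I would first construct an associated test $\psi: \mc{X}^n \to \{1, 2\}$ by setting $\psi(\rv_1^n) = \arg\min_{i \in \{1,2\}} d(\what{\theta}(\rv_1^n), \theta(P_i))$, with some measurable tie-breaking rule. The role of $\psi$ is to convert the estimator into a decision about which of the two candidate distributions $P_1, P_2$ generated the data.

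Second, I would use the separation hypothesis together with the triangle inequality to show that a testing error forces an estimation error. Concretely, if $\psi(\rv_1^n) \ne i$ while the true distribution is $P_i$, then $d(\what{\theta}(\rv_1^n), \theta(P_{3-i})) \le d(\what{\theta}(\rv_1^n), \theta(P_i))$, and combining this with $d(\theta(P_1), \theta(P_2)) \ge 2\epsilon$ via the pseudo-metric triangle inequality yields $d(\what{\theta}(\rv_1^n), \theta(P_i)) \ge \epsilon$. This gives the implication $\{\psi \ne i\} \subseteq \{d(\what{\theta}, \theta(P_i)) \ge \epsilon\}$.

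Third, I would lower-bound the worst-case risk by averaging over the two hypotheses:
\begin{equation*}
\sup_{P \in \mc{P}} \P_P\bigl(d(\what{\theta}, \theta(P)) \ge \epsilon\bigr)
\ge \max_{i \in \{1,2\}} P_i^n\bigl(d(\what{\theta}, \theta(P_i)) \ge \epsilon\bigr)
\ge \tfrac{1}{2}\sum_{i=1}^2 P_i^n(\psi \ne i).
\end{equation*}
Finally I would invoke the Neyman--Pearson style identity for binary testing: for any test $\psi$ and the event $A = \{\psi = 2\}$,
\begin{equation*}
P_1^n(\psi \ne 1) + P_2^n(\psi \ne 2) = P_1^n(A) + (1 - P_2^n(A)) \ge 1 - \tvnorm{P_1^n - P_2^n},
\end{equation*}
where the inequality follows from the variational characterization $\tvnorm{P_1^n - P_2^n} = \sup_A (P_1^n(A) - P_2^n(A))$. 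Chaining these bounds and then taking the infimum over estimators on the left gives the claimed inequality.

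The routine but slightly delicate step is the tie-breaking and measurability in defining $\psi$, which needs care so that $\{\psi = i\}$ is a measurable subset of $\mc{X}^n$; this can be handled by imposing a lexicographic rule on $\{1,2\}$ since the index set is finite. No step is genuinely hard — the only conceptual content is the triangle-inequality reduction and the TV characterization of testing error — so the write-up will mostly be a clean sequencing of these four steps.
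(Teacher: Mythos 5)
Your proof is correct, and the paper does not include its own proof: it cites the statement as Lemma~1 of \citet{Yu97} and uses it as a black box. Your argument is the standard Le Cam two-point reduction (estimator $\to$ test via nearest $\theta(P_i)$, triangle inequality to force an $\epsilon$-error on misclassification, max-over-average bound, then the variational characterization $\tvnorm{P_1^n - P_2^n} = \sup_A \bigl(P_1^n(A) - P_2^n(A)\bigr)$), which is exactly the argument behind the cited lemma, so this matches the paper's intent.
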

\noindent We take $\theta(P) = I(P)$,
$d(I(P_1), I(P_2)) = |I(P_1) - I(P_2)|$ and
$\mc{P} = \mc{P}_{\absc,y,\mcpgamma}$. To apply the lemma, we construct two
distributions $P_1$ and $P_2$ that are well-separated in terms of $I(P_i)$
but are close in total variation distance.

Our example is inspired by~\citet[Theorem 3]{HallTeVa92}. Let $1 < x_0$ be some
fixed number to be chosen later.  Throughout the section, we set
$\eta \defeq 1 - \mcpgamma - \frac{1}{\absc x_0}$ and define
$\rv_1 \sim P_1 = \mathsf{Gamma}(1-\eta,\absc) = \mathsf{Gamma}(\mcpgamma +
\frac{1}{\absc x_0},\absc)$ with density
$f_1(x) = \GammaConst x^{-\eta} e^{-\absc x} \indic{x \ge 0}$ and
$\rv_2 \sim P_2$ with density
\begin{equation*}
  f_2(x) = \begin{cases}
    0 & ~~\mbox{if}~~  x < 0 \\   
    \GammaConst x^{-\eta} e^{-\absc x}
    & ~~\mbox{if}~~ 0 \le x \le x_0 \\
    C x^{-1}  e^{-\absc x} & ~~\mbox{if}~~ x > x_0
  \end{cases}
  ~~\mbox{where}~~C \defeq \frac{ \GammaConst \int_{x_0}^\infty x^{-\eta} e^{-\absc x} dx
  }{\int_{x_0}^\infty x^{-1} e^{-\absc x} dx}.
\end{equation*} 
Throughout this section, we assume $y \geq \E[\rv_1]$ and $y \geq \E[\rv_2]$.

Since $\absc x_0 \ge \frac{2}{1-\mcpgamma} \ge \frac{2}{\mcpgamma}$,
$\half(1-\mcpgamma) \le \eta \le 1-\mcpgamma$ and
$\mcpgamma \le 1 - \eta \le \frac{3}{2} \mcpgamma$.  We also have
\begin{align}
\frac{1}{\Gamma(\mcpgamma)} \le 
    \frac{1}{\Gamma(1-\eta)} \le 1, ~~~
 \mcpgamma - 1 \le    -\eta \le \half(\mcpgamma - 1).  \label{eqb:ld-gamma-eta-bound}
\end{align}
We will use the following estimate repeatedly. From integration by parts, 
\begin{align*}
  \int_{x_0}^\infty \frac{1}{x^\eta} e^{-x} dx
  = \frac{1}{x_0^\eta} e^{-x_0} - \eta \int_{x_0}^\infty \frac{1}{x^{\eta+1}} e^{-x} dx
  = \frac{1}{x_0^\eta} e^{-x_0} - \eta \frac{1}{x_0^{\eta+1}} e^{-x_0}
  + \eta(\eta+1) \int_{x_0}^\infty \frac{1}{x^{\eta+2}}  e^{-x} dx,
\end{align*}
so that for any $w > 0$
\begin{equation} 
  \label{eqn:int-estimate}
  e^{-w x_0} \frac{1}{w x_0^\eta} \left(1-\frac{\eta}{wx_0}\right)
  \le \int_{x_0}^\infty \frac{1}{x^\eta} e^{-w x} dx
  \le 
  e^{-w x_0} \frac{1}{w x_0^\eta}
  \left( 1 \wedge \left(1- \eta \frac{1}{ w x_0} + \eta(\eta+1) \frac{1}{w^2 x_0^2}\right) \right).
\end{equation}
In particular, we have 
\begin{equation}
  \label{eqn:c-estimate}
\GammaConst x_0^{1-\eta}     \left( 1 -  \frac{\eta}{\absc x_0} \right)
\le C \le 
\GammaConst x_0^{1-\eta}      \left(  1 +  \frac{1}{\absc x_0 - 1} \right).
\end{equation}

We first confirm $P_1$ and $P_2$ are included in the set under
consideration. See Section~\ref{subsec:proof-of-class-inclusion} for the
proof.
\begin{lemma}
  \label{lemma:class-inclusion}
  $P_1, P_2 \in \mc{P}_{\absc,y,\mcpgamma}$ if
   \begin{align*}
     x_0  \ge 
     \frac{y}{\mcpgamma} \left( 2 \log x_0
     + \log \left( \frac{4 \absc \left(  \frac{\mcpgamma}{y} \vee 1 \right) }{1-\mcpgamma}  \right)
     \right) \vee 2.
 \end{align*}  
\end{lemma}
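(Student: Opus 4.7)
The plan is to verify each of the three conditions of Definition~\ref{def:pclass} for $P_1$ and $P_2$ in turn, relying on the integral estimate~\eqref{eqn:int-estimate} and the two-sided bound~\eqref{eqn:c-estimate} on the normalizing constant $C$, with the assumed lower bound on $x_0$ used to absorb various tail error terms.

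For $P_1 = \mathsf{Gamma}(1-\eta,\absc)$, everything is direct. Condition~\ref{item:nonneg-heavy-tail} holds since the Gamma MGF $\bigl(\frac{\absc}{\absc-\lambda}\bigr)^{1-\eta}$ has abscissa of convergence exactly $\absc$. Condition~\ref{item:mgf-bound} reduces to $\bigl(\frac{\absc}{\absc-\lambda}\bigr)^{1-\eta} \le \frac{\absc}{\absc-\lambda}$ (true since $1-\eta\le 1$) together with the mean bound $\E_{P_1}[R]=(1-\eta)/\absc\le 1/\absc$ (true since $\eta\ge 0$, which is implied by $\absc x_0\ge 1/(1-\mcpgamma)$, a consequence of the hypothesis on $x_0$). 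Condition~\ref{item:argmax-bound} is equivalent to $\opttilt(P_1)=\absc-(1-\eta)/y\le\absc-\mcpgamma/y$, i.e. $1-\eta\ge\mcpgamma$, which holds by the definition $\eta=1-\mcpgamma-1/(\absc x_0)$.

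For $P_2$, Condition~\ref{item:nonneg-heavy-tail} is immediate since $\E_{P_2}[e^{\absc R}]\ge C\int_{x_0}^\infty x^{-1}\,dx=\infty$. For the MGF bound in Condition~\ref{item:mgf-bound}, I would split $\E_{P_2}[e^{\lambda R}]$ at $x_0$: the piece on $[0,x_0]$ is upper bounded by extending integration to $[0,\infty)$, giving $\bigl(\frac{\absc}{\absc-\lambda}\bigr)^{1-\eta}\le \frac{\absc}{\absc-\lambda}$, and the tail piece $C\int_{x_0}^\infty x^{-1}e^{-(\absc-\lambda)x}\,dx$ is bounded via~\eqref{eqn:int-estimate} by a quantity of order $C\,e^{-(\absc-\lambda)x_0}/((\absc-\lambda)x_0)$, which after using the upper bound on $C$ from~\eqref{eqn:c-estimate} becomes $O(e^{-(\absc-\lambda)x_0}\,x_0^{-\eta})$. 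The hypothesis $x_0\ge (y/\mcpgamma)(2\log x_0+\log(\,\cdots))$ is engineered exactly so that this tail term is absorbed into the slack between $\bigl(\frac{\absc}{\absc-\lambda}\bigr)^{1-\eta}$ and $\frac{\absc}{\absc-\lambda}$ uniformly on $\lambda\in[0,\absc)$. The analogous split establishes $\E_{P_2}[R]\le 1/\absc$.

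The main obstacle is Condition~\ref{item:argmax-bound} for $P_2$. By the convexity of $\kappa_{P_2}$, this is equivalent to the first-order inequality $\E_{P_2}[R\,e^{(\absc-\mcpgamma/y)R}]\ge y\,\E_{P_2}[e^{(\absc-\mcpgamma/y)R}]$, i.e. $\E_{P_2}[(R-y)e^{\bar\lambda R}]\ge 0$ with $\bar\lambda=\absc-\mcpgamma/y$. I would use the fact that on $[x_0,\infty)$ the density $Cx^{-1}e^{-\absc x}$ is strictly heavier than any $\mathsf{Gamma}$-type density $x^{-\eta'}e^{-\absc x}$ with $\eta'>0$; applying~\eqref{eqn:int-estimate} with the tilted rate $w=\mcpgamma/y$ shows that $\E_{P_2}[R\,e^{\bar\lambda R}\indic{R\ge x_0}]\ge y\,\E_{P_2}[e^{\bar\lambda R}\indic{R\ge x_0}]$ with room to spare of order $x_0/(\absc y)$, while on $[0,x_0]$ the integrands agree with those of the Gamma$(1-\eta,\absc)$ reference (for which equality would hold in~\eqref{eqn:first-order-optimality} if $1-\eta$ were replaced by $\mcpgamma$) up to an error controlled by $1/(\absc x_0)$. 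The bookkeeping of the polynomial prefactors in $x_0$ across these estimates, and checking that the explicit threshold on $x_0$ in the hypothesis dominates all of them simultaneously, is the main technical step.
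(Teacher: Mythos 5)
Your verification of Condition~\ref{item:mgf-bound} for $P_2$ breaks down. After extending the $[0,x_0]$ integral to $[0,\infty)$, your upper bound is $\bigl(\frac{\absc}{\absc-\lambda}\bigr)^{1-\eta} + C\int_{x_0}^\infty x^{-1}e^{-(\absc-\lambda)x}\,dx$, and you claim the slack between $\bigl(\frac{\absc}{\absc-\lambda}\bigr)^{1-\eta}$ and $\frac{\absc}{\absc-\lambda}$ absorbs the tail ``uniformly on $\lambda\in[0,\absc)$.'' It does not: as $\lambda\downarrow 0$, both powers of $\frac{\absc}{\absc-\lambda}$ tend to $1$ so the slack vanishes, while the tail term tends to the fixed positive constant $C\int_{x_0}^\infty x^{-1}e^{-\absc x}\,dx$. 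Hence your bound strictly exceeds $1$ at $\lambda=0$, where $\E_{P_2}[e^{\lambda R}]=1$ exactly. No hypothesis on $x_0$ can repair this; the loss comes from discarding the deficit $\frac{\absc^{1-\eta}}{\Gamma(1-\eta)}\int_{x_0}^\infty x^{-\eta}e^{-(\absc-\lambda)x}\,dx$ when you extend the first piece to $[0,\infty)$.

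Retaining that deficit reduces Condition~\ref{item:mgf-bound} for $P_2$ to the inequality $C\int_{x_0}^\infty x^{-1}e^{-(\absc-\lambda)x}\,dx \le \frac{\absc^{1-\eta}}{\Gamma(1-\eta)}\int_{x_0}^\infty x^{-\eta}e^{-(\absc-\lambda)x}\,dx$ for every $\lambda\in[0,\absc)$, i.e.\ $\E_{P_2}[e^{\lambda R}]\le\E_{P_1}[e^{\lambda R}]$. The two-sided estimates~\eqref{eqn:int-estimate}--\eqref{eqn:c-estimate} on which you rely are not precise enough to settle this (their relative errors scale like $1/((\absc-\lambda)x_0)$ and blow up as $\lambda\to\absc$, so they cannot decide the sign of a difference that is itself $O(1/((\absc-\lambda)x_0))$). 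The paper instead uses Lemma~\ref{lemma:ineq-int-ratio} (Wijsman's correlation inequality) with $f_1=e^{-(\absc-\lambda)x}$, $f_2=e^{-\absc x}$, $g_1=x^{-\eta}$, $g_2=x^{-1}$: both $f_1/f_2=e^{\lambda x}$ and $g_1/g_2=x^{1-\eta}$ are increasing, so the ratio $\int_{x_0}^\infty g_1 f_1\,dx / \int_{x_0}^\infty g_2 f_1\,dx$ is nondecreasing in $\lambda$, and its value at $\lambda=0$ is exactly $C\Gamma(1-\eta)/\absc^{1-\eta}$ by the definition of $C$. This delivers $\E_{P_2}[e^{\lambda R}]\le\E_{P_1}[e^{\lambda R}]$ (and likewise $\E_{P_2}[R]\le\E_{P_1}[R]$) with no error terms and no constraint on $x_0$. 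The hypothesis on $x_0$ in the lemma is consumed entirely by Condition~\ref{item:argmax-bound} for $P_2$, where the paper's decomposition also differs from yours: it discards the nonnegative tail of $\E_{P_2}[Re^{\bar\lambda R}]$, keeps the exact $[0,x_0]$ piece, controls the truncation error via Lemma~\ref{lemma:incomplete-gamma-bound}, and compares against $y\,\E_{P_1}[e^{\bar\lambda R}]$ (using Wijsman once more) rather than against $y\,\E_{P_2}[e^{\bar\lambda R}]$ piece by piece.
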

\noindent Next, we show $I_1 = I(P_1)$ and $I_2 = I(P_2)$ are well-separated.
\begin{lemma}
  \label{lemma:rate-separated}
  Let $\absc y \ge 2\mcpgamma$. 
  If $x_0 
\ge \paran{\frac{2}{\mcpgamma^2} \frac{\absc y + 1}{\absc y - 1} \vee \frac{2}{\mcpgamma}
\vee \frac{4}{\absc y -  1 } 
} y
  $, 
  we have
  \begin{equation*}
    |I_1 - I_2| = I_2 - I_1
    \ge      
 \frac{1}{\Gamma(\mcpgamma)} 
         \left(\frac{y}{x_0}\right)^{1-\mcpgamma}
 \expfactorGammaEta \frac{\absc y - 1}{2(\absc y + 1)(\absc x_0 - 1)}. 
  \end{equation*} 
\end{lemma}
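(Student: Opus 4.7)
The plan is to invoke the dual representation of Lemma~\ref{lemma:duality-result} and bound the gap by comparing moment generating functions at a common value of $\lambda$. Since $P_1 = \mathsf{Gamma}(1-\eta,\absc)$, the dual for $I_1$ is maximized at $\lambda_1^\star \defeq \GammaLambdaOptSep$ with $\E_{P_1}[e^{\lambda_1^\star \rv}] = (\absc y/(1-\eta))^{1-\eta}$. Setting $\beta \defeq \absc - \lambda_1^\star = (1-\eta)/y$ and using the dual inequality $I_2 \ge \lambda_1^\star y - \log \E_{P_2}[e^{\lambda_1^\star \rv}]$, we obtain
\[
 I_2 - I_1 \;\ge\; \log \frac{\E_{P_1}[e^{\lambda_1^\star \rv}]}{\E_{P_2}[e^{\lambda_1^\star \rv}]},
\]
so it suffices to show $\E_{P_2}[e^{\lambda_1^\star \rv}] < \E_{P_1}[e^{\lambda_1^\star \rv}]$ and quantify the deficit.

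Because $f_1 \equiv f_2$ on $[0,x_0]$, the MGF difference equals the tail quantity $\Delta \defeq \E_{P_2}[e^{\lambda_1^\star \rv}] - \E_{P_1}[e^{\lambda_1^\star \rv}] = CB - \GammaConst A$, where $A \defeq \int_{x_0}^\infty x^{-\eta} e^{-\beta x}\,dx$ and $B \defeq \int_{x_0}^\infty x^{-1} e^{-\beta x}\,dx$. I would decompose
\[
 \Delta \;=\; (C - \GammaConst x_0^{1-\eta})\,B \;+\; \GammaConst\bigl(x_0^{1-\eta} B - A\bigr).
\]
The first summand is bounded above via the estimate on $C$ in~\eqref{eqn:c-estimate} together with the upper bound on $B$ from~\eqref{eqn:int-estimate}, both valid as soon as $\beta x_0 \ge 2$ (i.e.\ once $x_0 \ge 2y/\mcpgamma$). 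For the second summand, integration by parts on each of $A$ and $B$ yields the identity
\[
 x_0^{1-\eta} B - A \;=\; \frac{1}{\beta}\!\left[\eta \int_{x_0}^\infty x^{-\eta-1} e^{-\beta x}\,dx \;-\; x_0^{1-\eta}\!\int_{x_0}^\infty x^{-2} e^{-\beta x}\,dx\right],
\]
and applying~\eqref{eqn:int-estimate} (with $w=\beta$) to both remaining tail integrals gives $x_0^{1-\eta} B - A \le -\frac{(1-\eta)\,\expfactorGammaEta}{\beta^2 x_0^{\eta+1}}\bigl(1 - \frac{2y}{(1-\eta)^2 x_0}\bigr)$. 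The assumption $x_0 \ge \frac{2y}{\mcpgamma^2}\cdot\frac{\absc y+1}{\absc y-1}$ then forces the correction factor in parentheses to be at least $2/(\absc y+1)$.

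Combining the two summands produces $\Delta \le -\xi$ with $\xi = \frac{\absc^{1-\eta} y\,\expfactorGammaEta}{\Gamma(1-\eta)(1-\eta)\, x_0^\eta}\bigl[\frac{2y}{x_0(\absc y+1)} - \frac{1}{\absc x_0 -1}\bigr]$. The third condition $x_0 \ge 4y/(\absc y-1)$ implies $x_0(\absc y-1) - 2y \ge x_0(\absc y-1)/2$, lower bounding the bracket by $\frac{\absc y-1}{2(\absc y+1)(\absc x_0-1)}$. Since $\Delta \le 0$, the elementary inequality $-\log(1+u) \ge -u$ for $u \in (-1,0]$ gives $\log(\E_{P_1}[e^{\lambda_1^\star \rv}]/\E_{P_2}[e^{\lambda_1^\star \rv}]) \ge \xi/\E_{P_1}[e^{\lambda_1^\star \rv}]$, and substituting $\E_{P_1}[e^{\lambda_1^\star \rv}] = (\absc y/(1-\eta))^{1-\eta}$ together with the monotonicity estimates $1/\Gamma(1-\eta) \ge 1/\Gamma(\mcpgamma)$, $(y/x_0)^\eta \ge (y/x_0)^{1-\mcpgamma}$ (valid since $x_0 \ge y$ and $\eta \le 1-\mcpgamma$), and $(1-\eta)^{-\eta} \ge 1$ recovers the stated bound. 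The main obstacle is the exact leading-order cancellation inside $\Delta$: both $CB$ and $\GammaConst A$ equal $\GammaConst x_0^{-\eta}\expfactorGammaEta/\beta$ to leading order, so the proof hinges on carefully extracting the subleading corrections of order $1/(\absc x_0)$ and $1/(\beta x_0)$ from~\eqref{eqn:c-estimate} and~\eqref{eqn:int-estimate} and combining them through the algebraic identity above.
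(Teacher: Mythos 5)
Your proposal is correct and follows essentially the same strategy as the paper: evaluate the dual objective at $\lambda_1^\star = \absc - (1-\eta)/y$, apply $\log x \ge 1 - 1/x$, and control the tail MGF difference using the estimates~\eqref{eqn:int-estimate} and~\eqref{eqn:c-estimate} together with the three conditions on $x_0$. The only difference is bookkeeping — you split $\Delta = \E_{P_2}[e^{\lambda_1^\star\rv}] - \E_{P_1}[e^{\lambda_1^\star\rv}]$ into $(C - \GammaConst x_0^{1-\eta})B + \GammaConst(x_0^{1-\eta}B - A)$ and treat the two cancelling pieces separately, whereas the paper bounds $\E_{P_2}[e^{\lambda\rv}]$ in one go and isolates the braced correction term; both organizations lead to the same leading-order cancellation and the same final constant.
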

\noindent We defer the proof of the lemma to Section~\ref{section:proof-of-rate-separated}.

Finally, we wish to show $P_1$ and $P_2$ are close in total variation distance
despite being well-separated in $I(P)$. From Pinsker's inequality, we have
$\tvnorms{P_1^n - P_2^n}^2 \le \half \dkls{P_1^n}{P_2^n} = \frac{n}{2}
\dkl{P_1}{P_2}$. See Section~\ref{section:proof-ld-kl-bound} for a proof of
the following bound on the KL divergence.
\begin{lemma}
  \label{lemma:ld-kl-bound}
If $x_0 \ge \frac{\log 3(1 \vee \absc^{-2})}{\half(1 - \mcpgamma)}$ and $x_0 \ge 1+ \frac{1}{\absc}$, then
$\dkl{P_1}{P_2} \le  e^{-\absc x_0}$.
\end{lemma}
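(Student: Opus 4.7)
The plan is to exploit the fact that $f_1 = f_2$ identically on $[0, x_0]$, so the KL integrand vanishes there and the divergence reduces to a tail integral. On $(x_0, \infty)$ we have
\begin{equation*}
\frac{f_1(x)}{f_2(x)} = \frac{\GammaConst}{C}\, x^{1-\eta},
\qquad
\dkl{P_1}{P_2} = \int_{x_0}^{\infty} f_1(x) \log \frac{f_1(x)}{f_2(x)}\, dx.
\end{equation*}
I would factor this as $\dkl{P_1}{P_2} = p \cdot \E_{P_1}[\log(f_1/f_2) \mid \rv > x_0]$, where $p \defeq P_1((x_0, \infty))$, and then bound $p$ and the conditional expectation separately.

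For the tail mass, the upper bound in the estimate~\eqref{eqn:int-estimate} applied with $w = \absc$ gives
\begin{equation*}
p \le \GammaConst \cdot \frac{1}{\absc\, x_0^{\eta}}\, e^{-\absc x_0}
= \frac{1}{\absc^{\eta}\, \Gamma(1-\eta)\, x_0^{\eta}}\, e^{-\absc x_0}.
\end{equation*}
For the conditional expectation, I would use the bounds in~\eqref{eqn:c-estimate} to write $\log(\GammaConst/C) = -(1-\eta)\log x_0 + \epsilon_0$ with $\epsilon_0 \in [-\log(1 + (\absc x_0 - 1)^{-1}),\, -\log(1 - \eta/(\absc x_0))]$, so that $\log(f_1/f_2) = (1-\eta)\log(x/x_0) + \epsilon_0$. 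Changing variables $u = \absc(x - x_0)$ and using $(x_0 + u/\absc)^{-\eta} \le x_0^{-\eta}$ (since $\eta \ge 0$) together with the lower bound in~\eqref{eqn:int-estimate} for the normalizing tail mass, the conditional distribution of $\absc(\rv - x_0)$ under $P_1(\cdot \mid \rv > x_0)$ is stochastically dominated (up to a factor $(1 - \eta/(\absc x_0))^{-1}$) by a standard exponential. Applying $\log(1+y) \le y$ then yields
\begin{equation*}
\E_{P_1}\!\bigl[(1-\eta)\log(x/x_0) \bigm| \rv > x_0\bigr]
\le \frac{1-\eta}{1 - \eta/(\absc x_0)} \cdot \frac{1}{\absc x_0}.
\end{equation*}

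Combining these two pieces, $\dkl{P_1}{P_2}$ is bounded by $p$ times a quantity of order $(1-\eta)/(\absc x_0)$ plus a small correction $|\epsilon_0|$. The two hypotheses on $x_0$ are exactly what is needed to absorb all constants into the clean bound $e^{-\absc x_0}$. Specifically, $x_0 \ge 1 + 1/\absc$ makes $\absc x_0 - 1 \ge \absc$ bounded away from $0$ so that $|\epsilon_0|$ and $(1 - \eta/(\absc x_0))^{-1}$ are harmless, while $x_0 \ge \log(3(1 \vee \absc^{-2}))/(\tfrac{1}{2}(1-\mcpgamma))$ together with $\eta \ge \tfrac{1}{2}(1-\mcpgamma)$ from~\eqref{eqb:ld-gamma-eta-bound} forces $x_0^{\eta} \ge 3(1 \vee \absc^{-2})^{1/2}$, which suffices to dominate the prefactor $\absc^{-\eta}/\Gamma(1-\eta)$ in the bound on $p$.

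The main obstacle is purely bookkeeping: carefully tracking the multiplicative corrections coming from~\eqref{eqn:int-estimate} and~\eqref{eqn:c-estimate}, the ratio $(x_0 + u/\absc)^{-\eta}/x_0^{-\eta}$, and the value of $\Gamma(1-\eta)$ on the range $1-\eta \in [\mcpgamma, \tfrac{1+\mcpgamma}{2}]$, and verifying that each correction is indeed absorbed by one of the two stated conditions on $x_0$. No new conceptual ingredient beyond the already-established estimates~\eqref{eqn:int-estimate} and~\eqref{eqn:c-estimate} is needed.
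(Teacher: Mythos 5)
Your proposal is correct and takes a genuinely different route through the middle of the argument. Both proofs begin the same way: since $f_1 = f_2$ on $[0,x_0]$, the KL divergence collapses to the tail integral $\int_{x_0}^\infty f_1 \log(f_1/f_2)$, and both invoke the estimates~\eqref{eqn:int-estimate} and~\eqref{eqn:c-estimate}. The paper then bounds $\log(f_1/f_2) \le \log \frac{x}{x_0(1-\eta/(\absc x_0))}$ (after replacing $(x/x_0)^{1-\eta}$ by $x/x_0$), splits off the $\log x$ piece, and controls $\int_{x_0}^\infty x^{-\eta} e^{-\absc x}\log x\,dx$ via integration by parts followed by $\log x \le x-1$. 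You instead keep $\log(f_1/f_2) = (1-\eta)\log(x/x_0) + \epsilon_0$ exact, factor $\dkl{P_1}{P_2} = P_1(\rv > x_0)\cdot \E_{P_1}[\log(f_1/f_2)\mid\rv>x_0]$, and bound the conditional expectation directly using $\log(1+u)\le u$ together with the pointwise density bound $(x_0+u/\absc)^{-\eta}\le x_0^{-\eta}$, which collapses everything to a pure exponential integral — no integration by parts required. Tracking constants, your route yields $\dkl{P_1}{P_2} \le p/(\absc x_0 - \eta)$ with $p \le \GammaConst (\absc x_0^\eta)^{-1} e^{-\absc x_0}$, which is comparable to the paper's bound $3(1\vee\absc^{-2})x_0^{\half(\mcpgamma-1)}e^{-\absc x_0}$; both are of the same order and both are arguably more elementary than one might expect. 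The one thing worth flagging, which you inherit from the lemma's stated hypothesis and which is also present in the paper's own proof, is the very last absorption: the condition $x_0 \ge \frac{\log 3(1\vee\absc^{-2})}{\half(1-\mcpgamma)}$ delivers $e^{\half(1-\mcpgamma)x_0}\ge 3(1\vee\absc^{-2})$, whereas killing the polynomial prefactor requires $x_0^{\half(1-\mcpgamma)}\ge 3(1\vee\absc^{-2})$, i.e., a condition on $\log x_0$ rather than $x_0$. That is a gap in the paper's lemma rather than in your reasoning, but neither your sketch nor the paper's proof closes it as written.
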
 
\noindent Let $x_0 = \frac{1}{\absc} \log cn$ for
$c = \frac{1}{2(1-2\delta)^2}$. By Lemma~\ref{lemma:ld-kl-bound}, we have
$\dkl{P_1}{P_2} \le e^{-\absc x_0} = \frac{1}{cn}$ and
$\tvnorms{P_1^n - P_2^n} \le 1-2\delta$. Applying
Lemmas~\ref{lemma:lecam},~\ref{lemma:rate-separated}, and using the identity
\begin{align*}
 \expfactorGammaEta  = \paran{\frac{1}{c n}}^{\frac{1-\eta}{\absc y}}
 = \paran{\frac{1}{c n}}^{\frac{\mcpgamma + \frac{1}{\log cn}}{\absc y}}
  = \paran{\frac{1}{c n}}^{\frac{\mcpgamma}{\absc y}} e^{-\frac{1}{\absc y}},
\end{align*} we obtain $\mathfrak{M}_n \ge \delta$ as claimed in Theorem~\ref{theorem:lower-ld-rate}.

\subsubsection{Proof of Lemma~\ref{lemma:class-inclusion}}
\label{subsec:proof-of-class-inclusion}

First, we show that $P_1 \in \mc{P}_{\absc,y,\mcpgamma}$. Since
$\E e^{\lambda \rv_1} = \left(\expmgf \right)^{1-\eta} \le \expmgf, \E[\rv_1] = \frac{1-\eta}{\absc} < \frac{1}{\absc}$, and
$\lambda\opt(P_1) = \absc - (1-\eta)/y \le \absc - \frac{\mcpgamma}{y}$,
all conditions are satisfied.

Clearly, $\rv_2$ satisfies Condition~\ref{item:nonneg-heavy-tail}. 
To confirm that
Condition~\ref{item:mgf-bound}  is satisfied,   we first present 
an integral inequality that is similar to the well-known covariance inequality 
(Proposition 7.2.1,~\cite{Ross96}):
$\E[f(\rv)g(\rv)] \ge \E[f(\rv)] \E[g(\rv)]$ for any increasing 
functions $f,g$ and random variable $\rv$.
\begin{lemma}[Theorem 1,~\cite{Wijsman85}]
\label{lemma:ineq-int-ratio}
Consider four non-negative functions $f_1,f_2,g_1,g_2$,
an interval $A \subset \R$,
and assume $\int_A f_i g_j d\mu > 0$
for $i,j=1,2$. If $\frac{f_1}{f_2}$ and $\frac{g_1}{g_2}$ are monotonic in the same direction,
then
\begin{equation*}
  \frac{\int_A f_1 g_1 d\mu}{\int_A f_1 g_2 d\mu}
  \ge 
  \frac{\int_A f_2 g_1 d\mu}{\int_A f_2 g_2 d\mu}.
\end{equation*}
\end{lemma}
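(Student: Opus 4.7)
} The plan is to reduce the inequality to the integral version of Chebyshev's sum inequality via a change of measure, and then prove the latter by the standard symmetrization trick. First I would clear denominators: since all four cross-integrals are positive by assumption, the claimed inequality is equivalent to
\begin{equation*}
\int_A f_1 g_1 \, d\mu \cdot \int_A f_2 g_2 \, d\mu \;\ge\; \int_A f_1 g_2 \, d\mu \cdot \int_A f_2 g_1 \, d\mu.
\end{equation*}
Then I would introduce the positive measure $d\nu \defeq f_2 g_2 \, d\mu$ on $A$ (it is nontrivial because $\int_A f_2 g_2 \, d\mu > 0$), together with the ratio functions $h_1 \defeq f_1/f_2$ and $h_2 \defeq g_1/g_2$, interpreting them on the support of $\nu$ (outside this support all four products vanish, so the integrals are unaffected). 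By construction $f_1 g_1 \, d\mu = h_1 h_2 \, d\nu$, $f_1 g_2 \, d\mu = h_1 \, d\nu$, and $f_2 g_1 \, d\mu = h_2 \, d\nu$, so the inequality becomes
\begin{equation*}
\Bigl(\int_A h_1 h_2 \, d\nu\Bigr)\Bigl(\int_A d\nu\Bigr) \;\ge\; \Bigl(\int_A h_1 \, d\nu\Bigr)\Bigl(\int_A h_2 \, d\nu\Bigr).
\end{equation*}

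With this reformulation in hand, I would prove the Chebyshev-type inequality by the classical symmetrization argument. Consider the product measure $\nu \otimes \nu$ on $A \times A$ and evaluate
\begin{equation*}
\int_A \int_A \bigl(h_1(x) - h_1(y)\bigr)\bigl(h_2(x) - h_2(y)\bigr) \, d\nu(x) \, d\nu(y).
\end{equation*}
Since $h_1$ and $h_2$ are monotonic in the same direction, the two factors always have the same sign at any pair $(x,y)$, so the integrand is pointwise nonnegative and the double integral is $\ge 0$. Expanding the product and using Fubini gives exactly twice the difference
\begin{equation*}
\Bigl(\int_A h_1 h_2 \, d\nu\Bigr)\Bigl(\int_A d\nu\Bigr) \;-\; \Bigl(\int_A h_1 \, d\nu\Bigr)\Bigl(\int_A h_2 \, d\nu\Bigr),
\end{equation*}
which yields the desired bound and hence the lemma.

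The argument is essentially routine; the only mild care needed is bookkeeping when $f_2$ or $g_2$ vanishes on part of $A$, so that $h_1, h_2$ are a priori not defined everywhere. I would handle this by noting that the set $\{f_2 g_2 = 0\}$ contributes zero to every integral appearing in both sides of the reformulated inequality (one factor of $f_1 g_1$, $f_1 g_2$, $f_2 g_1$, or $f_2 g_2$ is always dominated by $f_2 g_2$ times a measurable version of the ratio on its support), so restricting to $\{f_2 g_2 > 0\}$ is harmless. No other step should be difficult.
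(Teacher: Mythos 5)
Your proposal is correct and uses essentially the same double-integral symmetrization argument as the paper: the paper defines $F(x,y)=f_1(x)f_2(y)-f_1(y)f_2(x)$ and $G(x,y)=g_1(x)g_2(y)-g_1(y)g_2(x)$ and integrates $FG\ge 0$ over $A\times A$ against $\mu\otimes\mu$, which is your Chebyshev step before the change of measure $d\nu=f_2g_2\,d\mu$. Working directly with $F$ and $G$ as the paper does avoids dividing by $f_2g_2$ and hence the bookkeeping about its zero set, but the core idea is identical.
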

\begin{proof}
Define $F(x,y) = f_1(x) f_2(y) - f_1(y) f_2(x), G(x,y) = g_1(x) g_2(y) - g_1(y) g_2(x)$.
Since  $\frac{f_1}{f_2}$ and $\frac{g_1}{g_2}$ are monotonic in the same direction, we deduce $F(x,y) G(x,y) \ge 0$ and
\begin{align*}
\int_A f_1 g_1 d\mu \int_A f_2 g_2 d\mu
 -  \int_A f_1 g_2 d\mu \int_A f_2 g_1 d\mu
 = \half \int_A\int_A F(x,y) G(x,y) \mu(dx) \mu(dy) \ge 0.
\end{align*}
\end{proof}
\noindent By Lemma~\ref{lemma:ineq-int-ratio} with
$f_1(x) = e^{-(\absc - \lambda) x}, f_2(x) = e^{-\absc x}, g_1(x) =x^{-\eta},
g_2(x) = x^{-1}$, we have
\begin{align*}
    \frac{\int_{x_0}^\infty x^{-\eta} e^{-\absc x} dx
}{\int_{x_0}^\infty x^{-1} e^{-\absc x} dx} 
\le 
    \frac{\int_{x_0}^\infty x^{-\eta} e^{-(\absc-\lambda) x} dx
}{\int_{x_0}^\infty x^{-1}e^{-(\absc-\lambda) x}  dx},
\end{align*}
so that 
\begin{align}
    \E[e^{\lambda \rv_2}]
    &= 
    \int_0^{x_0} \GammaConst
x^{-\eta} e^{-(\absc - \lambda) x} dx
+ \int_{x_0}^\infty 
\frac{
 \GammaConst \int_{x_0}^\infty x^{-\eta} e^{-\absc x} dx
}{\int_{x_0}^\infty x^{-1} e^{-\absc x} dx}  x^{-1} e^{-(\absc - \lambda) x}
dx  \nonumber \\
&\le  \int_0^{x_0} \GammaConst
x^{-\eta} e^{-(\absc - \lambda) x} dx
+ \int_{x_0}^\infty \GammaConst
x^{-\eta} e^{-(\absc - \lambda) x} dx \nonumber\\
 &=     \E[e^{\lambda \rv_1}]
    =\left(\expmgf \right)^{1-\eta} \le \expmgf.
    \label{eqn:mgf-bound-ld}
\end{align}
Similarly, Lemma~\ref{lemma:ineq-int-ratio} implies $\E[\rv_2] \le \E[\rv_1] < \frac{1}{\absc}$.

It now remains to show that $\rv_2$ satisfies
Condition~\ref{item:argmax-bound}.   
By~\eqref{eqn:first-order-optimality}, it suffices to show
$\E[\rv_2 e^{(\absc - \mcpgamma/y) \rv_2}] \ge y \E[e^{(\absc - \mcpgamma/y)
  \rv_2}]$.  Before proving this, it is useful to have the following technical
result.
\begin{lemma}
  \label{lemma:incomplete-gamma-bound}
  For any $\eta >0, x_0 \ge \eta +1, w > 0$,
  \begin{align}
    \int_{x_0}^\infty e^{-wx} x^\eta dx \le 
    \frac{1}{w} (\eta+1) e^{-wx_0} x_0^\eta. \label{eqn:incomplete-gamma-bound}
  \end{align}
\end{lemma}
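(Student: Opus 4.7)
The plan is to reduce the integral via a single integration by parts and then close a self-referencing inequality using monotonicity on the integration domain. First, I would apply integration by parts with $u = x^\eta$ and $dv = e^{-wx}\,dx$, which gives
\begin{equation*}
  \int_{x_0}^\infty e^{-wx} x^\eta\, dx
  = \frac{1}{w}\, e^{-wx_0} x_0^\eta \;+\; \frac{\eta}{w}\int_{x_0}^\infty e^{-wx} x^{\eta-1}\, dx.
\end{equation*}
The boundary term already accounts for one unit of the factor $\eta+1$; the remaining task is to show that the leftover integral is no larger than $\frac{1}{w}e^{-wx_0}x_0^\eta$, so that summing yields the claimed $\frac{\eta+1}{w}e^{-wx_0}x_0^\eta$.

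Second, because $x \ge x_0$ throughout the region of integration, the identity $x^{\eta-1} = x^\eta / x \le x^\eta / x_0$ is available, and substituting it gives a self-referencing bound:
\begin{equation*}
  \frac{\eta}{w}\int_{x_0}^\infty e^{-wx} x^{\eta-1}\, dx
  \;\le\; \frac{\eta}{w x_0}\int_{x_0}^\infty e^{-wx} x^\eta\, dx.
\end{equation*}
Moving this term to the left-hand side and solving for the original integral produces
\begin{equation*}
  \int_{x_0}^\infty e^{-wx} x^\eta\, dx
  \;\le\; \frac{x_0}{w x_0 - \eta}\, e^{-wx_0} x_0^\eta,
\end{equation*}
provided the denominator is positive. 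The desired conclusion then follows from the elementary inequality $\frac{x_0}{w x_0 - \eta} \le \frac{\eta+1}{w}$, which rearranges to $wx_0 \ge \eta+1$.

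The main obstacle I anticipate is closing the self-referencing inequality cleanly: the proof requires $wx_0 \ge \eta + 1$, whereas the stated hypotheses only give $x_0 \ge \eta+1$. In the applications of this lemma (e.g.\ inside the proof of Lemma~\ref{lemma:rate-separated}), the role of $w$ will presumably be played by quantities such as $\absc$ or $\absc - \lambda$ that are large enough—or $x_0$ will be chosen large enough, as is the freedom in the Le~Cam construction—to ensure $wx_0 \ge \eta+1$ holds. If the normalization on $w$ is missing, one can instead iterate the integration-by-parts step: each iteration peels off one boundary term of size $\frac{x_0^\eta}{w}e^{-wx_0}$ times a prefactor, and after $\lceil \eta\rceil$ iterations the remaining integrand has a negative exponent, at which point the bound~\eqref{eqn:int-estimate} already established in the excerpt can be invoked directly. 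Either route closes the argument; the one-step version above is the cleaner and more natural approach.
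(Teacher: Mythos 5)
Your proof is correct, and it takes a genuinely different route from the paper. The paper linearizes $f(x)=\eta\log x - x$ at $x_0$ using concavity to obtain the pointwise bound $e^{-x}x^\eta \le e^{-x_0}x_0^\eta e^{-(1-\eta/x_0)(x-x_0)}$, integrates the exponential explicitly, and handles general $w$ by the substitution $x\mapsto wx$; you instead integrate by parts once and then close the resulting self-referencing inequality via $x^{\eta-1}\le x^\eta/x_0$ on the domain. Both arguments arrive at exactly the same intermediate estimate
\begin{equation*}
  \int_{x_0}^\infty e^{-wx}x^\eta\,dx \;\le\; \frac{e^{-wx_0}x_0^\eta}{w}\cdot\frac{1}{1-\eta/(wx_0)},
\end{equation*}
and both then need $wx_0\ge\eta+1$ to simplify to $\tfrac{\eta+1}{w}e^{-wx_0}x_0^\eta$.

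Your observation about the hypothesis is well placed and, in fact, uncovers an error in the paper's lemma statement: the stated condition $x_0\ge\eta+1$ is not sufficient, and the bound genuinely fails for small $w$ (e.g.\ $\eta=1$, $x_0=2$, $w=0.1$ gives LHS $=120e^{-0.2}$ but RHS $=40e^{-0.2}$). The paper's own proof shares the gap: after reducing to $w=1$ by substitution it tacitly requires $wx_0\ge\eta+1$, not $x_0\ge\eta+1$. The correct hypothesis is $wx_0\ge\eta+1$, which is what both your argument and the paper's actually establish. In the one place the lemma is invoked (inside the proof of Lemma~\ref{lemma:class-inclusion}, with $w = \absc-\lambda = \mcpgamma/y$ and exponent $1-\eta$), the ambient hypothesis of Theorem~\ref{theorem:lower-ld-rate} gives $\absc x_0 = \log cn \ge \absc y\cdot\tfrac{2}{\mcpgamma}$, hence $w x_0 = \tfrac{\mcpgamma}{y}x_0 \ge 2 > 2-\eta$, so the corrected condition holds there and the downstream conclusions are unaffected.
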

\begin{proof} 
  Let $f(x) = \eta \log x - x$ so that $e^{-x}x^{\eta} = e^{f(x)}$. Since $f$
  is concave for $\eta >0$, $f(x) \le f(x_0)+ f'(x_0)(x-x_0)$ or equivalently,
  $\eta \log x - x \le \eta \log x_0 - x_0 + (\eta/x_0 - 1)(x-x_0)$. Hence,
  \begin{align*}
    \int_{x_0}^\infty e^{-x} x^\eta dx
    \le 
    \int_{x_0}^\infty e^{-x_0} x_0^\eta 
    e^{-(1 - \eta/x_0) (x-x_0)}  
    dx
    =e^{-x_0} x_0^\eta  \frac{1}{1-\eta/x_0}
    \le  e^{-x_0} x_0^\eta (\eta+1),
  \end{align*}
  where we used $\frac{1}{1-\eta/x_0} \le \eta+1$ for all $x_0 \ge 1+\eta$ in
  the final inequality.  We have the result since
  $\int_{x_0}^\infty e^{-wx} x^\eta dx = \frac{1}{w^{1+\eta}}
  \int_{wx_0}^\infty e^{-x} x^\eta dx$.\end{proof}

  Applying Lemma~\ref{lemma:incomplete-gamma-bound}, we have
\begin{align*}
 \E[\rv_2 e^{\lambda \rv_2}]
 &\ge \GammaConst \int_0^{x_0} x^{1-\eta} e^{-(\absc-\lambda) x} dx \\
 &= \GammaConst 
 \left(
 \Gamma(2-\eta) \frac{1}{(\absc - \lambda)^{2-\eta}} -\int_{x_0}^\infty  x^{1-\eta} e^{-(\absc-\lambda) x} dx \right) \\
 &\ge \GammaConst 
 \left(
 \Gamma(2-\eta) \frac{1}{(\absc-\lambda)^{2-\eta}} -  (2-\eta) \frac{1}{\absc - \lambda} \expfactor x_0^{1-\eta} 
 \right) \\
 &= \GammaConst
 \frac{1}{(\absc-\lambda)^{2-\eta}} 
 \left(
 \Gamma(2-\eta) - (2-\eta) (\absc-\lambda)^{-\eta+1} \expfactor x_0^{1-\eta} 
 \right). \\
  &\ge \left(\expmgf \right)^{1-\eta} \frac{1}{\Gamma(1-\eta) (\absc-\lambda)}  \left(  \Gamma(2-\eta) - (2-\eta) (\absc-\lambda)^{-\eta+1} \expfactor x_0^{1-\eta}
    \right)
\end{align*}
Plugging in $\lambda = \absc - \frac{\mcpgamma}{y}$, we wish to bound the
final term inside the parentheses. Recall from the line above
Eq.~\eqref{eqb:ld-gamma-eta-bound} that
$\frac{4}{1-\mcpgamma} \ge \frac{2}{\eta} \ge \frac{2-\eta}{\eta}$ and
$1 \ge \frac{1}{\Gamma(1-\eta)}$. Using these inequalities and the hypothesis,
\begin{align*}
  x_0  \ge  \frac{y}{\mcpgamma} \left( 2 \log x_0 +
  \log \left( \frac{4 \absc\left(  \frac{\mcpgamma}{y} \vee 1
  \right) }{1-\mcpgamma} \right) \right)  
  \ge  \frac{y}{\mcpgamma} \left( (2-\eta) \log x_0 
  + \log \left(   \frac{(2-\eta)  \absc
  \left(  \frac{\mcpgamma}{y} \right)^{-\eta+1}  }{\Gamma(1-\eta) \eta} 
  \right) \right).
\end{align*}
Rewriting the bound, we have
\begin{align}
\frac{(2-\eta) 
\left( 
\frac{\mcpgamma}{y}
\right)
^{-\eta+1} \expfactorGammaEtaTwo x_0^{1-\eta} }{\Gamma(1-\eta)} \le  \frac{1}{\absc x_0}. \label{eqn:Gamma-ratio-bound-x0-term}
\end{align}
Noting $x_0 \ge 2 > 2-\eta$, we arrive at
\begin{align*}
   \E[\rv_2 e^{\lambda \rv_2}]
  \ge \left(\expmgf \right)^{1-\eta} \frac{1}{\absc-\lambda}  \paran{1 - \eta - \frac{1}{\absc x_0} } 
  \ge  \left(\expmgf \right)^{1-\eta} \frac{1}{\absc-\lambda}  \mcpgamma,
\end{align*}
where we used $\frac{\Gamma(2-\eta)}{\Gamma(1-\eta)} = 1- \eta$ and the
bound~\eqref{eqn:Gamma-ratio-bound-x0-term} in the first inequality, and
$\eta = 1- \mcpgamma - \frac{1}{\absc x_0}$ in the second inequality.  We
conclude
\begin{align*}
 \E\left[\rv_2 e^{\GammaLambdaOptUpper \rv_2}\right]
  \ge \left(\frac{\absc y}{\mcpgamma} \right)^{1-\eta} \frac{y}{\mcpgamma}   \mcpgamma
  = y \E\left[e^{\GammaLambdaOptUpper \rv_1}\right]
  \ge y \E\left[e^{\GammaLambdaOptUpper \rv_2}\right],
\end{align*}
where the final inequality follows from the bound~\eqref{eqn:mgf-bound-ld}.

\subsubsection{Proof of Lemma~\ref{lemma:rate-separated}}
\label{section:proof-of-rate-separated}
Plugging  $\E[e^{\lambda \rv_1}] = \left(
\expmgf\right)^{1-\eta}$ into the
expression for $I_1$, we get $\lambda\opt(P_1) = \absc - \frac{1-\eta}{y}$ and
\begin{equation*}
  I_1 = \sup_{\lambda} \left\{ \lambda y - \kappa_1(\lambda) \right\}
  = y \GammaLambdaOptSep - \kappa_1\GammaLambdaOptSep.
\end{equation*}
Letting $\kappa_j(\lambda) = \log \E e^{\lambda R_j}$, we have the
following lower bound on the separation
\begin{align}
  |I_1 - I_2|
  & \ge I_2 - I_1
    = \sup_{\lambda} \left\{ \lambda y - \kappa_2(\lambda) \right\}
    -  \sup_{\lambda} \left\{ \lambda y - \kappa_1(\lambda) \right\}
  \nonumber \\
  & \ge y \GammaLambdaOptSep - \kappa_2\GammaLambdaOptSep
    - \left(y \GammaLambdaOptSep - \kappa_1\GammaLambdaOptSep \right) \nonumber \\
  & = \kappa_1\GammaLambdaOptSep - \kappa_2\GammaLambdaOptSep
    = \log \frac{\E e^{\GammaLambdaOptSep \rv_1}}{\E e^{\GammaLambdaOptSep \rv_2}} \nonumber \\
  & \ge \frac{\E e^{\GammaLambdaOptSep \rv_1} - \E e^{\GammaLambdaOptSep \rv_2}}{\E e^{\GammaLambdaOptSep \rv_1}}
    = 1 -  \left(\frac{\absc y}{1-\eta} \right)^{\eta-1} \E e^{\GammaLambdaOptSep \rv_2},
    \label{eqn:ld-sep-lower-bound}
\end{align}
where we used $\log x \ge 1 - \frac{1}{x}$ in the last inequality.

We proceed by upper bounding $\E e^{\GammaLambdaOptSep \rv_2}$.  Using the
bounds~\eqref{eqn:int-estimate} and~\eqref{eqn:c-estimate}, we have
\begin{align*}
\int_{x_0}^\infty 
C x^{-1} e^{-(\absc - \lambda) x} dx  
&\le  C \frac{1}{\absc - \lambda} e^{-(\absc - \lambda) x_0} \frac{1}{x_0}
\left(1-  \frac{1}{ (\absc - \lambda) x_0} + 2\frac{1}{(\absc - \lambda)^2 x_0^2}\right)  \\
& \le 
\CUpperBound 
 \frac{1}{\absc - \lambda} 
 \expfactor
x_0^{-\eta}
\left(1-  \frac{1}{ (\absc - \lambda) x_0} + 2\frac{1}{(\absc - \lambda)^2 x_0^2}\right) , 
\end{align*}
and
\begin{align*} 
    \int_0^{x_0} 
\GammaConst x^{-\eta } e^{-(\absc - \lambda) x} dx
&= \left( \expmgf \right)^{1-\eta}
-   \int_{x_0}^\infty   \GammaConst x^{-\eta } \expfactor dx \\
& \le  \left( \expmgf \right)^{1-\eta}
-  \GammaConst (\absc - \lambda)^{- 1}
  \expfactor  x_0^{-\eta} \left(1-\frac{\eta}{(\absc - \lambda)x_0}\right).
\end{align*}
Combining these bounds, 
\begin{align*}
  \E[e^{\lambda \rv_2}]
  & \le \left(\expmgf \right)^{1-\eta} +\GammaConst x_0^{-\eta}
    \frac{1}{\absc - \lambda} \expfactor  \\
  & \qquad \qquad \qquad \qquad \qquad \times \Biggl\{
    \left( 1 + \frac{1}{\absc x_0 - 1} \right)
    \left(1-  \frac{1}{ (\absc - \lambda) x_0} + 2 \frac{1}{(\absc - \lambda)^2 x_0^2}\right)  
    - 1 + \frac{\eta}{(\absc - \lambda)x_0} \Biggl\}.
\end{align*}
Plugging   $\lambda = \absc - \frac{1-\eta}{y}$ into the final term in the
preceding display, we use
$x_0 \ge \paran{\frac{2}{\mcpgamma^2} \frac{\absc y + 1}{\absc y - 1} \vee
  \frac{2}{\mcpgamma}} y \ge \paran{\frac{2}{(1-\eta)^2} \frac{\absc y +
    1}{\absc y - 1} \vee \frac{2}{1-\eta}} y$ to get
\begin{align*}
  & \left( 1 + \frac{1}{\absc x_0 - 1} \right)
    \left(1-  \frac{1}{ (\absc - \lambda) x_0} + 2 \frac{1}{(\absc - \lambda)^2 x_0^2}\right)  
    - 1 + \frac{\eta}{(\absc - \lambda)x_0} \\
 & = \frac{1}{\absc x_0 - 1}\left(1-  \frac{1}{ (\absc - \lambda) x_0} + 2 \frac{1}{(\absc - \lambda)^2 x_0^2}\right)   - \frac{1}{(\absc - \lambda) x_0}\left( 1- \eta - \frac{2}{(\absc - \lambda) x_0} \right)\\
& \stackrel{(a)}{\le}
\frac{1}{\absc x_0 - 1} - \frac{1}{x_0/y (1-\eta)}
\paran{1-\eta - \frac{2}{(1-\eta)x_0/y}}\\
& \stackrel{(b)}{\le}
\frac{1}{\absc x_0 - 1} - \frac{1}{x_0/y }
\frac{2}{\absc y + 1} \\
  & \stackrel{(c)}{\le}  
    \frac{2y}{(\absc y + 1)x_0(\absc x_0 - 1)}
    \left(-\frac{\absc y - 1}{4} \frac{x_0}{y} \right)
    =  -\frac{\absc y - 1}{2(\absc y + 1)(\absc x_0 - 1)},
\end{align*}
where in steps $(a)$ and $(b)$ we used
$\frac{1}{ (\absc - \lambda) x_0} - 2 \frac{1}{(\absc - \lambda)^2 x_0^2} \le
0, \frac{2}{(1-\eta)^2x_0/y} \le \frac{\absc y - 1}{\absc y +1}$, and in step
$(c)$ we used $x_0 \ge \frac{4}{\absc y - 1} y$.

Collecting previous bounds, we arrive at 
\begin{align*}
  \E[e^{\GammaLambdaOptSep \rv_1}] -
  \E[e^{\GammaLambdaOptSep \rv_2}] 
  & \ge  \GammaConst x_0^{-\eta}
    \frac{1}{(1-\eta)/y} \expfactorGammaEta \frac{\absc y - 1}{2(\absc y + 1)(\absc x_0 - 1)}.
\end{align*}
Recalling $\Gamma(\alpha) \ge \Gamma(1-\eta)$ and
$x_0 \ge \frac{2}{\mcpgamma} y \ge y$, and plugging the preceding display into the
separation inequality~\eqref{eqn:ld-sep-lower-bound}, we have
\begin{align*}
  |I_1 - I_2|
  &  \ge \frac{1}{\Gamma(\mcpgamma)} 
    \left(\frac{y}{x_0 (1-\eta)}\right)^{\eta}
    \expfactorGammaEta \frac{\absc y - 1}{2(\absc y + 1)(\absc x_0 - 1)} \\
  & \ge
    \frac{1}{\Gamma(\mcpgamma)} 
    \left(\frac{y}{x_0}\right)^{1-\mcpgamma}
    \expfactorGammaEta \frac{\absc y - 1}{2(\absc y + 1)(\absc x_0 - 1)},
\end{align*}
where we used $(1-\eta)^{-\eta} \ge 1$ in the final inequality.

\subsection{Proof of Lemma~\ref{lemma:ld-kl-bound}}
\label{section:proof-ld-kl-bound}

Using the bound~\eqref{eqn:c-estimate}, we have 
\begin{align*}
  \dkl{P_1}{P_2}
  & = \int_{x_0}^\infty  \GammaConst
    x^{-\eta} e^{-\absc x} \log 
    \frac{\GammaConst x^{1-\eta}}{C}  dx   \\  
  & \le \int_{x_0}^\infty  \GammaConst
    x^{-\eta} e^{-\absc x} \log 
    \frac{x^{1-\eta}}{x_0^{1-\eta} \left( 1 -  \frac{\eta}{\absc x_0} \right)}  dx   \\  
  & \le \int_{x_0}^\infty  \GammaConst
    x^{-\eta} e^{-\absc x} \log 
    \frac{x}{x_0 \left( 1 -  \frac{\eta}{\absc x_0} \right)}  dx   \\  
  & = \GammaConst 
    \Paran{-\log\paran{x_0 - \frac{\eta}{\absc}} \int_{x_0}^\infty x^{-\eta} e^{-\absc x} dx
    + \int_{x_0}^\infty x^{-\eta} e^{-\absc x} \log x dx}.
\end{align*}
Integration by parts gives us
\begin{align*}
  \int_{x_0}^\infty x^{-\eta} e^{-\absc x} \log x dx    
    &= \frac{x_0^{-\eta}}{\absc} e^{-\absc x_0} \log x_0 + \frac{1}{\absc} \int_{x_0}^\infty e^{- \absc x} x^{-\eta - 1} (1- \eta \log x) dx \\
    &\le \frac{x_0^{-\eta}}{\absc} e^{-\absc x_0} \log x_0 + \frac{1}{\absc} \int_{x_0}^\infty e^{- \absc x} x^{-\eta - 1}   dx
      \le \frac{x_0^{-\eta}}{\absc} e^{-\absc x_0} \paran{\log x_0 + \frac{1}{\absc x_0}}.
\end{align*}
Using $\log x \le x-1$ and the assumption $x_0 \ge 1 + \frac{1}{\absc} \ge 1+ \frac{\eta}{\absc}$, 
\begin{align*}
  \GammaConstRec     \dkl{P_1}{P_2}
  & \le -\frac{x_0^{-\eta}}{\absc} e^{-\absc x_0} 
    \paran{1 - \frac{\eta}{\absc x_0}} \log \paran{x_0 - \frac{\eta}{\absc}} + \frac{x_0^{-\eta}}{\absc} e^{-\absc x_0} \paran{\log x_0 + \frac{1}{\absc x_0}} \\ 
  &  = \frac{x_0^{-\eta}}{\absc} e^{-\absc x_0} \log \frac{x_0}{x_0 - \eta/\absc} + 
    \frac{x_0^{-\eta}}{\absc^2 x_0} e^{-\absc x_0}  \paran{
    \eta \log\paran{x_0 - \frac{\eta}{\absc}} + 1
    } \\
  &\le  \frac{x_0^{-\eta}}{\absc} e^{-\absc x_0} \frac{\eta/\absc}{x_0 - \eta/\absc} + 
    \frac{x_0^{-\eta}}{\absc^2 x_0} e^{-\absc x_0}  \paran{
    \eta x_0 - \frac{\eta^2}{\absc} -\eta + 1
    } \\
  &\le \frac{x_0^{-\eta}}{\absc} e^{-\absc x_0} \paran{\frac{2\eta}{\absc} + \frac{1}{\absc x_0}}  
  \le  3 \frac{x_0^{\half(\mcpgamma -1)}}{\absc^2} e^{-\absc x_0}
\end{align*}
where in the last step, we used the inequalities~\eqref{eqb:ld-gamma-eta-bound}
and $\eta \in [0,1]$.  Using the inequalities~\eqref{eqb:ld-gamma-eta-bound}
again and $\absc^{-\eta -1} \le 1 \vee \absc^{-2}$, it follows that
\begin{equation*}
  \dkl{P_1}{P_2} \le 3 \frac{\absc^{-\eta - 1}}{\Gamma(1-\eta)}
  x_0^{\half(\mcpgamma-1)} e^{- \absc x_0} \le 3 (1 \vee \absc^{-2})
  x_0^{\half(\mcpgamma-1)} e^{- \absc x_0}.
\end{equation*}
Since $x_0 \ge \frac{\log 3(1 \vee \absc^{-2})}{\half(1 - \mcpgamma)}$, we
have $\dkl{P_1}{P_2} \le e^{- \absc x_0}$.


\subsection{Proof of Theorem~\ref{theorem:lower-md-rate}}
\label{section:proof-of-lower-md-rate}

As in Section~\ref{section:proof-of-lower-ld-rate}, we use Le Cam's method.
Let $\rv_1 \sim P_1 = \mathsf{Exp}(\absc)$ with density
$f_1(x) = \absc e^{-\absc x} \indic{x \ge 0}$ and $\rv_2 \sim P_2$ with density
\begin{equation*}
  f_2(x) = \begin{cases}
    0 & ~~\mbox{if}~~  x < 0 \\
    \absc (1+\omega) e^{-\absc(1+\omega) x} & ~~\mbox{if}~~ 0 \le x \le x_0 \\
    \absc e^{-\absc \omega x_0} e^{-\absc x} & ~~\mbox{if}~~ x > x_0
  \end{cases}
\end{equation*}
for some $x_0, \omega > 0$ to be chosen later. First, we show that both $P_1$
and $P_2$ are in $\mc{P}_{\absc, y, \mcpgamma}$.
\begin{lemma}
  \label{lemma:md-class-inclusion}
  For $\omega \le \frac{1-\mcpgamma}{\absc y} \wedge \frac{2 - \absc y}{\absc y}$, we have
  $P_1, P_2 \in \mc{P}_{\absc,y,\mcpgamma}$.
\end{lemma}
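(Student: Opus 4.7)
The plan is to verify the three conditions of Definition~\ref{def:pclass} separately for $P_1$ and $P_2$.

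For $P_1 = \mathsf{Exp}(\absc)$, all three conditions are immediate. Condition~\ref{item:nonneg-heavy-tail} holds because $R \ge 0$ and $\int_0^\infty \absc e^{-\absc x}e^{\absc x}dx = \infty$. Condition~\ref{item:mgf-bound} holds with equality: $\E_{P_1}[e^{\lambda R}] = \absc/(\absc-\lambda)$ for $\lambda < \absc$, and $\E_{P_1}[R] = 1/\absc$. Finally, $\opttilt(P_1) = \absc - 1/y \le \absc - \mcpgamma/y$ since $\mcpgamma < 1$, giving Condition~\ref{item:argmax-bound}. So the substantive work concerns $P_2$.

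For $P_2$, Condition~\ref{item:nonneg-heavy-tail} is immediate since the density is supported on $[0,\infty)$ and proportional to $e^{-\absc x}$ on $(x_0,\infty)$, so $\E_{P_2}[e^{\absc R}] = \infty$. For Condition~\ref{item:mgf-bound}, I compute the MGF in closed form by splitting the integral at $x_0$: setting $\alpha := \absc(1+\omega)$, a direct calculation gives
\begin{equation*}
M(\lambda) \defeq \E_{P_2}[e^{\lambda R}] = \frac{\alpha}{\alpha - \lambda}\bigl(1 - e^{-(\alpha-\lambda)x_0}\bigr) + \frac{\absc}{\absc - \lambda}\, e^{-(\alpha-\lambda)x_0},\qquad 0 \le \lambda < \absc,
\end{equation*}
which is a convex combination of $\alpha/(\alpha-\lambda)$ and $\absc/(\absc-\lambda)$. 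The desired bound then reduces to the algebraic fact $\alpha/(\alpha-\lambda) \le \absc/(\absc-\lambda)$, equivalent to $\alpha \ge \absc$ for $\lambda \ge 0$. Direct integration (or differentiating the MGF inequality at $\lambda = 0$) gives $\E_{P_2}[R] = 1/\alpha + e^{-\alpha x_0}(1/\absc - 1/\alpha) \le 1/\absc$, completing Condition~\ref{item:mgf-bound}.

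The main obstacle is Condition~\ref{item:argmax-bound} for $P_2$: by convexity of $\kappa_{P_2}(\lambda) = \log M(\lambda)$, it is equivalent to $\kappa_{P_2}'(\lambda^\dagger) \ge y$ at $\lambda^\dagger := \absc - \mcpgamma/y$, i.e., to checking the first-order condition $M'(\lambda^\dagger) \ge y\, M(\lambda^\dagger)$. Setting $u := \alpha - \lambda^\dagger = \absc\omega + \mcpgamma/y$, $v := \absc - \lambda^\dagger = \mcpgamma/y$, and $p := e^{-u x_0}$, differentiating the MGF expression above yields after some algebra the decomposition
\begin{equation*}
M'(\lambda^\dagger) - y\,M(\lambda^\dagger) = (1-p)\,\frac{\alpha(1 - y u)}{u^{2}} + p\,\Bigl[\frac{\absc y^{2}(1-\mcpgamma)}{\mcpgamma^{2}} + x_0\Bigl(\frac{\absc y}{\mcpgamma} - \frac{\alpha}{u}\Bigr) - \frac{\alpha(1 - y u)}{u^{2}}\Bigr].
\end{equation*}
The hypothesis $\omega \le (1-\mcpgamma)/(\absc y)$ gives $1 - y u = 1 - \mcpgamma - \absc y \omega \ge 0$, making the $(1-p)$-coefficient nonnegative. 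For the $p$-coefficient, the factor $\absc y/\mcpgamma - \alpha/u$ simplifies to $\omega\,\absc y(\absc y - \mcpgamma)/(\mcpgamma\cdot yu)$, which is strictly positive since $\absc y > 1 > \mcpgamma$. The remaining bookkeeping is where the second hypothesis $\omega \le (2-\absc y)/(\absc y)$ enters, controlling the subtracted term $\alpha(1-yu)/u^{2}$ (via the bound $\alpha y \le 2$) so that the two positive contributions dominate. I expect this last step—carefully balancing the positive $x_0$-term and the $\mcpgamma^{-2}$-term against the subtraction—to be the most delicate calculation in the proof; the verification of the other two conditions and of Condition~\ref{item:mgf-bound} are essentially direct algebraic manipulations on the closed-form MGF.
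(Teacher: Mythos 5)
Your treatment of $P_1$ and of Conditions~\ref{item:nonneg-heavy-tail}--\ref{item:mgf-bound} for $P_2$ matches the paper's calculation (the paper writes the same closed-form MGF and rearranges it to $\frac{\absc}{\absc-\lambda}\left(1 - \frac{\omega\lambda}{\absc(1+\omega)-\lambda}(1-e^{-(\absc(1+\omega)-\lambda)x_0})\right)$, which is equivalent to your convex-combination observation). The genuine difference is in Condition~\ref{item:argmax-bound}: you check $\kappa_{P_2}'(\lambda^\dagger)\ge y$ directly at $\lambda^\dagger=\absc-\mcpgamma/y$, whereas the paper notices that the first hypothesis gives $\absc(1+\omega)-1/y\le\absc-\mcpgamma/y$ and instead checks the first-order condition at the smaller point $\absc(1+\omega)-1/y$, where $\alpha-\lambda=1/y$ and the algebra collapses considerably. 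Both routes are legitimate.

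However, the decomposition you state for $M'(\lambda^\dagger)-yM(\lambda^\dagger)$ is incorrect. Differentiating $M(\lambda)=\frac{\alpha}{\alpha-\lambda}(1-p)+\frac{\absc}{\absc-\lambda}p$ with $p=e^{-(\alpha-\lambda)x_0}$ (so $p'=x_0 p$) gives
\begin{equation*}
M'(\lambda)-yM(\lambda) = (1-p)\,\frac{\alpha(1-yu)}{u^2} + p\Bigl[\frac{\absc(1-yv)}{v^2} + x_0\Bigl(\frac{\absc}{v}-\frac{\alpha}{u}\Bigr)\Bigr],
\end{equation*}
i.e., $(1-p)A + pB$, not $(1-p)A + p(B-A)$. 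A quick sanity check confirms this: at $x_0=0$ (so $p=1$) $P_2$ degenerates to $\mathsf{Exp}(\absc)$, and the left-hand side must equal $\frac{\absc(1-yv)}{v^2}=\frac{\absc y^2(1-\mcpgamma)}{\mcpgamma^2}$; your formula gives $B-A$ which disagrees unless $A=0$. Once the decomposition is corrected the remainder is actually \emph{simpler} than you anticipated: $A\ge 0$ follows from $\omega\le(1-\mcpgamma)/(\absc y)$, and $B>0$ follows from $1-yv=1-\mcpgamma>0$ and (as you correctly simplified) $\frac{\absc}{v}-\frac{\alpha}{u}=\frac{\absc\omega(\absc y-\mcpgamma)}{\mcpgamma u}>0$, so there is no subtracted term to control and the ``delicate balancing'' you flagged does not arise. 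The second hypothesis $\omega\le(2-\absc y)/(\absc y)$ never enters your route (nor is it binding: in the regime $1<\absc y<2\mcpgamma$ with $\mcpgamma<1$ one has $\absc y<1+\mcpgamma$, so $(2-\absc y)/(\absc y)>(1-\mcpgamma)/(\absc y)$ and the first hypothesis already implies the second); the paper invokes it only because its choice of evaluation point produces an expression where that bound is convenient.
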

\noindent See Section~\ref{section:proof-of-md-class-inclusion} for the proof.
Next, we prove that $I_1$ and $I_2$ are $\Omega(\omega)$ apart. 
\begin{lemma}
  \label{lemma:md-rate-separated}
  \begin{equation*}
    |I_1 - I_2| = I_2 - I_1
    \ge \frac{\absc y -1}{\omega \absc y +1}
    (1- e^{-\left(\absc \omega + 1/y \right) x_0}) \omega.
  \end{equation*}
\end{lemma}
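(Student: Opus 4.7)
The plan is to mimic the chain of inequalities used in the proof of Lemma~\ref{lemma:rate-separated}. First, recall that for $P_1 = \mathsf{Exp}(\absc)$, the MGF example in Section~\ref{section:approach} gives $\kappa_1(\lambda) = \log\frac{\absc}{\absc - \lambda}$ for $\lambda < \absc$, and the optimal dual variable is $\lambda_1\opt \defeq \absc - \frac{1}{y}$, so that $\kappa_1(\lambda_1\opt) = \log(\absc y)$.

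Next, I would exploit the suboptimality of $\lambda_1\opt$ for the dual problem defining $I_2$. Concretely, writing $\kappa_j(\lambda) = \log \E e^{\lambda \rv_j}$ as in the large-deviations lemma,
\begin{equation*}
  I_2 - I_1
  = \sup_{\lambda} \{\lambda y - \kappa_2(\lambda)\} - (\lambda_1\opt y - \kappa_1(\lambda_1\opt))
  \ge \kappa_1(\lambda_1\opt) - \kappa_2(\lambda_1\opt)
  = \log \frac{\E e^{\lambda_1\opt \rv_1}}{\E e^{\lambda_1\opt \rv_2}}.
\end{equation*}
Applying the elementary bound $\log x \ge 1 - 1/x$, it then suffices to upper bound $\E e^{\lambda_1\opt \rv_2} / (\absc y)$ appropriately.

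The concrete computation is a direct integration. Splitting the integral at $x_0$ and using that $\absc(1+\omega) - \lambda_1\opt = \absc\omega + 1/y$ and $\absc - \lambda_1\opt = 1/y$, I obtain
\begin{equation*}
  \E e^{\lambda_1\opt \rv_2}
  = \frac{\absc(1+\omega)}{\absc\omega + 1/y}\bigl(1 - e^{-(\absc\omega + 1/y)x_0}\bigr)
  + \absc y \cdot e^{-(\absc\omega + 1/y)x_0}.
\end{equation*}
Dividing by $\E e^{\lambda_1\opt \rv_1} = \absc y$ and subtracting from $1$, an algebraic simplification (writing $E \defeq e^{-(\absc\omega + 1/y)x_0}$ and factoring the common term $\omega\absc y + 1 - (1+\omega) = \omega(\absc y - 1)$ out of both the constant and $E$-coefficient) yields exactly
\begin{equation*}
  1 - \frac{\E e^{\lambda_1\opt \rv_2}}{\absc y}
  = \frac{\omega(\absc y - 1)}{\omega\absc y + 1}\bigl(1 - e^{-(\absc\omega + 1/y)x_0}\bigr),
\end{equation*}
giving the claimed lower bound.

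There is no real technical obstacle here: the only subtle part is verifying that the dual suboptimality argument applies, which requires $\lambda_1\opt \in \dom \kappa_2$ (equivalently, $\absc\omega + 1/y > 0$, which is immediate) and that the supremum defining $I_2$ indeed exceeds the value at $\lambda_1\opt$. Compared to Lemma~\ref{lemma:rate-separated}, the MGF of $P_2$ factorizes cleanly because the density is piecewise exponential, so no delicate asymptotic estimates of incomplete Gamma integrals are needed; the whole argument collapses to a one-line integration followed by the $\log x \ge 1 - 1/x$ step and a cancellation.
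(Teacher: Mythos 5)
Your proposal is correct and follows essentially the same route as the paper: use the suboptimal dual variable $\lambda_1\opt = \absc - 1/y$ to lower-bound $I_2$, reduce to $\log \bigl(\E e^{\lambda_1\opt \rv_1} / \E e^{\lambda_1\opt \rv_2}\bigr)$, apply $\log x \ge 1 - 1/x$, and evaluate the piecewise-exponential MGF. The only cosmetic difference is that the paper reuses the precomputed MGF difference identity~\eqref{eqn:md-p2-mgf} rather than re-deriving $\E e^{\lambda_1\opt \rv_2}$ from scratch, but the algebra is identical.
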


Despite the rate functions being $\Omega(\omega)$-separated, we next show that
$P_1$ and $P_2$ are $O(\omega^2)$-close in total variation distance. From Pinsker's
inequality, we have
$\tvnorms{P_1^n - P_2^n}^2 \le \half \dkls{P_1^n}{P_2^n} = \frac{n}{2}
\dkl{P_1}{P_2}$. We have
\begin{align*}
  \dkl{P_1}{P_2}
  & = \int_0^{x_0} \absc e^{-\absc x}
    \log \frac{\absc e^{-\absc x}}{\absc (1+\omega) e^{-\absc (1+\omega) x}} dx
    + \int_{x_0}^\infty \absc e^{-\absc x}
    \log \frac{\absc e^{-\absc x}}{\absc e^{-\absc \omega x_0} e^{-\absc x}} dx \\
  & = -\absc \omega x_0 e^{-\absc x_0} + \frac{1}{\omega} (1-e^{-\absc x_0})
    - (1-e^{-\absc x_0}) \log (1+\omega) + \absc \omega x_0 e^{-\absc x_0} \\
  & = (\omega - \log (1+\omega)) (1-e^{-\absc x_0})
    \le \frac{1}{2} \omega^2 (1-e^{-\absc x_0}),
\end{align*} where we used $x - \log(1+x) \le \frac{1}{2} x^2$ for all $x \ge 0$.
Now, let $x_0 = \absc^{-1} \log n$ and $\omega = \frac{c}{\sqrt{n}}$ for
$c = 2(1-2\delta)$. Then, $\dkl{P_1}{P_2} \le \half \omega^2$ so that
$\tvnorms{P_1^n - P_2^n} \le \frac{c}{2}$. Since $x_0 \ge y \log 2$ and
$\omega > 0$, we have
$e^{-(\absc \omega + 1/y)x_0} \le \half$ and
$\frac{1}{\omega \absc y + 1} \ge \half$.  From
Lemma~\ref{lemma:md-rate-separated}, we get
$|I_1 - I_2| \ge \frac{\absc y - 1}{4} \omega$.  Using these two bounds in
Lemma~\ref{lemma:lecam}, we obtain $\mathfrak{M}_n \ge \delta$ as desired.

\subsubsection{Proof of Lemma~\ref{lemma:md-class-inclusion}}
\label{section:proof-of-md-class-inclusion}
Evidently, $P_1 \in \mc{P}_{\absc,y,\mcpgamma}$ since
$\E e^{\lambda \rv_1} = \expmgf, \E[\rv_1] = \frac{1}{\absc}$, and $\lambda\opt(P_1) = \absc - 1/y$.   To show $P_2 \in \mc{P}_{\absc, y, \mcpgamma}$, we first note that $P_2$ clearly satisfies Condition~\ref{item:nonneg-heavy-tail}. Then we check
Condition~\ref{item:mgf-bound}, which follows from
\begin{align}
  \E e^{\lambda \rv_2}
  & = \frac{\absc(1+\omega)}{\absc(1+\omega) - \lambda}
  (1-e^{-(\absc(1+\omega)-\lambda) x_0}) + \frac{\absc}{\absc - \lambda}
  e^{-(\absc(1+\omega)-\lambda)x_0} \nonumber \\
  & = \frac{\absc}{\absc - \lambda}
    \left(1 - \frac{\omega \lambda}{\absc(1+\omega) - \lambda}
    (1-e^{-(\absc(1+\omega) - \lambda)x_0})\right) \le \expmgf,
    \label{eqn:md-p2-mgf}
\end{align}
and
\begin{align*}
   \E[\rv_2] = \int_0^{x_0}  \absc(1+\omega)x e^{-\absc(1+\omega) x} dx  + \int_{x_0}^\infty \absc x e^{-\absc \omega x_0} e^{-\absc x} dx =  \frac{1}{\absc(1+\omega)}(1+\omega e^{-\absc(1+\omega) x_0}) \le \frac{1}{\absc}.
\end{align*}
It remains to show Condition~\ref{item:argmax-bound}. Since
$\omega \le \frac{1 - \mcpgamma}{\absc y}$,
$\absc(1+\omega) - \frac{1}{y} \le \absc - \frac{\mcpgamma}{y} $,  it
suffices to show $\lambda\opt(P_2) \le \absc(1+\omega) - 1 / y$.
Similar to inequality~\eqref{eqn:first-order-optimality}, we need to show
$\E[\rv_2 e^{\lambda \rv_2}] \ge y \E[e^{\lambda \rv_2}]$ when
$\lambda = \absc (1+\omega) - \frac{1}{y}$.  Plugging 
$\lambda = \absc (1+\omega) - \frac{1}{y}$ into the preceding
display~\eqref{eqn:md-p2-mgf}, we obtain
\begin{align*}
  \E[e^{\left( \absc (1+\omega) - \frac{1}{y} \right) \rv_2}]
  = \absc(1+\omega)y\left(1  -e^{-x_0/y}\right)
  + \frac{\absc}{\frac{1}{y} - \omega\absc} e^{-x_0/y}.
\end{align*}
Next, integration by parts gives
\begin{align*}
  \E[\rv_2 e^{\lambda \rv_2}]
  & = \int_0^{x_0} \absc (1+\omega) x e^{-(\absc (1+\omega) - \lambda) x} dx
    + \int_{x_0}^\infty \absc e^{-\absc \omega x_0} x e^{- (\absc - \lambda) x} dx \\
  & = \frac{\absc(1+\omega)}{\absc(1+\omega) - \lambda}
    \left(-x_0 e^{-(\absc(1+\omega) - \lambda) x_0}
    + \frac{1}{\absc(1+\omega) - \lambda}
    (1-  e^{-(\absc(1+\omega) - \lambda) x_0})\right) \\
  & \qquad + \frac{\absc}{\absc - \lambda} e^{-\absc \omega x_0}
    \left(
    x_0 e^{-(\absc - \lambda)x_0} + \frac{1}{\absc - \lambda} e^{-(\absc - \lambda)x_0}
    \right).
\end{align*}
Plugging in $\lambda = \absc (1+\omega) - \frac{1}{y}$ again, we have
\begin{align*} 
  \E[\rv_2e^{\left(\absc (1+\omega) - \frac{1}{y} \right)\rv_2}] 
  & =  \absc(1+\omega)y 
    \left(-x_0 e^{-x_0/y}
    + y
    (1-  e^{-x_0/y})\right) \\
  & \qquad + \frac{\absc}{\frac{1}{y} - \omega\absc} e^{-\absc \omega x_0}
    \left(
    x_0 e^{-(\frac{1}{y} - \omega\absc) x_0} + \frac{1}{\frac{1}{y} - \omega\absc} e^{-(\frac{1}{y} - \omega\absc) x_0}
    \right).
\end{align*}
Taking differences, we get 
\begin{align*}
  & \E[\rv_2e^{\left(\absc (1+\omega) - \frac{1}{y} \right)\rv_2}]
  - y \E[e^{\left(\absc (1+\omega) - \frac{1}{y} \right)\rv_2}] \\
  &= \frac{e^{-x_0/y} y \omega \absc}{(1-\absc y \omega)^2}\left[
  x_0 \Big(-1 - y^2\omega(1+\omega) \absc^2 + y\absc (1+2\omega) \Big) + \absc y^2
  \right]
  \\ 
  & \stackrel{(a)}{\ge} \frac{e^{-x_0/y} y \omega \absc}{(1-\absc y \omega)^2}\left[
  x_0 \Big(-1 - 2\omega \absc y + y\absc (1+2\omega) \Big) + \absc y^2  
  \right] \\
   & \stackrel{(b)}{\ge} \frac{e^{-x_0/y} y \omega \absc}{(1-\absc y \omega)^2}   \absc y^2   \ge 0
\end{align*}
where we used $
1 + \omega \le \frac{2}{\absc y}
$ in step $(a)$ and $\absc y \ge 1$ in step $(b)$. We conclude that
$P_2$ satisfies Condition~\ref{item:argmax-bound}.

\subsubsection{Proof of Lemma~\ref{lemma:md-rate-separated}}
\label{section:proof-of-md-rate-separated}
Plugging  $\E[e^{\lambda \rv_1}] = \frac{\absc}{\absc - \lambda}$ into the
expression for $I_1$, we get $\lambda\opt(P_1) = \absc - \frac{1}{y}$ and
\begin{equation*}
  I_1 = \sup_{\lambda} \left\{ \lambda y - \kappa_1(\lambda) \right\}
  = y (\absc - 1/y) - \kappa_1(\absc - 1 / y).
\end{equation*}
Then, we have the following lower bound on the separation:
\begin{align}
  |I_1 - I_2|
  & \ge I_2 - I_1
    = \sup_{\lambda} \left\{ \lambda y - \kappa_2(\lambda) \right\}
    -  \sup_{\lambda} \left\{ \lambda y - \kappa_1(\lambda) \right\}
  \nonumber \\
  & \ge y (\absc - 1/y) - \kappa_2(\absc - 1 / y)
    - (y (\absc - 1/y) - \kappa_1(\absc - 1 / y)) \nonumber \\
  & = \kappa_1(\absc - 1 / y) - \kappa_2(\absc - 1 /y)
    = \log \frac{\E e^{(\absc - 1/y) \rv_1}}{\E e^{(\absc - 1/y) \rv_2}} \nonumber \\
  & \ge \frac{\E e^{(\absc - 1/y) \rv_1} - \E e^{(\absc - 1/y) \rv_2}}{\E e^{(\absc - 1/y) \rv_1}}
    = \frac{1}{\absc y}
    \left( \E [e^{(\absc - 1/y) \rv_1}] - \E e^{(\absc - 1/y) \rv_2} \right)
    \label{eqn:md-sep-lower-bound}
\end{align}
where we have used $\log x \ge 1 - \frac{1}{x}$ in the last inequality.  Using
previous calculations~\eqref{eqn:md-p2-mgf},
\begin{equation*}
  \E[e^{\lambda \rv_1}] - \E[e^{\lambda \rv_2}]
  = \frac{\absc \omega \lambda}{(\absc - \lambda) (\absc (1+\omega) - \lambda)}
  (1-e^{-(\absc(1+\omega) - \lambda)x_0}).
\end{equation*}
Plugging in $\lambda = \absc - 1/y$, we obtain
\begin{equation*}
  \E [e^{(\absc - 1/y) \rv_1}] - \E[e^{(\absc - 1/y) \rv_2}]
  = \absc y \frac{\absc y - 1}{\omega \absc y +1}
  (1-e^{-(\absc \omega + 1/y)x_0}) \omega.
\end{equation*}
Plugging this in~\eqref{eqn:md-sep-lower-bound}, we get the desired result.

\highlight{
\subsection{Proof of Corollary~\ref{cor:gaussian-lb}}
\label{section:proof-gaussian-lb}

Let $P_1 = \mathsf{N}(y - \bar{\lambda},1)$ and $P_2 = \mathsf{N}(y - \bar{\lambda} + \frac{c}{\sqrt{n}}, 1 )$ with $c = 2(1-2\delta)$.
From Example~\ref{example:Gaussian},
we have
\begin{align*}
\opttilt_y(P_1) & = \bar{\lambda} \\
\opttilt_y(P_2) & = \bar{\lambda}- \frac{c}{\sqrt{n}}  <  \bar{\lambda},
\end{align*} so that 
$P_1,P_2 \in \mc{Q}_{\bar{\lambda}, y}$.  \\
It is straightforward to verify that 
\begin{align*}
    \dkl{P_1}{P_2}  = \frac{c^2}{2n},
\end{align*}
and 
\begin{align*}
    I_1 & = \frac{1}{2} (\bar{\lambda})^2, \\ 
    I_2 &= \frac{1}{2} \paran{\bar{\lambda}- \frac{c}{\sqrt{n}} }^2, \\ 
    |I_1 - I_2| &=  
    \frac{c}{\sqrt{n}} \bar{\lambda}- \half \frac{c^2}{n} 
    \ge \frac{c}{2\sqrt{n}} \bar{\lambda},
\end{align*} where the last inequality follows from the assumption $n \ge \paran{\frac{c}{\bar{\lambda}}}^2$ so that $\frac{c^2}{n}  \le    \frac{c}{\sqrt{n}} \bar{\lambda}$.
 From Pinsker's inequality, we have
$\tvnorms{P_1^n - P_2^n}^2 \le \half \dkls{P_1^n}{P_2^n} = \frac{n}{2}
\dkl{P_1}{P_2} = \frac{c^2}{4} = (1-2\delta)^2$. Applying
Lemma~\ref{lemma:lecam}, we obtain $\mathfrak{M}_n \ge \delta$  as claimed.

}


 \section{Experimental details}

\subsection{Details of the DQN model for queueing} \label{section:detail-DQN}

The DQN model is implemented in OpenAI Gym~\cite{BrockmanChPeScScTaZa16}, a toolkit for benchmarking reinforcement learning algorithms, 
to build our discrete-event simulator. 
Using our custom  simulator,
the queueing process can be modeled as a Markov Decision Process $\mathcal{M}$  as follows.
\begin{itemize} 
\setlength{\itemsep}{1pt}
\setlength{\parskip}{0pt}
\setlength{\parsep}{0pt} 
\item Time steps $t$ are  time points when the scheduler needs to make decisions.
\item  The state $s_t$ is  history information up to step $t$. As we allow non-exponential arrival distributions, we need all history information as the 
state. Even though  one can use a recurrent neural network~\cite{HausknechtSt15} to parametrize the $Q$ function and use $s_t$ directly,
here we opt  for a simpler approach similar to~\cite{MnihKaSiGrAnWiRi13}: use a preprocessing function $\Phi(\cdot)$ to compress state 
information  into a low-dimensional vector $\phi_t$ and update $Q$ through $\phi_t$ instead of  $s_t$. 
Based on domain knowledge of queueing systems, we choose $\phi_t$ as 
a vector concatenating queue length and age of the oldest job for each type of job at step $t$. Thus, $\phi_t \in \mathbb{N}^3 \times \R_+^3$.
Using $x_t$ to denote information between step $t-1$ and step $t$, we have $s_t=s_{t-1},x_t$. This is consistent with notations in~\cite{MnihKaSiGrAnWiRi13}.
\item  The action $a_t \in \set{1,2,3}$ represents the queue that the server chooses to serve.  
\item  We assume initially there is one job in each queue, so the system needs to make a decision initially.
\item  The transition probability is specified by the simulator.
\item  The reward function is $r_a(\phi_t,\phi_{t+1}) =  - (1,3,6)^\intercal \cdot \text{age}(\phi_{t+1})
 - 1,000,000 \cdot \\ \mathbb{I}(\text{if the queue for type $a$ is empty at step } t)$   
where $\text{age}(\phi_{t+1})$ is the age part of vector $\phi_{t+1}$.   
The large cost associated with the selection of empty queues is introduced to dissuade the model from making such suboptimal decisions.
We  choose this reward function based on experimental results and inspired by~\cite{LiuXiMo19}.
\item To deal with the unboundedness of the state space, we implement   policy $\pi_0$: FIFO when one of the queue lengths is larger 
than the threshold $G = 10$. In the algorithm, we use $g(\phi_t)$ to represent the largest queue length at step $t$.
\item We apply a pure exploration strategy and do not update the $Q$ function in the first half of episodes; this is observed to help 
stabilize training in our experiment.
\item We also use experience replay~\cite{Lin92}, a buffer with fixed size storing most recent transitions and rewards information, 
which is a central tool in deep reinforcement learning to improve sample efficiency.
\end{itemize}

To approximate the action-value function $Q$, we use Pytorch~\cite{PaszkeGrMaLeJa19} to 
construct a fully connected feedforward neural network with two hidden layers 
with ReLU activation functions.
The numbers of units for each layer are $6,4,4,3$, respectively. Here, we use $\theta$ to denote the weights and biases of the neural network.
The parameters $\theta$ are initialized by   standard Kaiming initialization~\cite{HeZhReSu15}.
More details of our method are given in Algorithm~\ref{algorithm:DQN}, and   hyperparameters are summarized in Table~\ref{table:hyperparameters-DQN}. 
 
\begin{algorithm} 
\caption{Deep Q-learning with Experience Replay for Queueing Systems Control} \label{algorithm:DQN}
\begin{algorithmic}[1] 
\State Initialize replay memory $\mathcal{D}$ to an empty double-ended queue with size $N$ 
\State  Initialize the action-value function $Q$ 
\For{episode = 1 to $M$}
\State Initialize $s_1=x_1$, and $\phi_1 = \Phi(s_1)$ 
\For{$t$ = 1 to $T$}
\State Sample $U \sim \text{Uniform}[0,1]$ 
\If{$U \le \epsilon$ or episode $ \le M/2$}
\State Randomly select $a_t$
\ElsIf{$g(\phi_t) \geq G$} 
\State Select $a_t$ based on   $\pi_0$ 
\Else 
\State Select $a_t = \argmax_a Q(\phi_t, a; \theta)$
\EndIf
\State Execute action $a_t$, observe $x_{t+1}$
\State Update $s_{t+1} = s_t,  x_{t+1}$ and $\phi_{t+1} = \Phi(s_{t+1})$
\State Compute $r_t = r_{a_t}(\phi_t, \phi_{t+1})$
\State Append $(\phi_t,a_t,r_t,\phi_{t+1})$ to $\mathcal{D}$
\If{episode $ > M/2$} 
\State Sample a batch $\mathcal{B}$ with size $B$ from $\mathcal{D}$
\State For every $j \in \mathcal{B}$, set $q_j = r_j + \gamma \max_a Q(\phi_{j+1},a;\theta)$
\State Use the   optimizer on $\sum_{j \in \mathcal{B}}\ell(q_j; Q(\phi_j, a_j; \theta))$ to update $\theta$
\EndIf
\EndFor
\EndFor 
\end{algorithmic}
\end{algorithm}

\begin{table}[H]
    \centering
\begin{tabular}{ll} 
\hline
Hyperparameter                     & Value                                       \\ \hline
Number of episodes  $M$            & 40                                          \\
Number of steps per episode $T$             & 30,000                       \\
Replay memory capacity $N$    & 1,000                                       \\
Replay memory sample size $B$ & 100                                         \\
Exploration probability $\epsilon$ & 0.05 \\
Discounting factor $\gamma$   & 0.9                                         \\
Loss function $\ell(\cdot;\cdot)$          & Huber loss with parameter $\delta = 1$      \\
Optimizer                     &    \begin{tabular}{@{}l@{}} Adam optimizer~\cite{KingmaBa15} with $(\beta_1,\beta_2) = (0.9,0.999)$,
\\ learning rate $0.01$, $\epsilon = 10^{-8}$, and weight decay $0.001$  \end{tabular}
\\ 
\hline
\end{tabular} \captionof{table}{Hyperparameters for DQN}  \label{table:hyperparameters-DQN}
\end{table}

\subsection{Details of the Health utilization prediction models}
\label{section:nhis-table}

 \begin{table}[H]
\centering
\begin{tabular}{ll}
\hline
Variable                   & Meaning                                                 \\ \hline
sex\_female                & If the individual is female or not                      \\  
educ\_yrs                  & Years of education                                      \\  
health\_status\_excellent  & If the health status is excellent                       \\  
wrkhrs                     & Working hours per week                                  \\  
overnight\_hospital\_times & Times of visiting hospital overnight                    \\ 
care\_athome\_2wks\_times  & Number of home visits by health professionals in 2 weeks \\  
looking\_for\_work         & If the individual is looking for work                   \\  
 not\_work\_health              &            Main reason for not working last week: due to health         \\  
 care\_10more\_12mo              & Received care 10+ times in the last 12 months                      \\   
 wrk\_mo\_lastyr & Months worked last year      \\
 limited\_any\_lt5           & Any limitation - all persons, all conditions                  \\   
 overnight\_hospital\_nights & Nights of visiting hospital overnight  \\ 
 hikind\_nocov &  No coverage of any type  \\ 
 care\_spent\_zero & Amount family spent for medical care     \\ 
hino\_months & Months without coverage in the past 12 months       \\ 
 health\_status\_fair  & Reported health status as fair                      \\ \hline
\end{tabular}
\captionof{table}{Core variables}  \label{table:NHIS-core-variable-new}
\end{table}

Since typical ML training algorithms output classification probabilities
(estimates of $\P(Y = 1 | X)$) that are poorly calibrated~\cite{HastieTiFr09},
we explicitly adjust these estimators using a constant finetuned on a small
held-out data, a standard post-processing process known as ``model
calibration''~\cite{Niculescu-MizilCa05}.  In Algorithm~\ref{algorithm:nhis},
we construct the partition $S_1, \ldots, S_4$ by randomly splitting the entire
2015 data into 80\%, 10\%, 5\%, 5\% subsets respectively. For simplicity, we
always use a random forest classifier for the auxiliary model class $\mc{H}$.
Since all classifiers are trained on $S_1$, we use $S_2 \cup S_3 \cup S_4$ to
evaluate the average accuracy of each model.
\begin{algorithm} 
\caption{Estimate stability for prediction models} \label{algorithm:nhis}
\begin{algorithmic}[1] 
  \State \textsc{Input:} Partition of samples $S_1,S_2,S_3,S_4$, prediction
  model class $\mathcal{F}$, auxiliary class $\mathcal{H}$
  \State Fit a classifier $f(X) \in \mathcal{F}$ to predict $Y$ using data
  $S_1$.
  \State Calibrate $f$ using data $S_2$.
  \State Fit 
  $h(Z) \in \mathcal{H}$ to predict $\ell(f(X);Y)$ based on  core variables $Z$ using data $S_3$.
  \State Compute the plug-in stability estimator $\what{I}_n$ of $h(Z)$ over
  data in $S_4$.
\end{algorithmic}
\end{algorithm}

\begin{table}[H]
\centering
\begin{tabular}{ll}
\hline
Variable                   & Meaning                                                 \\ \hline
sex\_female                & If the individual is female or not                      \\  
medicaid\_                 & If uses Medicaid or not                                 \\  
educ\_yrs                  & Years of education                                      \\  
health\_status\_excellent  & If the health status is excellent                       \\  
wrkhrs                     & Working hours per week                                  \\  
overnight\_hospital\_times & Times of visiting hospital overnight                    \\ 
care\_athome\_2wks\_times  & Number of home visits by health professionals in 2 weeks \\  
looking\_for\_work         & If the individual is looking for work                   \\  
not\_work\_retired         & If the individual retired                               \\ \hline
\end{tabular}
\captionof{table}{New core variables}  \label{table:NHIS-core-variable}
\end{table}

\highlight{
To compare the stability measure with other risk measures, we apply CVaR on the cross-entropy loss of three models.
In Figure~\ref{fig:nhis-cvar-plot}, CVaR of the cross-entropy loss 
predicts that LightGBM is much more stable than Logistic Regression, 
which is quite the opposite of what we see in Section~\ref{section:nhis}.
Similarly, in Figure~\ref{fig:nhis-worst-case-expectation-plot}, 
we estimate the worst-case expecation using~\eqref{eqn:worst-case-exp-kl-duality} and we observe 
that this metric applying on the cross-entropy loss does not predict the stability ranking well.
In addition, in Figures~\ref{fig:nhis-cvar-cond-risk-plot} and~\ref{fig:nhis-worst-case-expectation-cond-risk-plot}, we estimate
the worst-case expectation and CVaR for the conditional risk~\eqref{eqn:cond-risk}.
We observe that when applying on the conditional risk, traditional risk measures also 
successfully predict the robustness of each model.
This again underscores the significance of of choosing a proper performance measure $\rv$.

\begin{figure}
\centering
\begin{minipage}{.44\textwidth}
  \centering 
\includegraphics[height=0.18\textheight]{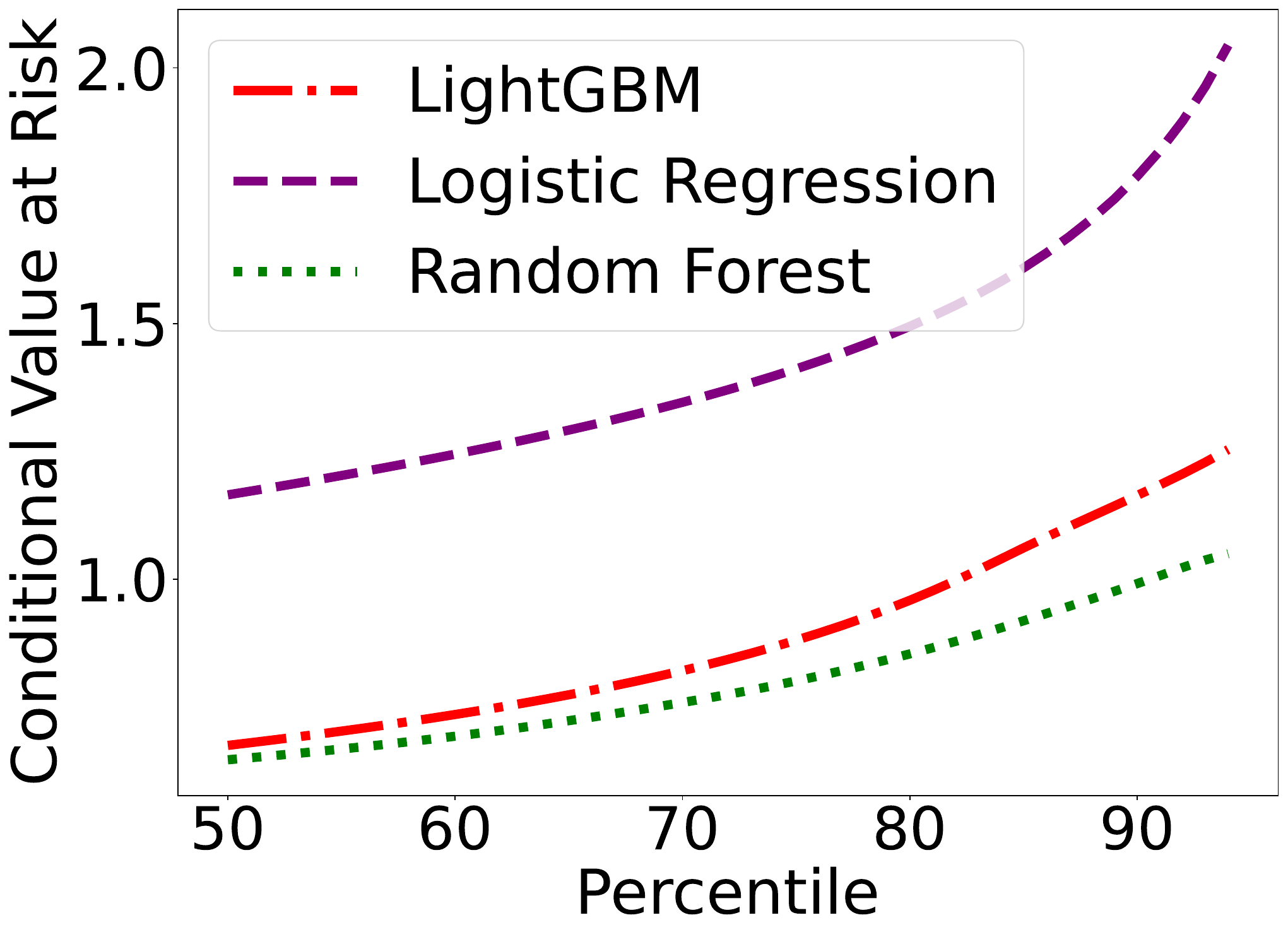}
\caption{CVaR with quantile level $\alpha$ for the cross-entropy loss for different models}  
\label{fig:nhis-cvar-plot}
\end{minipage}%
\begin{minipage}{.44\textwidth}
  \centering 
\includegraphics[height=0.18\textheight]{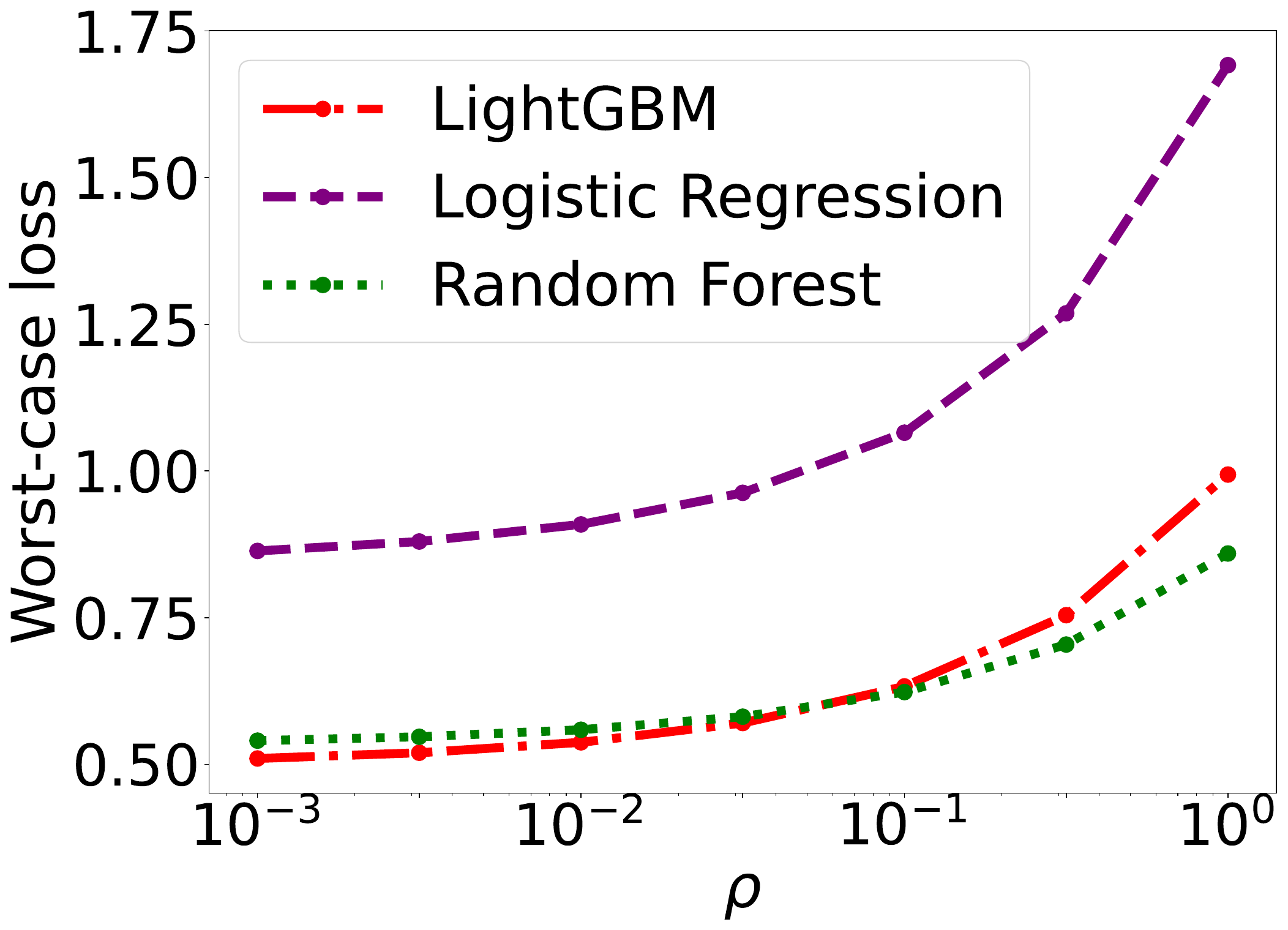}
\caption{Worst-case expectation over a KL neighborhood of radius $\rho$ for the cross-entropy loss for different models}  
\label{fig:nhis-worst-case-expectation-plot}
\end{minipage}%
\end{figure}

\begin{figure}
\centering
\begin{minipage}{.44\textwidth}
  \centering 
\includegraphics[height=0.18\textheight]{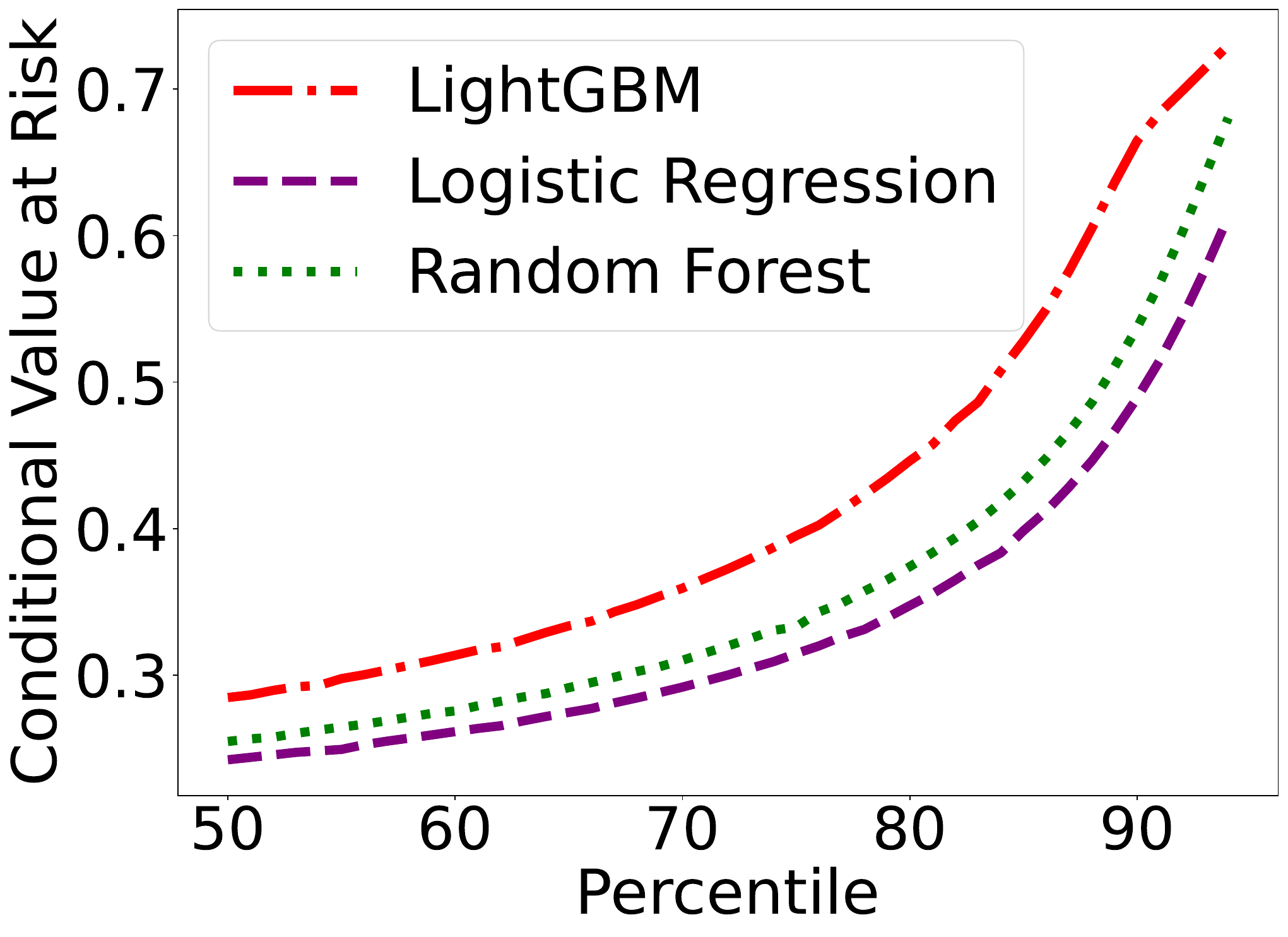}
\caption{CVaR with quantile level $\alpha$ for the conditional loss~\eqref{eqn:cond-risk} for different models}  
\label{fig:nhis-cvar-cond-risk-plot}
\end{minipage}%
\begin{minipage}{.44\textwidth}
  \centering 
\includegraphics[height=0.18\textheight]{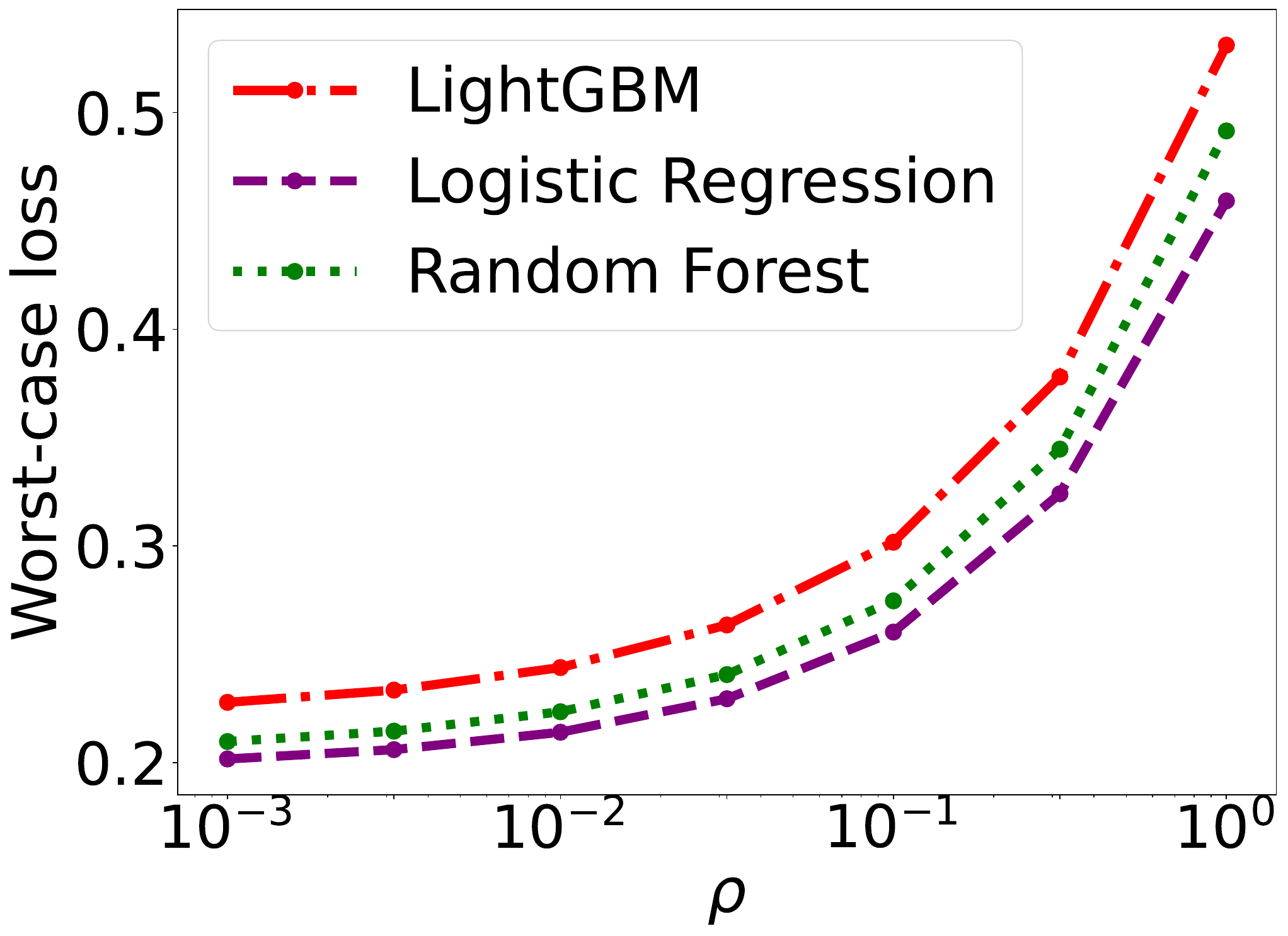}
\caption{Worst-case expectation over a KL neighborhood of radius $\rho$ for the conditional loss~\eqref{eqn:cond-risk} for different models}  
\label{fig:nhis-worst-case-expectation-cond-risk-plot}
\end{minipage}%
\end{figure}
}
 

\end{document}